\documentclass{article}

\usepackage[preprint]{neurips_2026}

\usepackage[utf8]{inputenc} 
\usepackage[T1]{fontenc}    
\usepackage{hyperref}       
\usepackage{url}            
\usepackage{booktabs}       
\usepackage{amsfonts}       
\usepackage{nicefrac}       
\usepackage{microtype}      
\usepackage{xcolor}         

\usepackage{amsmath}
\usepackage{amssymb}
\usepackage{mathtools}
\usepackage{amsthm}
\usepackage{mathabx}
\usepackage{bbm}
\usepackage{mathrsfs}
\usepackage{blkarray}
\usepackage{bm}
\usepackage{cancel}

\usepackage{abbreviation_definitions}

\usepackage{enumerate}
\usepackage{enumitem}

\usepackage{caption}
\usepackage{graphicx}
\usepackage{subcaption}
\usepackage{multirow}
\usepackage{wrapfig}

\usepackage{graphicx}
\usepackage{mwe}

\usepackage{tikz}
\usetikzlibrary{shapes,arrows,positioning,fit,decorations.pathreplacing,calc}

\usepackage{todonotes}

\usepackage{algorithm}
\usepackage{algpseudocode}

\usepackage[toc,page,header]{appendix}
\usepackage{minitoc}

\title{Nonstationary Generalized Linear Bandits \\with Discounted Online Mirror Descent}

\author{%
  Joongkyu Lee \\
  Seoul National University\\
  \texttt{jklee0717@snu.ac.kr} \\
  \And
  Min-hwan Oh \\
  Seoul National University\\
  \texttt{minoh@snu.ac.kr} \\
}

\doparttoc 
\faketableofcontents 

\begin{document}

\maketitle

\begin{abstract}
We study nonstationary generalized linear bandits (GLBs), where the expected reward is modeled through a nonlinear link function with an unknown \textit{time-varying} parameter. 
This framework encompasses a broad class of reward models, including linear, Bernoulli, and binomial rewards.
Existing approaches are predominantly based on maximum-likelihood estimation (MLE), using sliding-window, restart, or discounting mechanisms to handle nonstationarity.
Although these methods achieve statistically efficient regret guarantees, they generally require revisiting past observations at every round, which leads to computation and memory costs that grow with time; moreover, several of them rely on a non-convex projection step.
In this paper, we propose \texttt{DOMD-GLB}, a new algorithm for nonstationary GLBs that utilizes \textit{discounted online mirror descent (DOMD)} for parameter estimation, thereby
incurring only $\mathcal{O}(1)$ computation and memory costs per round.
We prove dynamic regret bounds of order $\tilde{\mathcal{O}} \big(c_\mu^{-1/2} d^{3/4} P_T^{1/4} T^{3/4}\big)$ in drifting environments
and $\tilde{\mathcal{O}}\big(c_\mu^{-1/3} d^{2/3} \Gamma_T^{1/3} T^{2/3}\big) $in piecewise-stationary environments, 
where $d$ denotes the feature dimension, $T$ the time horizon, $P_T$ the path length, $\Gamma_T$ the number of change points, and $c_\mu$ a curvature parameter associated with the link function,
while substantially improving computational efficiency over prior work.
To the best of our knowledge, this is the first algorithm for nonstationary GLBs with per-round computation and memory costs independent of time.
\end{abstract}

\section{Introduction}
\label{sec:Introduction}
Nonstationary parametric bandits~\citep{cheung2019learning,russac2019weighted,zhao2020simple,russac2020algorithms,faury2021regret,russac2021self,wei2021non,deng2022weighted,liu2023nonstationary,wang2023revisiting,hong2023optimization,jia2023smooth,wang2024variance,li2025constrained} model sequential decision-making problems in which the reward distribution is governed by an unknown \emph{time-varying} parameter.
This framework has attracted substantial attention in recent years, due to its relevance to a wide range of online applications in evolving environments, including recommendation systems~\citep{tomkins2008intelligentpooling,huleihel2021learning}, dynamic pricing~\citep{zhao2023high}, and related settings with changing user preferences or market conditions.

In this paper, we study nonstationary generalized linear bandits (GLBs)~\citep{russac2020algorithms,faury2021regret,russac2021self,wei2021non,wang2023revisiting}, in which the expected reward satisfies
$\EE[r_t \mid x_t] = \mu(x_t^\top \theta_t^\star).$
Here, \(x_t \in \RR^d\) is a contextual feature vector, \(\mu : \RR \to \RR\) is a known nonlinear link function determined by the underlying generalized linear model, and
\(\theta_t^\star \in \RR^d\) is an unknown \textit{time-varying} parameter.
To quantify nonstationarity, 
there are two typical non-stationarity measures: (i) the path length
$P_T \!:= \sum_{t=1}^{T-1} \|\theta_{t+1}^\star - \theta_t^\star\|_2$
in drifting environments, 
and (ii) the number of change points $\Gamma_T \!:= \sum_{t=1}^{T-1} \!\mathbbm{1}\{\theta_{t+1}^\star \neq \theta_t^\star \}$ in piecewise-stationary environments.

\begin{table}[t]
\caption{
Comparison of dynamic regret bounds for nonstationary GLM bandits under drifting and piecewise-stationary environments.
Here, \(d\) is the feature dimension, \(1/c_{\mu}\) measures the degree of nonlinearity, and \(P_T\) and \(\Gamma_T\) quantify nonstationarity in the drifting and piecewise-stationary settings, respectively.
For computational cost (``Comput.'') and memory cost, we report only the dependence on the time step \(t\).
``No poly-time guar.'' means that no polynomial-time guarantee is available, and ``SC'' denotes self-concordant.
For \texttt{MASTER}~\citep{wei2021non}, the non-polynomial computation cost comes from its base stationary GLM bandit subroutine (\texttt{GLM-UCB},~\citealt{filippi2010parametric}).
}
\resizebox{\textwidth}{!}{
\begin{tabular}{clccccc}
\toprule
\multicolumn{1}{l}{}         &   Paper       & Algorithm Type                &  Assumption & Regret & Comput. per Round  & Memory
\\
\midrule
\multirow{11}{*}{\begin{tabular}[c]{@{}c@{}} Drifting \end{tabular}} 
    & \shortstack[l]{\citet{zhao2020simple}\\ \textcolor{gray}{\texttt{(RestartGLB)}}}
    & Restart + MLE
    &  Sub-Gaussian Noise  
    &  $\BigOTilde\!\left(
    c_\mu^{-1}
    d^{2/3} P_T^{1/3}T^{2/3}
    \right)$ 
    &   \textcolor{red}{$\BigO(T^{2/3})$} 
    &   \textcolor{red}{$\BigO(T^{2/3})$} 
    \\
    & \shortstack[l]{\citet{faury2021regret}\\ \textcolor{gray}{\texttt{(BVD-GLM-UCB)}}}
    & Discount + MLE
    &  Bounded Reward   
    &  $\BigOTilde\!\left(
    c_\mu^{-1}
    d^{9/10} P_T^{1/5}T^{4/5}
    \right)$ 
    &   \textcolor{red}{No poly-time guar.}
    &   \textcolor{red}{$\BigO(t)$} 
    \\
    & \shortstack[l]{\citet{wei2021non}\\ \textcolor{gray}{\texttt{(MASTER \!\!+\!\! GLM-UCB)}}}
    & Restart + MLE
    &  Bounded Reward   
    &  $\BigOTilde\!\left(
    c_\mu^{-1}
    d P_T^{1/3}T^{2/3}
    \right)$ 
    &   \textcolor{red}{No poly-time guar.} 
    &   \textcolor{red}{$\BigO(t\log t)$}
    \\
    & \shortstack[l]{\citet{wang2023revisiting}\\ \textcolor{gray}{\texttt{(GLB-WeightUCB)}}}
    & Discount + MLE
    &  Sub-Gaussian Noise 
    & $\BigOTilde\!\left(
     c_\mu^{-3/4}
     d^{3/4}P_T^{1/4}T^{3/4}
     \right)$ 
    &   \textcolor{red}{No poly-time guar.}
    &   \textcolor{red}{$\BigO(t)$} 
    \\
    & \shortstack[l]{\citet{wang2023revisiting}\\ \textcolor{gray}{\texttt{(SCB-WeightUCB)}}}
    &  Discount + MLE
    &  Bounded Reward + SC 
    & $\BigOTilde\!\left(
     c_\mu^{-1/2}
     d^{3/4}P_T^{1/4}T^{3/4}
     \right)$ 
    &   \textcolor{red}{No poly-time guar.}
    &   \textcolor{red}{$\BigO(t)$} 
    \\
    & \shortstack[l]{\textbf{This Work}\\ \textcolor{gray}{(\AlgName{}, Theorem~\ref{thm:regret})}}
    &  Discount + \textcolor{blue}{OMD}
    &  Bounded Reward 
    &  $\BigOTilde\!\left(
    c_\mu^{-1/2}
    d^{3/4}P_T^{1/4}T^{3/4}
    \right)$ 
    &   \textcolor{blue}{$\BigO(1)$}   
    &   \textcolor{blue}{$\BigO(1)$} 
    \\
\midrule
\multirow{9}{*}{\begin{tabular}[c]{@{}c@{}}Piecewise \\Stationary\end{tabular}}
    & \shortstack[l]{\citet{russac2020algorithms}\\ \textcolor{gray}{\texttt{(D-GLUCB)}}}
    &  Discount + MLE
    &  Bounded Reward  
    &  $\BigOTilde\!\left(
    c_\mu^{-1}
    d^{2/3} \Gamma_T^{1/3}T^{2/3}
    \right)$ 
    &   \textcolor{red}{No poly-time guar.}
    &   \textcolor{red}{$\BigO(t)$} 
    \\
    & \shortstack[l]{\citet{wei2021non}\\ \textcolor{gray}{\texttt{(MASTER \!\!+\!\! GLM-UCB)}}}
    & Restart + MLE
    &  Bounded Reward   
    &  $\BigOTilde\!\left(
    c_\mu^{-1}
    d  \sqrt{\Gamma_T T}
    \right)$ 
    &   \textcolor{red}{No poly-time guar.} 
    &   \textcolor{red}{$\BigO(t\log t)$}
    \\
    & \shortstack[l]{\citet{russac2021self}\\ \textcolor{gray}{\texttt{(SC-D-GLUCB)}}}
    &  Discount + MLE
    &  Bounded Reward + SC   
    &  $\BigOTilde\!\left(
    c_\mu^{-1/3}
    d^{2/3} \Gamma_T^{1/3}T^{2/3}
    \right)$ 
    &   \textcolor{red}{$\BigO(t)$} 
    &   \textcolor{red}{$\BigO(t)$}
    \\
    & \shortstack[l]{\citet{wang2023revisiting}\\ \textcolor{gray}{\texttt{(SCB-PW-WeightUCB)}}}
    &  Discount + MLE
    &  Bounded Reward + SC   
    &  $\BigOTilde\!\left(
    d^{2/3} \Gamma_T^{1/3}T^{2/3}
    \right)$ 
    &   \textcolor{red}{No poly-time guar.}
    &   \textcolor{red}{$\BigO(t)$} 
    \\
    & \shortstack[l]{\textbf{This Work}\\ \textcolor{gray}{(\AlgName{}, Theorem~\ref{thm:regret_piecewise})}}
    &  Discount + \textcolor{blue}{OMD}
    &  Bounded Reward   
    &  $\BigOTilde\!\left(
    c_\mu^{-1/3}
    d^{2/3} \Gamma_T^{1/3}T^{2/3}
    \right)$ 
    &   \textcolor{blue}{$\BigO(1)$} 
    &   \textcolor{blue}{$\BigO(1)$} \\
\bottomrule
\end{tabular}
}
\label{tab:regrets}
\end{table}

To handle nonstationarity, all existing GLB methods rely on MLE-based approaches, including restart~\citep{zhao2020simple,wei2021non} and weighted (or discounted)~\citep{russac2020algorithms,faury2021regret,russac2021self,wang2023revisiting} strategies (see Table~\ref{tab:regrets}).
Consequently, each round requires revisiting past observations, so the per-round computation and memory costs of these methods typically grow with \(t\), or, for restart-based approaches, with the restart period, which is typically tuned to be of order \(\BigO(T^{2/3})\)~\citep{zhao2020simple}.
Moreover, several prior works~\citep{faury2021regret,russac2020algorithms,wang2023revisiting} rely on a \textit{non-convex projection step} of the form \(\widetilde{\theta}_t \in \argmin_{\theta}\bigl\| \sum_s (\mu(x_s^\top \theta)-\mu(x_s^\top \widehat{\theta}_t))x_s \bigr\|_{V_t^{-1}}\), where \(V_t \in \RR^{d\times d}\) is positive definite.
This projection is generally difficult to compute in polynomial time, making it highly desirable to avoid.

On the other hand, recent work on \textit{stationary} GLBs~\citep{zhang2025generalized} showed that online mirror descent (OMD) can estimate the model parameter in a \textit{one-pass} manner, without storing past data. 
This yields constant per-round computation and memory costs while retaining confidence guarantees comparable to those of statistically efficient MLE-based methods, such as~\citet{lee2024unified}.
This suggests OMD as a natural starting point for developing computationally efficient algorithms for nonstationary GLBs.
This leads to the following question:
\begin{center}
    \textit{Can we design a nonstationary GLB algorithm with constant computation and memory costs?}
\end{center}
In this paper, we answer this question affirmatively by proposing \AlgName{}, a discounted OMD (DOMD)-based algorithm that achieves both statistical and computational efficiency.

\textbf{Technical difficulties. }\,
Establishing this result is nevertheless nontrivial, since the stationary OMD analysis of~\citet{zhang2025generalized} does not directly extend to the discounted setting.
The first challenge lies in the \textit{mirror-descent geometry}.
In OMD, the choice of Bregman divergence is crucial, since it determines the geometry with respect to which the update is performed.
For stationary GLBs~\citep{zhang2025generalized}, the additively accumulated Hessian naturally induces the relevant mirror-descent geometry.
In the nonstationary setting, however, this geometry must account for forgetting: past curvature should be downweighted over time, rather than accumulated uniformly.
Consequently, the stationary geometry cannot be directly applied to the discounted update structure.
As a result, it is not clear \emph{a priori} what the correct discounted analogue of the stationary Bregman divergence should be.
Establishing such a mirror-descent geometry is therefore a central difficulty in extending OMD to nonstationary GLBs.

The second challenge lies in the \textit{concentration analysis}.
The confidence analysis of~\citet{zhang2025generalized} is based on a \textit{mix-loss} argument~\citep{vovk2001competitive}, which constructs a suitable exponentiated stochastic process and shows that it forms a supermartingale, thereby allowing the use of \textit{Ville's inequality}~\citep{ville1939etude} to derive an anytime concentration bound.
In the discounted setting, however, the corresponding discounted process is generally not a supermartingale, and this argument therefore breaks down. 
As a result, a different concentration analysis is required (see Remark~\ref{remark:why_fixed}).

The third challenge is to control the \textit{loss drift} induced by nonstationarity.
In the DOMD confidence analysis, the reference parameter is no longer fixed over time, since the current parameter may differ from the parameters underlying past observations.
This mismatch creates an additional \textit{loss-drift term} that is absent in the stationary setting.
The main difficulty is therefore to isolate and control this term within the DOMD confidence argument.
Unlike prior nonstationary MLE-based analyses (e.g.,~\citealt{wang2023revisiting}), where the effect of parameter drift can be isolated as an explicit additive \textit{parameter-bias term}, DOMD requires controlling the drift in \textit{loss space}, through terms of the form \(\ell_s(\theta_t^\star)-\ell_s(\theta_s^\star)\).
Thus, controlling loss drift requires a separate analysis tailored to the DOMD estimator.

\textbf{Contributions. }\
To address these difficulties, we develop a proper discounted mirror-descent geometry (Subsection~\ref{subsec:DOMD}), 
a new mix-loss-based analysis tailored to DOMD (Lemma~\ref{lemma:weighted-exp}), 
and a separate loss-drift bound in terms of the nonstationarity measure (Lemma~\ref{lemma:drift-loss}).
Together, these yield a high-probability confidence bound for the DOMD estimator (Theorem~\ref{thm:DOMD_confidence}), which in turn leads to dynamic regret guarantees in both drifting (Theorem~\ref{thm:regret}) and piecewise-stationary environments (Theorem~\ref{thm:regret_piecewise}).
Our contributions are summarized as follows:
\begin{itemize}
    \item We propose a DOMD method for parameter estimation in nonstationary generalized linear bandits (GLBs). The resulting update requires only \(\BigO(d^3)\) computation and \(\BigO(d^2)\) memory per round, both independent of the time index \(t\) (Subsection~\ref{subsec:DOMD}).

    \item We establish a new confidence bound for DOMD in nonstationary GLBs by extending the \textit{mix-loss} method~\citep{vovk2001competitive} to the discounted recursion and by explicitly controlling the drift terms that arise from nonstationarity (Theorem~\ref{thm:DOMD_confidence}).

    \item In the drifting setting, we prove the dynamic regret bound \(\BigOTilde\big(c_\mu^{-1/2} d^{3/4} P_T^{1/4} T^{3/4}\big)\), matching the best-known regret order among discounted MLE-based methods~\citep{wang2023revisiting} while using a substantially more efficient update (Theorem~\ref{thm:regret}).

    \item In the piecewise-stationary setting, we obtain the regret bound \(\BigOTilde\big(c_\mu^{-1/3} d^{2/3} \Gamma_T^{1/3} T^{2/3}\big)\), showing that the DOMD framework also provides a strong statistical guarantee when nonstationarity is measured by the number of parameter changes (Theorem~\ref{thm:regret_piecewise}).
    
\end{itemize}

\section{Related Works}
\label{sec:related}
\textbf{Nonstationary parametric bandits. }\,
Nonstationary parametric bandits have been studied extensively across linear and generalized linear models
\citep{cheung2019learning,russac2019weighted,zhao2020simple,russac2020algorithms,faury2021regret,russac2021self,wei2021non,deng2022weighted,liu2023nonstationary,hong2023optimization,jia2023smooth,wang2023revisiting,wang2024variance,li2025constrained}.
In the nonstationary GLB setting most relevant to our work, existing methods rely on MLE-based estimators, adapting them to nonstationarity through windowing, discounting, or restarting
\citep{russac2020algorithms,faury2021regret,russac2021self,wang2023revisiting}.
Among restart-based approaches, \texttt{MASTER}~\citep{wei2021non} provides an adaptive black-box framework that detects nonstationarity and restarts the base algorithm, achieving the optimal dynamic regret
\(\BigOTilde(\min\{\sqrt{\Gamma_T T},\, P_T^{1/3}T^{2/3}\})\).
Although these methods enjoy strong statistical guarantees, they typically require repeated re-optimization over past data, leading to computation and memory costs that grow with time, the window size, or the restart period. 
Moreover, some also rely on a non-convex projection step that is generally not computable in polynomial time.

\textbf{Stationary generalized linear bandits. }\,
The stationary GLB literature has long faced a trade-off between statistical sharpness and computational efficiency.
The pioneering optimism-based algorithm of~\citet{filippi2010parametric} requires repeated maximum likelihood
estimation (MLE) over the full history, 
and its regret scales with the global slope parameter \(1/c_\mu\), which can be exponentially large in the parameter-norm bound.
Subsequent work has pursued two complementary directions: statistical sharpness and computational efficiency.
On the statistical side, recent MLE-based analyses use self-concordance and local curvature to replace this global dependence with sharper instance-dependent quantities, yielding nearly optimal regret bounds in terms of local parameters such as \(\kappa^\star\)~\citep{lee2024unified,liu2024almost,sawarni2024generalized}.
On the computational side, efficient methods such as \texttt{GLOC}~\citep{jun2017glm} use online updates and achieve constant per-round cost, but their regret bounds still depend linearly on \(1/c_\mu\).
More recently, \citet{zhang2025generalized} showed that these two goals can be achieved \textit{simultaneously}, obtaining nearly optimal regret while maintaining fully online updates with constant time and memory per round.

\textbf{OMD in bandits and reinforcement learning. }\,
Online mirror descent (OMD) has recently emerged as a useful tool for designing jointly efficient bandit algorithms, replacing repeated batch MLE by one-pass updates whose per-round cost remains independent of time.
In logistic bandits, \citet{faury2022jointly,zhang2024online} developed computationally efficient UCB-type algorithms based on OMD.
This approach has since been extended to broader settings, including multinomial logit (MNL) bandits \citep{lee2024nearly,lee2025improved} and GLBs \citep{zhang2025generalized}.
Related OMD-style ideas have also been explored in other online decision-making problems, including heavy-tailed linear bandits \citep{wang2025heavy}, 
cascading bandits \citep{choi2025true}, 
reinforcement learning \citep{cho2024randomized,lee2025combinatorial}, and 
RLHF/PbRL \citep{li2025provably,lee2025preference}.
However, to the best of our knowledge, the use of OMD in nonstationary settings remains largely unexplored.

\section{Preliminaries}
\label{sec:preliminary}
\textbf{Notations.}
For a positive semidefinite matrix $M \in \mathbb{R}^{d\times d}$ and a vector $x \in \mathbb{R}^d$, we define
$\|x\|_M := \sqrt{x^\top M x}$ and
$\|x\|_2 := \sqrt{x^\top x}$.
We write $[T] := \{1, \dots, T\}$.
For any two symmetric matrices $A$ and $B$ of the same dimensions, $A \succeq B$ indicates that $A-B$  is a positive semi-definite matrix.
For a function $f : \mathbb{R} \to \mathbb{R}$, we denote its first and second derivatives by $f'$ and $f''$, respectively.

\subsection{Nonstationary Generalized Linear Bandit}
\label{subsec:problem_setting}
We consider a $T$-round contextual bandit problem.
At each round $t \in [T]$, the learner observes a feasible action set $\Xcal_t \subset \mathbb{R}^d$, chooses an action $x_t \in \Xcal_t$, and then receives a stochastic reward $r_t \in \mathbb{R}$. The environment is governed by an unknown time-varying parameter sequence $\{ \theta_t^\star \}_{t=1}^{\infty}$.
Conditioned on the chosen action \(x_t\), the reward follows a canonical exponential-family distribution whose natural parameter is given by the linear predictor \(x_t^\top \theta_t^\star\), i.e.,
$    \text{Pr} \left[r_t | x_t,\theta_t^\star \right]
    =
    \exp\!\left(
    \frac{r_t x_t^\top \theta_t^\star - m(x_t^\top \theta_t^\star)}{g(\tau)}
    +
    h(r_t,\tau)
    \right),$
where $m:\mathbb{R}\to\mathbb{R}$ is convex and three times continuously differentiable, $g(\tau)>0$ is a known dispersion term that controls the variability of the distribution, and $h(\cdot,\tau)$ is the base measure.
Let
$\mu(z) := m'(z)$.
Then the conditional mean satisfies
$\mathbb{E}[r_t | x_t,\theta_t^\star] = \mu(x_t^\top \theta_t^\star)$ (Proposition 3.1 of~\citealt{wainwright2008graphical}).
We assume that the parameter path $\{\theta_t^\star\}_{t\ge 1}$ is fixed independently of the reward noise process, following the standard exogenous-drift formulation in the nonstationary bandit literature~\citep{cheung2019learning,russac2019weighted,russac2021self,zhao2020simple,wang2023revisiting}.

We make the following standard assumptions used in GLBs~\citep{filippi2010parametric}:
\begin{assumption}[Boundedness]
\label{assum:boundedness}
    For all $t \ge 1$, we assume $\theta_t^\star \in \Theta := \{\theta \in \mathbb{R}^d : \|\theta\|_2 \le S\}$, 
    $\|x\|_2 \le 1$ for all $x \in \mathcal{X}_t$,
    and 
    $r_t \in [0, R]$ almost surely for some known $R \in \RR$.
\end{assumption}
\begin{assumption}[Bounded link function] 
\label{assum:bounded_link}
    The link function $\mu$ is twice differentiable over its feasible domain. Moreover, there exist constants $c_\mu, k_{\mu} >0$ such that $c_\mu \le \mu'(z) \le k_\mu$ for all $z \in [-S, S]$.
    Consequently, the function $m$ is strictly convex and $\mu$ is strictly increasing.
\end{assumption}
\begin{proposition} [Self-concordance, Lemma 2.2 of~\citealt{sawarni2024generalized}]
\label{prop:self-concordance}
    For any GLM supported on $[0, R]$ almost surely, the link function $\mu(\cdot)$ satisfies $|\mu''(z)| \le R \cdot \mu'(z)$ for all $z \in \RR$.
\end{proposition}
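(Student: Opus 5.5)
The plan is to work directly with the exponential-family structure. Fix $z \in \RR$. Since the reward distribution is a canonical exponential family with log-partition $m$, the function $m$ is the cumulant generating function (up to the dispersion scaling $g(\tau)$) of the sufficient statistic $r$ under the base measure. Concretely, write $p_z(r) \propto \exp\bigl(r z / g(\tau) + h(r,\tau)\bigr)$, so that $m(z)/g(\tau)$ is the log-normalizer. Differentiating under the integral, which is justified on the natural parameter space, gives three identities. First, $\mu(z) = m'(z) = \EE_z[r]$. Second, $\mu'(z) = m''(z) = \mathrm{Var}_z(r)/g(\tau)$. Third, $\mu''(z) = m'''(z) = \EE_z[(r-\mu(z))^3]/g(\tau)^2$. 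In the last two the dispersion factor should be tracked carefully; if $g(\tau)$ does not cancel, I would note that for the standard normalization $g(\tau)=1$ (as in Lemma~2.2 of \citet{sawarni2024generalized}) the identities read $\mu' = \mathrm{Var}_z(r)$ and $\mu'' = \EE_z[(r-\mu)^3]$.

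The key step is to bound the third central moment by $R$ times the variance using the support condition. Since $r \in [0,R]$ almost surely and $\mu(z) = \EE_z[r] \in [0,R]$, we have $|r - \mu(z)| \le R$ almost surely. Therefore
\[
|\EE_z[(r-\mu(z))^3]| \le \EE_z\bigl[|r-\mu(z)|\,(r-\mu(z))^2\bigr] \le R\,\EE_z[(r-\mu(z))^2] = R\,\mathrm{Var}_z(r),
\]
which gives $|\mu''(z)| \le R\,\mu'(z)$ for every $z$ in the natural parameter space. Since the support is bounded, the moment generating function is finite everywhere, so the natural parameter space is all of $\RR$.

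The main obstacle is the bookkeeping of the dispersion $g(\tau)$. If $g(\tau)\neq 1$, the ratio $\mu''/\mu'$ equals $\EE[(r-\mu)^3]/(g(\tau)\mathrm{Var}(r))$, and the bound becomes $R/g(\tau)$. I would handle this either by absorbing $g(\tau)$ into the parametrization, as the cited lemma implicitly does, or by stating the constant accordingly. The interchange of differentiation and integration needs no separate argument: it follows from the bounded support and standard exponential-family analyticity.
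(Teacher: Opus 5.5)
Your argument is correct and is essentially the standard proof behind Lemma~2.2 of \citet{sawarni2024generalized}, which the paper cites without proof: identify $\mu'$ with the variance and $\mu''$ with the third central moment of $r$, then use $|r-\mu(z)|\le R$. Your dispersion bookkeeping is also right, and the caveat is real rather than cosmetic: in the paper's parametrization the argument gives $|\mu''|\le (R/g(\tau))\,\mu'$, and this is sharp (for $r\in\{0,R\}$ with $m(z)=g(\tau)\log(1+e^{Rz/g(\tau)})$ one gets $|\mu''(z)|/\mu'(z)\to R/g(\tau)$ as $|z|\to\infty$), so the constant $R$ holds in general only when $g(\tau)\ge 1$ (e.g.\ $g(\tau)=1$, as in the cited source), and ``absorbing'' $g(\tau)$ changes the link function itself, so it cannot establish the claim for the paper's $\mu$.
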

Therefore, under the bounded rewards in Assumption~\ref{assum:boundedness}, self-concordance is automatically satisfied. 
As a result, in previous works such as~\citep{russac2021self,wang2023revisiting}, the self-concordance assumption is in fact redundant, since bounded rewards are already imposed. 
More importantly, this induced self-concordant property is essential for obtaining regret bounds with improved dependence on \(1/c_\mu\), which can be exponentially large in the parameter-norm bound, i.e., \(1/c_\mu = \BigO(e^S)\).

In this paper, we consider two standard forms of nonstationarity.

\textbf{Setting 1: Drifting environment. }\,
The environment exhibits gradual drift over time, and nonstationarity is measured by the path length
$P_T := \sum_{t=1}^{T-1} \|\theta_{t+1}^\star - \theta_t^\star\|_2,$
which quantifies the cumulative variation of the underlying parameter sequence.

\textbf{Setting 2: Piecewise-stationary environment. }\,
The environment is piecewise stationary over time, and nonstationarity is measured by the number of parameter changes
$\Gamma_T := \sum_{t=1}^{T-1} \mathbbm{1}\!\left\{\theta_{t+1}^\star \neq \theta_t^\star\right\},$
which counts how many times the underlying parameter changes over the horizon.

Our goal is to minimize the cumulative \textit{dynamic (pseudo) regret}:
\begin{align*}
    \Regret := \sum_{t=1}^T
    \left(
    \mu((x_t^\star)^\top \theta_t^\star)
    -
    \mu(x_t^\top \theta_t^\star)
    \right),
    \qquad 
    \text{where}\quad
    x_t^\star \in \argmax_{x\in \Xcal_t} \mu(x^\top \theta_t^\star)
        =
        \argmax_{x\in \Xcal_t} x^\top \theta_t^\star.
\end{align*}
%

\section{Algorithm and Main Results}
\label{sec:algorithm}
In this section, we first introduce a DOMD method (Subsection~\ref{subsec:DOMD}), then present the OMD-based nonstationary GLB algorithm \AlgName{} (Subsection~\ref{subsec:action_selection}), and finally establish dynamic regret guarantees for both drifting (Subsection~\ref{subsec:regret_drift}) and piecewise-stationary (Subsection~\ref{subsec:piecewise_regret}) settings.
%
\subsection{Discounted Online Mirror Descent}
\label{subsec:DOMD}
We estimate the current parameter via \emph{discounted online mirror descent} (DOMD).
For each round \(t\), we define the loss function as the negative log-likelihood
$\ell_t(\theta)
:=
\left( m(x_t^\top \theta)-r_t x_t^\top \theta \right)/g(\tau).$
Given the current iterate \(\theta_t\), we use the local quadratic surrogate
$\widetilde{\ell}_t(\theta)
:=
\ell_t(\theta_t)
+
\langle \nabla \ell_t(\theta_t),\, \theta-\theta_t\rangle
+
\frac12 \|\theta-\theta_t\|_{\nabla^2 \ell_t(\theta_t)}^2.$
Let
$
H_t
=
\lambda I_d+\sum_{s=1}^{t-1}\gamma^{t-1-s}\nabla^2 \ell_s(\theta_{s+1})$,
and $A_t := \gamma H_t + (1-\gamma)\lambda I_d,$
where \(\gamma\in(0,1)\) is the discount factor. 
At each round $t$, we update the parameter via:
\begin{equation}
\label{eq:discounted_omd_update}
\theta_{t+1}
\in
\argmin_{\theta\in\Theta}
\left\{
\widetilde{\ell}_t(\theta)
+
\frac{1}{2\eta}\|\theta-\theta_t\|_{A_t}^2
\right\},
\qquad
\text{where}\,\,
\eta := 1+RS,
\end{equation}
and subsequently update the curvature matrix via
$H_{t+1}
=
A_t + \nabla^2 \ell_t(\theta_{t+1})
=
A_t + \frac{\mu'(x_t^\top \theta_{t+1})}{g(\tau)}x_t x_t^\top.$

\textbf{Role of the surrogate loss.\,\,\,}
The surrogate loss \(\widetilde{\ell}_t(\theta)\) plays a central role in balancing statistical efficiency and computational efficiency.
Unlike a purely first-order linearization, \(\widetilde{\ell}_t(\theta)\) retains the local curvature of the current loss through \(\nabla^2 \ell_t(\theta_t)\), which is important for capturing the nonlinear geometry of the GLM.
At the same time, it converts the round-\(t\) update into a quadratic optimization problem.
Indeed, after removing terms independent of \(\theta\), \eqref{eq:discounted_omd_update} is equivalent to
\[
\theta_{t+1}
\in
\argmin_{\theta\in\Theta}
\left\{
\left\langle \nabla \ell_t(\theta_t),\,\theta-\theta_t\right\rangle
+
\frac12
\|\theta-\theta_t\|_{\nabla^2 \ell_t(\theta_t)+\eta^{-1}A_t}^2
\right\}.
\]
Thus, the algorithm preserves second-order information from the current sample while avoiding the need to solve a global weighted MLE problem over the entire historical dataset.

\textbf{Discounted curvature versus stationary curvature.\,\,\,}
To highlight the role of discounting, consider the curvature matrix that would arise in a stationary environment, 
$H_t^{\mathrm{stat}}
:=
\lambda I_d+\sum_{s=1}^{t-1}\nabla^2 \ell_s(\theta_{s+1})$ (e.g.,~\citealt{zhang2025generalized}).
In the stationary case, such an undiscounted accumulation is natural, since all past observations are generated under the same underlying parameter and should contribute equally to the local geometry.
In contrast, in a nonstationary environment, old observations are collected under outdated parameters and should therefore have progressively less influence on the current update.
For this reason, we use the discounted curvature matrix $H_t = \lambda I_d+\sum_{s=1}^{t-1}\gamma^{t-1-s}\nabla^2 \ell_s(\theta_{s+1})$, 
which exponentially downweights stale curvature information.
This discounting allows the estimator to adapt to drift while still retaining enough historical curvature to stabilize estimation.

\textbf{Why \(A_t\) instead of \(H_t\)?\,\,\,}
A further distinction, specific to the discounted setting, is that the update rule in~\eqref{eq:discounted_omd_update} uses \(A_t\), not \(H_t\), in the proximal geometry.
This is important.
The matrix \(H_t\) is the \emph{post-round-\((t-1)\)} discounted curvature matrix, whereas the actual geometry needed for the round-\(t\) update is the \emph{pre-update} matrix
$A_t
=
\gamma H_t+(1-\gamma)\lambda I_d
=
\lambda I_d+\sum_{s=1}^{t-1}\gamma^{t-s}\nabla^2 \ell_s(\theta_{s+1}).$
Hence, \(A_t\) is exactly the one-step-aged version of \(H_t\): every past curvature contribution receives one additional factor of \(\gamma\) before the new sample is incorporated.
This is the correct geometry at round \(t\), since historical information from round \(s\) should enter the round-\(t\) update with weight \(\gamma^{t-s}\), not \(\gamma^{t-1-s}\).

\begin{remark}[Constant computation and memory costs]
\label{remark:comp_cost}
The DOMD update has a recursive form: it depends only on the current sample, the current estimate \(\theta_t\), and the maintained curvature matrix \(H_t\).
Thus, it can be implemented without storing or revisiting past observations, requiring only \(\BigO(d^2)\) memory and \(\BigO(d^3)\) computation per round, both independent of \(t\).
By contrast, window-based methods~\citep{russac2020algorithms} must store and process a window of past observations, so their costs scale with the window size \(\tau = \BigO(T^{2/3})\).
Weighted MLE-based methods~\citep{russac2021self,wang2023revisiting} similarly revisit past weighted observations, leading to costs that grow with \(t\).
\end{remark}

\begin{theorem}[Discounted-OMD confidence bound]
\label{thm:DOMD_confidence}
    Let
    $\eta = 1+RS$,
    $\alpha = \frac{3\eta}{2}$ and
    $\iota = \sqrt{\frac{2\eta k_\mu}{g(\tau)}}.$
    Suppose
    $\lambda \ge \max\left\{
    \frac{6\eta R k_\mu S}{g(\tau)},
    \frac{32\alpha dR^2}{7}
    \right\}.$
    For each $t \geq 2$, define the confidence set
    \[
    \Ccal_t(\delta) := \left\{ 
    \theta \in \RR^d \mid
    \| \theta - \theta_t\|_{H_t}
    \le
    \beta_t(\delta)+\iota \mathcal V_t
    \right\},
    \]
    where 
    $\mathcal V_t
    :=
    \sum_{p=1}^{t-1} \gamma^{(t-1-p)/2}\sqrt{\frac{1-\gamma^p}{1-\gamma}} \|\theta_{p+1}^\star-\theta_p^\star\|_2$,
    and 
    \[
    \beta_t^2(\delta)
    :=
    4\lambda S^2
    +
    2\eta\left(1+\frac{R^2}{g(\tau)k_\mu}\right)
    \log\!\left(\frac{\pi^2 t^2}{3\delta}\right)
    +
    2\eta\left(3\eta+\frac12\right)
    d\log\!\left(
    1+\frac{k_\mu(1-\gamma^{t-1})}{\lambda d\,g(\tau)(1-\gamma)}
    \right).
    \]
    Then, under Assumptions~\ref{assum:boundedness} and~\ref{assum:bounded_link}, we have $\Pr \left[ \forall t \geq 2,\ \theta_t^\star \in \Ccal_t(\delta) \right] \ge 1-\delta$.
\end{theorem}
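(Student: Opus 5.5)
We will follow the OMD confidence analysis of \citet{zhang2025generalized}, adapted to the discounted recursion $H_{t+1}=A_t+\nabla^2\ell_t(\theta_{t+1})$, $A_t=\gamma H_t+(1-\gamma)\lambda I_d$. We fix the target round $t$ and use the current parameter $\theta_t^\star$ as the comparator for all past rounds. The resulting drift will be handled separately at the end.

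\textbf{Step 1 (one-step inequality).} For each $s\le t-1$ and any comparator $u\in\Theta$, we use the first-order optimality of \eqref{eq:discounted_omd_update}, the three-point identity for the quadratic proximal term, and self-concordance (Proposition~\ref{prop:self-concordance}). The latter lets us replace $\widetilde\ell_s$ by $\ell_s$ up to a factor involving $\eta=1+RS$, since $|x_s^\top(\theta-\theta')|\le 2S$ on $\Theta$. The target is
\[
\|\theta_{s+1}-u\|_{H_{s+1}}^2 \le \|\theta_s-u\|_{A_s}^2 + 2\eta\big(\ell_s(u)-\ell_s(\theta_{s+1})\big) + (\text{second-order remainder}),
\]
with the remainder of the form $\eta\|\nabla\ell_s(\theta_{s+1})\|^2_{H_{s+1}^{-1}}$. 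Because $A_s=\gamma H_s+(1-\gamma)\lambda I$ and $\|\theta_s-u\|^2_{\lambda I}\le 4\lambda S^2$, we get $\|\theta_s-u\|_{A_s}^2\le \gamma\|\theta_s-u\|_{H_s}^2+(1-\gamma)4\lambda S^2$. Unrolling with weights $\gamma^{t-1-s}$ then gives
\[
\|\theta_t-u\|_{H_t}^2\le 4\lambda S^2+2\eta\sum_{s<t}\gamma^{t-1-s}\big(\ell_s(u)-\ell_s(\theta_{s+1})\big)+\text{remainders}.
\]
The remainders are summed by an elliptical-potential lemma for the discounted matrices. This produces the $d\log\!\big(1+\tfrac{k_\mu(1-\gamma^{t-1})}{\lambda d g(\tau)(1-\gamma)}\big)$ term with coefficient $2\eta(3\eta+\tfrac12)$. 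The condition $\lambda\ge 6\eta Rk_\mu S/g(\tau)$ is used here to keep $\mu'$ at $\theta_{s+1}$ and $\theta_s$ comparable.

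\textbf{Step 2 (loss split and mix-loss).} We take $u=\theta_t^\star$ and write
\[
\ell_s(\theta_t^\star)-\ell_s(\theta_{s+1})=\big[\ell_s(\theta_s^\star)-\ell_s(\theta_{s+1})\big]+\big[\ell_s(\theta_t^\star)-\ell_s(\theta_s^\star)\big].
\]
The first bracket is the stochastic term. Following the mix-loss approach, we bound $\sum_s\gamma^{t-1-s}\big(\ell_s(\theta_s^\star)-\ell_s(\theta_{s+1})\big)$ by a Gaussian-mixture (Vovk) argument. The complication is that the discounted exponential process is not a supermartingale over a varying horizon. We therefore fix $t$ and build a process over $s=1,\dots,t-1$ with the weights $\gamma^{t-1-s}$ frozen. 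For $\gamma^{t-1-s}\le 1$, a bounded-reward exponential-moment bound (this is where $\alpha=3\eta/2$ and $\lambda\ge 32\alpha dR^2/7$ enter) makes this process a supermartingale. Ville's or Markov's inequality at level $3\delta/(\pi^2t^2)$, followed by a union bound over $t$, yields the $\log(\pi^2t^2/(3\delta))$ term with its $(1+R^2/(g(\tau)k_\mu))$ coefficient. Residual quadratic terms from the mixture will be absorbed into the left-hand side and the log-det term. I expect this step to be the main obstacle, specifically getting a fixed-horizon weighted supermartingale whose mixture normalizer matches $A_t$ and $H_t$.

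\textbf{Step 3 (loss drift).} For the second bracket, we telescope $\theta_t^\star-\theta_s^\star=\sum_{p=s}^{t-1}(\theta_{p+1}^\star-\theta_p^\star)$ and use the mean value theorem on $\ell_s$. Taking expectations where needed, the linear parts combine with Step 2, and the curvature bound $\mu'\le k_\mu$ leaves
\[
\sum_{s}\gamma^{t-1-s}\big(\ell_s(\theta_t^\star)-\ell_s(\theta_s^\star)\big)\lesssim \|\theta_t-\theta_t^\star\|_{H_t}\cdot\frac{k_\mu}{g(\tau)}\sum_p\Big\|\sum_{s\le p}\gamma^{t-1-s}x_sx_s^\top(\theta^\star_{p+1}-\theta^\star_p)\Big\|_{H_t^{-1}}+\dots
\]
Each inner norm is bounded via Cauchy--Schwarz by $\gamma^{(t-1-p)/2}\sqrt{(1-\gamma^p)/(1-\gamma)}\,\|\theta_{p+1}^\star-\theta_p^\star\|_2$ times $\sqrt{k_\mu/g(\tau)}$. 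This gives a term of the form $\iota\mathcal V_t\|\theta_t-\theta_t^\star\|_{H_t}$.

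\textbf{Step 4 (combine).} We obtain $X^2\le\beta_t^2+\iota\mathcal V_t X+(\iota\mathcal V_t)^2$-type terms, where $X=\|\theta_t-\theta_t^\star\|_{H_t}$. Solving this quadratic gives $X\le\beta_t(\delta)+\iota\mathcal V_t$ simultaneously for all $t\ge 2$ on the good event.
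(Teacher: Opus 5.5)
Your skeleton (one-step inequality unrolled with weights $\gamma^{t-1-s}$, comparator $\theta_t^\star$, split into a stochastic part and a loss-drift part, fixed-$t$ Markov bound plus a union bound over $t$) matches the paper. However, Step 3 fails as written.

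The drift sum $\sum_{s<t}\gamma^{t-1-s}\bigl(\ell_s(\theta_t^\star)-\ell_s(\theta_s^\star)\bigr)$ involves only the true parameters, actions and rewards. Since $\theta_t$ does not appear in it, it cannot be bounded by something proportional to $\|\theta_t-\theta_t^\star\|_{H_t}$. That cross term is imported from the weighted-MLE bias decomposition, which relies on a first-order optimality identity of the estimator that DOMD does not have. The linear part of the drift is also realized noise, $-\epsilon_s x_s^\top(\theta_t^\star-\theta_s^\star)/g(\tau)$ with $\epsilon_s:=r_s-\mu(x_s^\top\theta_s^\star)$. You cannot drop it by ``taking expectations'' inside a high-probability statement.

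The missing idea is to control the drift directly in loss space:
\begin{itemize}
\item The second-order part is at most $\frac{k_\mu}{2g(\tau)}\sum_s\gamma^{t-1-s}\bigl(x_s^\top(\theta_t^\star-\theta_s^\star)\bigr)^2\le\frac{k_\mu}{2g(\tau)}\mathcal V_t^2$. To see this, telescope $\theta_t^\star-\theta_s^\star$ and use the triangle inequality in $\mathbb R^{t-1}$.
\item The noise part needs its own Hoeffding--Chernoff bound. This is valid because the oblivious path makes $\gamma^{t-1-s}x_s^\top(\theta_t^\star-\theta_s^\star)$ predictable. Young's inequality then gives $\frac{k_\mu}{2g(\tau)}\mathcal V_t^2+\frac{R^2}{4g(\tau)k_\mu}\log\frac1\delta$.
\end{itemize}
This second event is where the $R^2/(g(\tau)k_\mu)$ part of the log coefficient comes from, not the mix-loss step. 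The result is $X^2\le\beta_t^2+\iota^2\mathcal V_t^2$, so $X\le\beta_t+\iota\mathcal V_t$ by $\sqrt{a+b}\le\sqrt{a}+\sqrt{b}$. Your quadratic $X^2\le\beta^2+aX+a^2$ would only give $X\le\beta+\frac{1+\sqrt{5}}{2}a$. Your Cauchy--Schwarz step would also yield $k_\mu/\sqrt{g(\tau)c_\mu}$ rather than $\sqrt{k_\mu/g(\tau)}$, because $H_t$ only dominates $\frac{c_\mu}{g(\tau)}\sum_s\gamma^{t-1-s}x_sx_s^\top$.

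Steps 1--2 have the wrong structure as well. Your positive remainder is $\eta\|\nabla\ell_s(\theta_{s+1})\|^2_{H_{s+1}^{-1}}=\eta\,(\mu(x_s^\top\theta_{s+1})-r_s)^2\|x_s\|^2_{H_{s+1}^{-1}}/g(\tau)^2$. Summing it against log-det increments weighted by $\mu'(x_s^\top\theta_{s+1})/g(\tau)$ costs a factor as large as $R^2/(g(\tau)c_\mu)$. So it cannot produce the $c_\mu$-free term $2\eta(3\eta+\frac12)d\log(\cdot)$.

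What the proof needs is a \emph{negative} movement term. Use the optimality condition of the surrogate problem, the three-point identity, and $\lambda\ge 6\eta Rk_\mu S/g(\tau)$ to absorb the third-order Taylor term into $\frac13\|\theta_{s+1}-\theta_s\|^2_{A_s}$. This gives $\|\theta_{s+1}-u\|^2_{H_{s+1}}\le\|\theta_s-u\|^2_{A_s}+2\eta(\ell_s(u)-\ell_s(\theta_{s+1}))-\frac23\|\theta_{s+1}-\theta_s\|^2_{A_s}$.

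This negative term resolves the obstacle you flagged in Step 2, once you insert the mix loss $m_s(\Pi_s)$, with $\Pi_s=\mathcal N(\theta_s,\alpha A_s^{-1})$, between $\ell_s(\theta_s^\star)$ and $\ell_s(\theta_{s+1})$:
\begin{itemize}
\item For $Z_s=\ell_s(\theta_s^\star)-m_s(\Pi_s)$, the likelihood-ratio identity gives $\EE[e^{Z_s}\mid\widetilde{\mathcal F}_s]=1$ exactly. Conditional Jensen for the concave map $x\mapsto x^{\gamma^{t-1-s}}$ then gives $\EE[e^{\sum_s\gamma^{t-1-s}Z_s}]\le1$, with no bounded-reward moment bound needed.
\item The other piece, $m_s(\Pi_s)-\ell_s(\theta_{s+1})$, is deterministic. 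The variational formula with $Q_s=\mathcal N(\theta_{s+1},\alpha H_{s+1}^{-1})$ bounds it by $(2\alpha+\frac12)(\log\det H_{s+1}-\log\det A_s)+\frac{1}{2\alpha}\|\theta_{s+1}-\theta_s\|^2_{A_s}$. This is where $\lambda\ge 32\alpha dR^2/7$ enters, through a Gaussian exponential moment.
\item The discounted log-det increments telescope via concavity, $\log\det A_s\ge\gamma\log\det H_s+(1-\gamma)d\log\lambda$, which gives the $d\log(\cdot)$ term.
\item Since $2\eta\cdot\frac{1}{2\alpha}=\frac23$, the movement term cancels exactly.
\end{itemize}
Without the negative term from Step 1, your ``residual quadratic terms'' have nothing to be absorbed into.
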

\textbf{Discussion of Theorem~\ref{thm:DOMD_confidence}. }\,
Theorem~\ref{thm:DOMD_confidence} shows that one can construct an ellipsoidal confidence set \(\Ccal_t(\delta)\) around the DOMD iterate \(\theta_t\) that is both \textit{statistically efficient}, in the sense that the radius has no explicit \(1/c_{\mu}\) dependence, and \textit{computationally efficient}, with constant per-round computation and storage costs.
In contrast to non-discounted OMD~\citep{zhang2025generalized}, which treats all past observations equally and is therefore better suited to stationary environments, our confidence bound explicitly incorporates forgetting through the discounted matrix \(H_t\) and the drift term \(\mathcal V_t\).
As a result, stale data are progressively downweighted, so the confidence set tracks the current parameter more accurately in nonstationary environments.
The proof is deferred to Appendix~\ref{app_sec:proof_confidence}.

\textbf{Proof sketch of Theorem~\ref{thm:DOMD_confidence}. }\,
We provide a proof sketch and highlight the technical novelties.

\textbf{1) Reduction to discounted inverse regret. }\,
We first upper bound the estimation error of \(\theta_t\) as:
\begin{align}
    \|\theta_t-\theta_t^\star\|_{H_t}^2
    &\le 
    4\lambda S^2
    +
    2\eta
    \underbrace{\sum_{s=1}^{t-1}\gamma^{t-1-s}
    \bigl(\ell_s(\theta_t^\star)-\ell_s(\theta_{s+1})\bigr) }_{\text{discounted inverse-regret}}
    -\frac23\sum_{s=1}^{t-1}\gamma^{t-1-s}\|\theta_{s+1}-\theta_s\|_{A_s}^2.
    \label{eq:main_inverse_regret}
\end{align}
Note that the last term on the right-hand side is measured in the \(A_s\)-norm because the DOMD update uses the pre-update geometry \(A_s\), whereas the left-hand side is measured in the post-update matrix \(H_t\). 
This geometric mismatch is specific to the discounted setting and is absent in the stationary OMD analysis~\citep{zhang2024online, lee2024nearly, lee2025improved, zhang2025generalized}.

\textbf{2) Decomposition of discounted inverse regret. }\,
To analyze the discounted inverse-regret,
we introduce the mix loss
$m_s(\Pi_s)
\!:=\!
-\log \EE_{\theta\sim\Pi_s}\!\left[e^{-\ell_s(\theta)}\right]$,
where
\(\Pi_s \!:=\! \mathcal N(\theta_s,\alpha A_s^{-1})\).
Then, we have
\begin{align*}
    &\sum_{s=1}^{t-1}\!\gamma^{t-1-s}
    \bigl(\ell_s(\theta_t^\star)-\ell_s(\theta_{s+1})\bigr)
    \\
    &\!=\underbrace{\sum_{s=1}^{t-1}\!\gamma^{t-1-s}
    \bigl(\ell_s(\theta_s^\star)-m_s(\Pi_s)\bigr)}_{\text{concentration term}}
    \!+\!\sum_{s=1}^{t-1}\!\gamma^{t-1-s}
    \bigl(m_s(\Pi_s)-\ell_s(\theta_{s+1})\bigr)
    \!+\!\underbrace{\sum_{s=1}^{t-1}\!\gamma^{t-1-s}
    \bigl(\ell_s(\theta_t^\star)-\ell_s(\theta_s^\star)\bigr)}_{\text{loss-drift term}}.
\end{align*}
Unlike the stationary setting of~\citet{zhang2025generalized}, our decomposition contains an additional \textit{loss-drift} term due to the time-varying comparator \(\theta_t^\star\). 
The middle term is controlled by Lemma~\ref{lemma:weighted-mix-gap}, which yields a discounted log-determinant term and a movement term; with \(\alpha=3\eta/2\), the latter is canceled by the negative last term in Equation~\eqref{eq:main_inverse_regret}. 
Therefore, the main remaining difficulties are to control the concentration term and the loss-drift term.

\textbf{3) Fixed-time exponential bound for the concentration term. }\,
For the concentration term, the stationary OMD analysis of~\citet{zhang2025generalized} based on a supermartingale and Ville's inequality~\citep{ville1939etude} no longer applies. 
Indeed, letting
$Z_s:=\ell_s(\theta_s^\star)-m_s(\Pi_s)$
and
$S_t:=\sum_{s=1}^{t-1}\gamma^{t-1-s}Z_s$,
the discounted recursion becomes \(S_{t+1}=\gamma S_t+Z_t\), so the process \(e^{S_t}\) is generally not a supermartingale. 
We therefore first derive a \textit{fixed-time} exponential bound, and then obtain a uniform-in-\(t\) statement by applying a union bound.
Specifically, fixing the time \(t\) and writing \(w_{s,t}:=\gamma^{t-1-s}\in(0,1]\), the likelihood-ratio identity gives
\(\EE[e^{Z_s}\mid \widetilde{\mathcal F}_s]=1\),
where \(\widetilde{\mathcal F}_s\) 
denotes a suitable filtration; its precise definition is given in the appendix.
Since \(x\mapsto x^{w_{s,t}}\) is concave on \(\mathbb{R}_+\), conditional Jensen yields
$\EE[e^{w_{s,t}Z_s}\mid \widetilde{\mathcal F}_s]
\le
\bigl(\EE[e^{Z_s}\mid \widetilde{\mathcal F}_s]\bigr)^{w_{s,t}} \!=\!1.$
Iterating over \(s\) gives
$\EE[e^{\sum_{s=1}^{t-1}w_{s,t}Z_s}]\le 1$,
and therefore Markov's inequality implies that the concentration term is at most \(\log(1/\delta)\) with probability at least \(1-\delta\). 
Thus, unlike the stationary case, concentration is obtained through a fixed-time exponential argument tailored to discounting. 
Applying this fixed-time bound with \(\delta_t = 6\delta/(\pi^2 t^2)\), one can then take a union bound over \(t\) to obtain the desired guarantee uniformly over \(t\) (see Lemma~\ref{lemma:weighted-exp}).

\textbf{4) Control of the loss-drift term. }\,
The main difficulty is that the drift of the comparator is measured through past losses \(\ell_s\), rather than directly through parameter differences.
This gives rise to an additional loss-space error term that does not appear in the stationary OMD analysis~\citep{zhang2025generalized}.
It is also structurally different from the explicit parameter-bias terms used in nonstationary MLE-based analyses~\citep{wang2023revisiting}.
We control this term directly in loss space by writing
\[
\ell_s(\theta_t^\star)-\ell_s(\theta_s^\star)
=
\underbrace{
\frac{
m(x_s^\top\theta_t^\star)-m(x_s^\top\theta_s^\star)
-\mu(x_s^\top\theta_s^\star)x_s^\top(\theta_t^\star-\theta_s^\star)
}{g(\tau)}}_{\text{deterministic term}}
-
\underbrace{\frac{\eta_s\,x_s^\top(\theta_t^\star-\theta_s^\star)}{g(\tau)}}_{\text{stochastic term}}.
\]
The first term on the right-hand side is deterministic. 
By the smoothness of the GLM loss under Assumption~\ref{assum:bounded_link}, it is bounded by
$\frac{k_\mu}{2g(\tau)}
\sum_{s=1}^{t-1}\gamma^{t-1-s}
\bigl(x_s^\top(\theta_t^\star-\theta_s^\star)\bigr)^2
\le
\frac{k_\mu}{2g(\tau)}\mathcal V_t^2.$
Furthermore, the second term is stochastic. 
Since the comparator path is \textit{oblivious}, i.e., the entire sequence \(\{\theta_u^\star\}_{u\ge1}\) is independent of the realized rewards, the coefficients
$\gamma^{t-1-s}x_s^\top(\theta_t^\star-\theta_s^\star)$
are predictable. 
Therefore, a discounted Hoeffding--Chernoff argument yields a deviation term of order
$\sqrt{\mathcal V_t^2\log(1/\delta)}$.
By Young's inequality, this term is split into a quadratic variation term, which is absorbed into the deterministic variation bound, and a logarithmic confidence penalty.
Combining the deterministic and stochastic parts, we obtain
$\sum_{s=1}^{t-1}\gamma^{t-1-s}
\bigl(\ell_s(\theta_t^\star)-\ell_s(\theta_s^\star)\bigr)
\le
\frac{k_\mu}{g(\tau)}\mathcal V_t^2
+
\frac{R^2}{4g(\tau)k_\mu}\log\frac1\delta.$
Therefore, the loss-drift term is controlled by the discounted variation measure \(\mathcal V_t\) (Lemma~\ref{lemma:drift-loss}).

\subsection{Action Selection}
\label{subsec:action_selection}
We select actions using an implementable optimistic score that does not require prior knowledge of the variation budget \(\mathcal V_t\).
An exact optimism-in-the-face-of-uncertainty (OFU) rule~\citep{abbasi2011improved} over the full confidence set \(\Ccal_t(\delta)\) would include the drift-dependent part of the confidence radius, and hence would not be directly implementable without knowing \(\mathcal V_t\).
We therefore use the following computable UCB-type rule:
\begin{equation}
\label{eq:discounted_ucb_rule}
    x_t
    \in
    \argmax_{x\in \Xcal_t}
    \left\{
        x^\top \theta_t
        +
        \beta_t(\delta)\|x\|_{H_t^{-1}}
    \right\}.
\end{equation}
The full procedure is summarized in Algorithm~\ref{alg:discounted_omd_ucb}. 
Note that the same algorithm handles both drifting and piecewise-stationary environments without prior knowledge of the nonstationarity type.
%
\begin{algorithm}[t!]
\caption{\AlgName{}: \textbf{D}iscounted \textbf{O}nline \textbf{M}irror \textbf{D}escent for \textbf{G}eneralized \textbf{L}inear \textbf{B}andit}
\label{alg:discounted_omd_ucb}
\begin{algorithmic}[1]
\Require Discount factor \(\gamma\in(0,1)\), regularization \(\lambda>0\), confidence level \(\delta\).
\State Initialize \(\theta_1\in\Theta\) and \(H_1\gets \lambda I_d\).
\For{\(t=1,2,\dots,T\)}
    \State Compute the pre-update matrix \(A_t \leftarrow \gamma H_t + (1-\gamma)\lambda I_d\).
    \State Select \(x_t \in \Xcal_t\) according to \eqref{eq:discounted_ucb_rule} and observe the reward \(r_t\).
    \State Update the online estimator \(\theta_{t+1}\) according to \eqref{eq:discounted_omd_update}.
    \State Compute the post-update matrix \(H_{t+1} \leftarrow A_t + \nabla^2 \ell_t(\theta_{t+1})\).
\EndFor
\end{algorithmic}
\end{algorithm}
\subsection{Dynamic Regret Bound under Drifting Setting}
\label{subsec:regret_drift}
In this subsection, we establish a dynamic regret bound for the drifting nonstationary setting.
\begin{theorem}
\label{thm:regret}
    Let
    $T \geq 2$,
    $\lambda
    =
    \max\left\{
    \frac{6\eta R k_\mu S}{g(\tau)},
    \frac{32\alpha dR^2}{7},
    \frac{c_\mu}{g(\tau)}
    \right\}$,
    and
    $\alpha=\frac{3\eta}{2}$.
    Suppose Assumptions~\ref{assum:boundedness} and~\ref{assum:bounded_link} hold.
    Then, for every $\gamma\in[1/T,1)$, with probability at least $1-\delta$,
    \AlgName{} achieves
    \[
    \Regret
    =
    \BigOTilde\!\left(
    \frac{k_\mu}{\sqrt{c_\mu}}\, d\sqrt T
    +
    \frac{k_\mu}{\sqrt{c_\mu}}\, dT\sqrt{1-\gamma}
    +
    \frac{k_\mu^{3/2}}{\sqrt{c_\mu}}\,
    \frac{P_T}{(1-\gamma)^{3/2}}
    \right).
    \]
    \vspace{-0.2cm}
    Moreover, under the tuned choice
    $\gamma
    =
    1-
    \min\left\{
    1-\frac1T,\;
    \max\left\{
    \frac1T,\;
    \sqrt{\frac{\sqrt{k_\mu}\,P_T}{dT}}
    \right\}
    \right\}$,
    we obtain
    \[
    \Regret
    =
    \begin{cases}
    \BigOTilde\!\left(
    \frac{k_\mu}{\sqrt{c_\mu}}\, d\sqrt T
    \right),
    & \text{if } 0\le P_T< \dfrac{d}{\sqrt{k_\mu}\,T},
    \\[1.4ex]
    \BigOTilde\!\left(
    \frac{k_\mu^{9/8}}{\sqrt{c_\mu}}\,
    d^{3/4}P_T^{1/4}T^{3/4}
    \right),
    & \text{if }
    \dfrac{d}{\sqrt{k_\mu}\,T}
    \le
    P_T
    \le
    \dfrac{dT}{\sqrt{k_\mu}}
    \left(1-\dfrac1T\right)^2,
    \\[1.8ex]
    \BigOTilde\!\left(
    \frac{k_\mu^{3/2}}{\sqrt{c_\mu}}\,P_T
    \right),
    & \text{if }
    P_T>
    \dfrac{dT}{\sqrt{k_\mu}}
    \left(1-\dfrac1T\right)^2.
    \end{cases}
    \]
\end{theorem}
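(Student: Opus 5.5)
We condition on the event of Theorem~\ref{thm:DOMD_confidence}, so that $\theta_t^\star\in\Ccal_t(\delta)$ for all $t\ge2$; the first round contributes at most $\BigO(k_\mu S)$. The UCB rule \eqref{eq:discounted_ucb_rule} ignores the drift part $\iota\mathcal V_t$ of the radius, so optimism holds only up to that term. For each $t$ we use $\mu(x_t^{\star\top}\theta_t^\star)-\mu(x_t^\top\theta_t^\star)\le k_\mu (x_t^\star-x_t)^\top\theta_t^\star$ together with Cauchy--Schwarz in the $H_t$-norm:
\[
x^{\top}\theta_t^\star \le x^\top\theta_t+(\beta_t+\iota\mathcal V_t)\|x\|_{H_t^{-1}} .
\]
Applying this to $x_t^\star$, using the argmax property of $x_t$, and applying the reverse inequality to $x_t$ gives
\[
(x_t^\star-x_t)^\top\theta_t^\star\le 2\beta_t\|x_t\|_{H_t^{-1}}+\iota\mathcal V_t\bigl(\|x_t^\star\|_{H_t^{-1}}+\|x_t\|_{H_t^{-1}}\bigr).
\]
Since $H_t\succeq\lambda I_d$ and $\lambda\ge c_\mu/g(\tau)$, we have $\|x\|_{H_t^{-1}}\le \sqrt{g(\tau)/c_\mu}$. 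The drift part is therefore at most $2k_\mu\iota\sqrt{g(\tau)/c_\mu}\sum_t\mathcal V_t$. We also cap each per-round regret by $2k_\mu S$.

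\textbf{Drift sum.} Bound $\sqrt{(1-\gamma^p)/(1-\gamma)}\le (1-\gamma)^{-1/2}$ and swap the order of summation:
\[
\sum_t\sum_{p<t}\gamma^{(t-1-p)/2}\|\theta_{p+1}^\star-\theta_p^\star\|_2\le P_T/(1-\sqrt\gamma)\le 2P_T/(1-\gamma).
\]
Hence $\sum_t\mathcal V_t\le 2P_T(1-\gamma)^{-3/2}$. Since $\iota=\sqrt{2\eta k_\mu/g(\tau)}$, this yields the term $k_\mu^{3/2}c_\mu^{-1/2}P_T(1-\gamma)^{-3/2}$.

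\textbf{Main term.} We need to bound $\sum_t\beta_t\|x_t\|_{H_t^{-1}}$. We have $\beta_t=\BigOTilde(\sqrt{\lambda}S+\sqrt{d\log(\cdot)})$, with no $c_\mu$ inside. To relate $\|x_t\|_{H_t^{-1}}$ to a discounted elliptical potential we use $\nabla^2\ell_t(\theta_{t+1})\succeq (c_\mu/g(\tau))x_tx_t^\top$ because $\theta_{t+1}\in\Theta$. So $H_{t+1}\succeq A_t+(c_\mu/g)x_tx_t^\top$ and $\|x_t\|_{H_t^{-1}}\le\sqrt{g/c_\mu}\,\|\tilde x_t\|_{H_t^{-1}}$, where $\tilde x_t=\sqrt{c_\mu/g}\,x_t$; the norm is bounded by $1$ by the choice of $\lambda$.

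One subtlety is that $H_t$ versus $A_t$ differ by the factor $\gamma$: $A_t\succeq\gamma H_t$, so $\|x\|_{A_t^{-1}}^2\le\gamma^{-1}\|x\|_{H_t^{-1}}^2$, and conversely $H_t\succeq A_t$ from the recursion plus $\gamma\ge$ something. We then invoke the standard discounted elliptical potential lemma (Russac et al.\ 2019; Wang et al.\ 2023):
\[
\sum_{t}\min\{1,\|\tilde x_t\|^2_{A_t^{-1}}\}\le 2dT\log(1/\gamma)+2d\log\bigl(1+\tfrac{c_\mu}{g\lambda d(1-\gamma)}\bigr).
\]
Using $\log(1/\gamma)\lesssim 1-\gamma$ for $\gamma$ bounded away from $0$ (small $\gamma$ is covered by the trivial cap), Cauchy--Schwarz gives $\sum_t\|x_t\|_{H_t^{-1}}\lesssim c_\mu^{-1/2}\sqrt{T\,d(T(1-\gamma)+\log)}$. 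Multiplying by $k_\mu\beta_t=\BigOTilde(k_\mu\sqrt d)$ gives $c_\mu^{-1/2}k_\mu\bigl(d\sqrt T+dT\sqrt{1-\gamma}\bigr)$. The factor $\sqrt{\lambda}$ in $\beta_t$ contributes only polylog/constant factors in $d$ and $R$, which the $\BigOTilde$ absorbs alongside $\lambda$.

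\textbf{Tuning.} Write $\epsilon=1-\gamma$ and balance $dT\sqrt\epsilon$ against $\sqrt{k_\mu}P_T\epsilon^{-3/2}$. This gives $\epsilon=\sqrt{\sqrt{k_\mu}P_T/(dT)}$ and regret of order $k_\mu^{9/8}c_\mu^{-1/2}d^{3/4}P_T^{1/4}T^{3/4}$. The clipping of $\epsilon$ to $[1/T,1-1/T]$ produces the two boundary cases. When $\epsilon=1/T$, the drift term is $\sqrt{k_\mu}P_TT^{3/2}\le d\sqrt T$. When $\epsilon=1-1/T$, the first two terms are at most $dT\lesssim\sqrt{k_\mu}P_T$, giving $k_\mu^{3/2}P_T/\sqrt{c_\mu}$.

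\textbf{Main obstacle.} I expect the discounted elliptical-potential step to be the hardest, because of the $H_t$/$A_t$ mismatch and the data-dependent curvature. The recursion $H_{t+1}=\gamma H_t+(1-\gamma)\lambda I+\ldots$ must be compared to a clean discounted Gram matrix $\lambda I+\sum_s\gamma^{t-1-s}\tilde x_s\tilde x_s^\top$ via the lower bound $c_\mu$. I would first try to prove $H_t\succeq V_t:=\lambda I+\sum_{s<t}\gamma^{t-1-s}(c_\mu/g)x_sx_s^\top$ by induction, and then apply the known potential bound for $V_t$ directly, converting between $V_t$ and $\gamma V_t$ at the cost of a factor $\gamma^{-1}\le T$ inside logarithms only.
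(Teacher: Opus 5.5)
Your proposal is correct and follows essentially the same route as the paper. It uses the same optimism-up-to-drift per-round bound with $\iota\mathcal V_t\|x\|_{H_t^{-1}}\le\iota\mathcal V_t/\sqrt{\lambda}$, the same kernel-sum bound $\sum_t\mathcal V_t\le 2P_T/(1-\gamma)^{3/2}$, a discounted elliptical potential combined with Cauchy--Schwarz and $\beta_t\le\beta_T=\BigOTilde(\sqrt d)$, and the same clipped three-case tuning. The differences are cosmetic: the paper proves the potential bound directly on $H_t$ via $\log\det H_{t+1}-\log\det A_t\ge\log\bigl(1+\tfrac{c_\mu}{g(\tau)}\|x_t\|_{H_t^{-1}}^2\bigr)$ and $\log\det A_t\ge d\log\gamma+\log\det H_t$, rather than through your comparison $H_t\succeq V_t$ (which holds termwise since $\theta_{s+1}\in\Theta$, with no induction needed; likewise $H_t\succeq A_t$ holds unconditionally because $H_t\succeq\lambda I_d$), and it handles $\gamma<1/2$ by $\log(1/\gamma)\le\log T$ with $1-\gamma\ge 1/2$ instead of a trivial per-round cap.
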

\textbf{Discussion of Theorem~\ref{thm:regret}. }\,
In the intermediate-\(P_T\) regime, Theorem~\ref{thm:regret} yields the regret bound \(\BigOTilde\big(c_\mu^{-1/2} d^{3/4} P_T^{1/4} T^{3/4}\big)\), matching the best-known regret order of the discounted MLE-based method \texttt{SCB-WeightUCB}~\citep{wang2023revisiting}, while replacing the discounted MLE update with DOMD.
This leads to a substantial computational advantage: our per-round computation and memory costs are constant, both independent of \(t\), whereas weighted MLE-based methods~\citep{faury2021regret, wang2023revisiting} must revisit past weighted observations, 
so their costs grow with \(t\).
Moreover, those methods rely on a non-convex projection step, which is generally not guaranteed to be computable in polynomial time.
On the other hand, the optimal \textit{restart}-based method \texttt{MASTER \!\!+\!\! GLM-UCB}~\citep{wei2021non} achieves the sharper regret bound of order \(\BigOTilde\big(c_\mu^{-1} d P_T^{1/3} T^{2/3}\big)\), which is better in its dependence on \(T\), though not on \(d\) or \(P_T\). 
However, it relies on an MLE-based \textit{restart} meta-strategy and therefore does not provide the same constant per-round computation and memory guarantees.\footnote{The meta-algorithm \texttt{MASTER}~\citep{wei2021non} alone incurs \(\BigO(\log t)\) computation and memory overhead per round.}
Moreover, recent evidence suggests that \texttt{MASTER} can be ineffective in practice~\citep{gerogiannis2025prior}.
Therefore, our result matches the best-known regret bound among discount-based methods, while achieving substantially better computational efficiency.
The proof is deferred to Appendix~\ref{app_sec:proof_thm:regret}.

Note that Theorem~\ref{thm:regret} (and Theorem~\ref{thm:regret_piecewise}) explicitly enforce the admissibility constraint on \(\gamma\).
By contrast, prior works~\citep{faury2021regret, wang2023revisiting} present tuned choices of \(\gamma\) without explicit clipping.
These choices can become infeasible when \(P_T\) is large, so their stated tuned bounds effectively apply only in an implicit sublinear-\(P_T\) regime.
If the admissibility of \(\gamma\) were treated explicitly, as in our analysis, the corresponding tuned regret bounds would also take a three-case form (see Remark~\ref{remark:clip_gamma}).

\subsection{Dynamic Regret Bound under Piecewise-Stationary Setting}
\label{subsec:piecewise_regret}
In this subsection, we establish a regret bound for the piecewise-stationary nonstationary setting.
\begin{theorem}
\label{thm:regret_piecewise}
In the piecewise-stationary setting, under the same parameter choices and assumptions as in Theorem~\ref{thm:regret}, for every \(\gamma\in[1/T,1)\), with probability at least \(1-\delta\), \AlgName{} satisfies
\[
\Regret
=
\BigOTilde\!\left(
\frac{k_\mu}{\sqrt{c_\mu}}\,d\sqrt T
+
\frac{k_\mu}{\sqrt{c_\mu}}\,dT\sqrt{1-\gamma}
+
\frac{\Gamma_T}{1-\gamma}
\right).
\]
\vspace{-0.2cm}
Moreover, for the tuned choice
$\gamma
=
1-
\min\left\{
1-\frac1T,\;
\max\left\{
\frac1T,\;
\left(
\frac{\Gamma_T\sqrt{c_\mu}}{k_\mu dT}
\right)^{2/3}
\right\}
\right\},$
we obtain
\[
\Regret
=
\begin{cases}
\BigOTilde\!\left(
\frac{k_\mu}{\sqrt{c_\mu}}\,d\sqrt T
\right),
&\text{if }0\le \Gamma_T< \dfrac{k_\mu d}{\sqrt{c_\mu T}},
\\[1.4ex]
\BigOTilde\!\left(
\frac{k_\mu^{2/3}}{c_\mu^{1/3}}\,
d^{2/3}\Gamma_T^{1/3}T^{2/3}
\right),
&\text{if }
\dfrac{k_\mu d}{\sqrt{c_\mu T}}
\le
\Gamma_T
\le
\dfrac{k_\mu dT}{\sqrt{c_\mu}}
\left(1-\dfrac1T\right)^{3/2},
\\[1.8ex]
\BigOTilde\!\left(
\Gamma_T
\right),
&\text{if }
\Gamma_T>
\dfrac{k_\mu dT}{\sqrt{c_\mu}}
\left(1-\dfrac1T\right)^{3/2}.
\end{cases}
\]
\end{theorem}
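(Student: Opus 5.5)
The plan is to reuse the drifting-case machinery and handle the change points separately, in the style of sliding-window/discounted analyses of piecewise-stationary bandits. Fix $D := \lceil \log(T)/(1-\gamma)\rceil$, an effective memory length. Call a round $t$ \emph{good} if no change point lies in $[t-D, t-1]$, i.e., $\theta_s^\star = \theta_t^\star$ for all $s \in [t-D, t]$. Call it \emph{bad} otherwise. Each change point spoils at most $D$ subsequent rounds, so there are at most $\Gamma_T D = \tilde{\mathcal O}(\Gamma_T/(1-\gamma))$ bad rounds. By Assumption~\ref{assum:boundedness} each round contributes per-round regret at most $R$ (or $2k_\mu S$), which gives the $\Gamma_T/(1-\gamma)$ term.

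For good rounds, we invoke Theorem~\ref{thm:DOMD_confidence}. All increments $\theta_{p+1}^\star-\theta_p^\star$ with $p \ge t-D$ vanish, so the drift measure satisfies
\[
\mathcal V_t \le \sum_{p \le t-D-1} \gamma^{(t-1-p)/2}(1-\gamma)^{-1/2}\,2S\,\mathbb 1\{\theta^\star_{p+1}\neq\theta^\star_p\}.
\]
Each such term carries a factor $\gamma^{D/2} \le T^{-1/2}$, so $\mathcal V_t \le 2S\Gamma_T T^{-1/2}(1-\gamma)^{-1/2}$. This is at most a polylog quantity, and more crudely it can be absorbed using $\gamma \ge 1/T$ together with a slightly larger $D$ such as $D=\lceil 2\log(\Gamma_T T)/(1-\gamma)\rceil$. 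Thus on good rounds $\theta_t^\star\in\mathcal C_t(\delta)$ with radius $\beta_t(\delta)+o(1)$, and the drift part of the radius is negligible.

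On good rounds we then reproduce the per-round argument from the proof of Theorem~\ref{thm:regret}. By optimism of rule~\eqref{eq:discounted_ucb_rule} up to the small $\iota\mathcal V_t$ slack, the instantaneous regret is bounded by $k_\mu\cdot 2(\beta_t+\iota\mathcal V_t)\|x_t\|_{H_t^{-1}}$ plus slack. We then use that proof's elliptical-potential bound for the discounted matrix. Since $H_t \succeq \frac{c_\mu}{g(\tau)}\sum\gamma^{t-1-s}x_sx_s^\top+\lambda I$ with $\lambda\ge c_\mu/g(\tau)$, the discounted elliptical potential lemma gives
\[
\sum_t\min\{1,\|x_t\|_{H_t^{-1}}\}\lesssim \sqrt{c_\mu^{-1}g(\tau)}\sqrt{T\,(dT(1-\gamma)+d\log)} .
\]
This yields the first two terms $\frac{k_\mu}{\sqrt{c_\mu}}(d\sqrt T + dT\sqrt{1-\gamma})$, with $\beta_t=\tilde{\mathcal O}(\sqrt d)$. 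These are exactly the first two terms already proved for Theorem~\ref{thm:regret}, so that part can be cited verbatim, summing only over good rounds, which is a subset.

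The tuned bound is then routine balancing of $\frac{k_\mu}{\sqrt{c_\mu}}dT\sqrt{1-\gamma}$ against $\Gamma_T/(1-\gamma)$, which gives $1-\gamma=(\Gamma_T\sqrt{c_\mu}/(k_\mu dT))^{2/3}$ and hence $\tilde{\mathcal O}(k_\mu^{2/3}c_\mu^{-1/3}d^{2/3}\Gamma_T^{1/3}T^{2/3})$. The clipping cases follow from checking when this value falls below $1/T$ (then the $d\sqrt T$ term dominates) or exceeds $1-1/T$ (then $\gamma=1/T$ and the bound is $\tilde{\mathcal O}(\Gamma_T)$). The main obstacle is the good-round step. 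The confidence bound of Theorem~\ref{thm:DOMD_confidence} holds simultaneously for all $t$, but the elliptical-potential argument in the drifting proof sums over all rounds. So I must check that restricting to good rounds still lets the potential argument go through; it does, because $\min\{1,\cdot\}$ terms are nonnegative. I must also make sure that the residual $\iota\mathcal V_t$ contributes only lower-order terms, which is where the choice of $D$ with an extra logarithmic factor matters.
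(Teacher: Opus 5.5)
Your proposal is correct and follows essentially the same route as the paper: good/bad rounds with $D=\tilde{\mathcal O}(1/(1-\gamma))$, at most $\Gamma_T D$ bad rounds each costing at most $R$, drift on good rounds suppressed by $\gamma^{D/2}$, the discounted elliptical potential summed over all rounds, and the clipped tuning. The paper differs only in detail: it sums $\mathcal V_t$ over good rounds by exchanging sums to get $2\gamma^{D/2}P_T/(1-\gamma)^{3/2}$ and puts the constant $8Sk_\mu\iota/\sqrt{\lambda}$ inside the logarithm defining $D$, whereas you bound $\mathcal V_t$ pointwise. One slip: $2S\Gamma_T T^{-1/2}(1-\gamma)^{-1/2}$ is not polylog. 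It can be as large as $2S\Gamma_T$ per round, so your first choice $D=\lceil \log T/(1-\gamma)\rceil$ does not suffice, and the enlarged $D$ with $\log(\Gamma_T T)$ in the numerator is genuinely needed (preferably with the drift-term constant also inside the logarithm).
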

\textbf{Discussion of Theorem~\ref{thm:regret_piecewise}. }\,
In the intermediate-\(\Gamma_T\) regime, Theorem~\ref{thm:regret_piecewise} gives the regret bound \(\BigOTilde\big(c_\mu^{-1/3} d^{2/3}\Gamma_T^{1/3}T^{2/3}\big)\).
Compared with the best-known discounted MLE-based regret bound \(\BigOTilde\big(d^{2/3}\Gamma_T^{1/3}T^{2/3}\big)\), achieved by \texttt{SCB-PW-WeightUCB}~\citep{wang2023revisiting}, our result incurs an additional dependence on \(c_\mu\).
However, that method relies on a non-convex projection step and thus does not generally admit a polynomial-time guarantee.
Moreover, since it is MLE-based, its memory cost grows linearly with \(t\).
From this perspective, the additional \(1/c_\mu\) dependence in our bound can be viewed as the cost of achieving substantially better computation and memory efficiency.
We believe that further improving the dependence on \(1/c_\mu\) while retaining an online estimation approach is nontrivial, and we leave this direction for future work.
The proof is deferred to Appendix~\ref{app_sec:proof_thm:regret_piecewise}.

\section{Numerical Experiments}
\label{sec:experiments}
%
\begin{figure*}[t!]
    \centering
    \includegraphics[clip, trim=0cm 0.0cm 0cm 0.0cm, width=\textwidth]{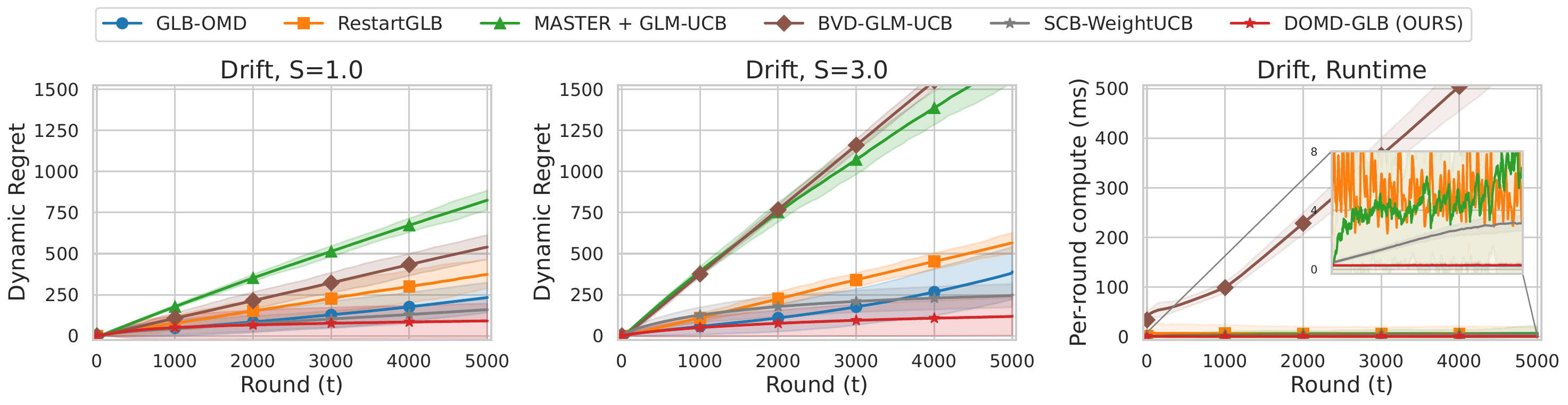}\\[0.5em]
    \includegraphics[clip, trim=0cm 0.0cm 0cm 0.0cm, width=\textwidth]{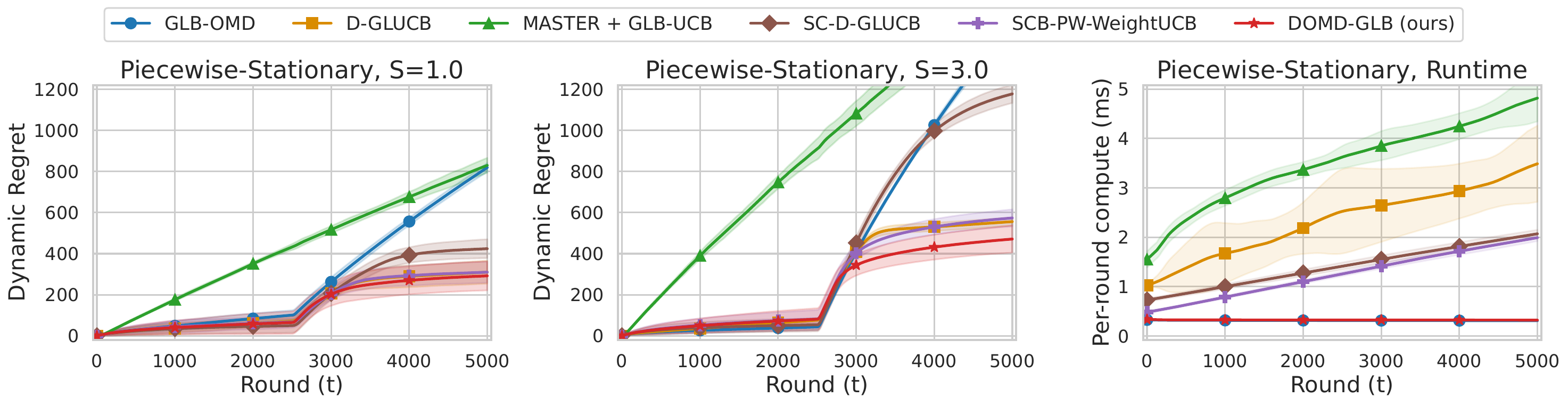}
    \caption{Cumulative dynamic regret and runtime for varying $S$ in two nonstationary settings.}
    \label{fig:experiment_all}
\end{figure*}
%
In this section, we empirically evaluate the proposed algorithm. 
At each round \(t\), the learner is presented with \(N\) arms, whose feature vectors are drawn i.i.d. from a standard Gaussian distribution and normalized to unit \(\ell_2\)-norm. 
After selecting an arm \(x_t\), the learner observes a reward \(r_t \sim \mathrm{Bernoulli}(\sigma(x_t^\top \theta_t))\), where \(\sigma(z)=(1+e^{-z})^{-1}\) and \(\theta_t\) is the time-varying parameter. 
We consider two types of non-stationarity. 
In the \emph{drifting} setting, \(\theta_t\) evolves continuously along a circle of radius \(S\) and completes one full rotation, so the total angular change is \(2\pi\) and the resulting variation budget satisfies \(P_T \le 2S\pi\). 
In the \emph{piecewise-stationary} setting, \(\theta_t\) remains constant within each segment and changes once at \(t=T/2\) through a sign flip \(\theta \leftarrow -\theta\), so \(\Gamma_T=1\). 
Throughout, we set \(T=5000\), \(d=5\), \(N=30\), and \(S \in \{1.0, 3.0\}\). 
We report the mean cumulative dynamic regret and the mean per-round computation time over 20 random seeds, along with one standard deviation for each.

\textbf{Baselines. }\,
For the drifting experiments, we compare \AlgName{} with the following baselines: (a) \texttt{GLB-OMD}~\citep{zhang2025generalized}, a stationary OMD-based algorithm; (b) \texttt{RestartGLB}~\citep{zhao2020simple}, a restart-based algorithm; (c) \texttt{MASTER \!\!+\!\! GLM-UCB}~\citep{wei2021non}, an adaptive restart-based algorithm; (d) \texttt{BVD-GLM-UCB}~\citep{faury2021regret}; and (e) \texttt{SCB-WeightUCB}~\citep{wang2023revisiting}. 
For the piecewise-stationary experiments, we compare \AlgName{} with: (a) \texttt{GLB-OMD}; (b) \texttt{D-GLUCB}~\citep{russac2020algorithms}; (c) \texttt{MASTER \!\!+\!\! GLM-UCB}; (d) \texttt{SC-D-GLUCB}~\citep{russac2021self}; and (e) \texttt{SCB-PW-WeightUCB}~\citep{wang2023revisiting}.
We use the theoretically prescribed values for $\lambda$, \(\gamma\) and the restart period \(H\). 
For all methods, we scale the theoretical confidence radii by \(1/5\), since the original bonuses are too conservative in practice. 
For \texttt{MASTER}, we set the block-scale parameter to \(n=12\).
For algorithms that require non-convex projection steps, and therefore do not admit polynomial-time computational guarantees, such as \texttt{BVD-GLM-UCB}, \texttt{SCB-WeightUCB}, \texttt{D-GLUCB}, and \texttt{SCB-PW-WeightUCB}, we use projected gradient descent as an approximation.

\textbf{Results. }\,
Figure~\ref{fig:experiment_all} shows that \AlgName{} consistently outperforms all other baselines across all settings, while incurring only a constant per-round computational cost.
Although \texttt{MASTER} achieves a theoretically order-optimal regret bound in $T$, our experiments show that its empirical regret remains nearly linear over a reasonably large horizon, consistent with one of the main findings in~\citet{gerogiannis2025prior}.
Moreover, in the piecewise-stationary setting, although \texttt{SCB-PW-WeightUCB} achieves a tighter regret bound than ours by a factor of \(c_{\mu}^{-1/3}\!\), its empirical performance is worse than ours.
This is because its UCB bonus is more conservative, scaling with \(c_{\mu}^{-1/2}\!\) (Equation~(17) in~\citealt{wang2023revisiting}), whereas our UCB bonus has no such harmful dependence (Equation~\eqref{eq:discounted_ucb_rule}).

\section{Conclusion}
\label{sec:conclusion}
In this paper, to the best of our knowledge, we introduce the first algorithm for nonstationary GLBs built on a novel discounted online mirror descent (DOMD) method.
Using this DOMD method, \AlgName{} achieves
$\BigOTilde(
c_\mu^{-1/2}
d^{3/4}P_T^{1/4}T^{3/4}
)$
and
$\BigOTilde(
c_\mu^{-1/3} d^{2/3} \Gamma_T^{1/3}T^{2/3}
)$
dynamic regret bounds in the drifting and piecewise-stationary settings, respectively, while requiring only constant per-round computation and memory costs.
We believe this opens a new and promising research direction toward nonstationary parametric bandit algorithms that are both computationally and statistically efficient.

\section*{Acknowledgments}
\label{sec:acknowledgments}
This work was supported by the National Research Foundation of Korea~(NRF) grant and the Institute of Information \& communications Technology Planning \& Evaluation~(IITP) grant,
both funded by the Korea government~(MSIT) 
(No. RS-2022-NR071853, RS-2023-00222663, 
RS-2025-25463302, 
RS-2026-25507282, 
RS-2025-25420849),
and by the 2025 Global Google PhD Fellowship, with support from Google.org.

\bibliography{main_bib}
\bibliographystyle{plainnat}

\newpage
\appendix
\onecolumn
\counterwithin{table}{section}
\counterwithin{lemma}{section}
\counterwithin{corollary}{section}
\counterwithin{theorem}{section}
\counterwithin{algorithm}{section}
\counterwithin{assumption}{section}
\counterwithin{figure}{section}
\counterwithin{equation}{section}
\counterwithin{condition}{section}
\counterwithin{remark}{section}
\counterwithin{definition}{section}
\counterwithin{proposition}{section}

\addcontentsline{toc}{section}{Appendix} 
\part{Appendix} 
\parttoc 
\section{Proof of Theorem~\ref{thm:DOMD_confidence}}
\label{app_sec:proof_confidence}
In this section, we present the proof of Theorem~\ref{thm:DOMD_confidence}. We first give the main proof (Subsection~\ref{app_subsec:main_proof_thm_confidence}), then provide the proofs of the key lemmas used therein (Subsection~\ref{app_subsec:proof_lemmas_thm_confidence}), and finally present several auxiliary technical lemmas (Subsection~\ref{app_subsec:aux_thm_confidence}).
\subsection{Main Proof of Theorem~\ref{thm:DOMD_confidence}} 
\label{app_subsec:main_proof_thm_confidence}
\begin{proof} [Proof of Theorem~\ref{thm:DOMD_confidence}]
Compared with the stationary analysis of~\citet{zhang2025generalized}, our nonstationary setting requires three essential changes: 
(i) the potential and Gaussian perturbation must be defined using the discounted curvature \(A_t\) rather than the stationary matrix $H_t^{\mathrm{stat}}
:=
\lambda I_d+\sum_{s=1}^{t-1}\nabla^2 \ell_s(\theta_{s+1})$; 
(ii) the discounted exponential process is no longer additive, so Ville's inequality and the resulting anytime concentration argument no longer apply directly (see Remark~\ref{remark:why_fixed}); 
and (iii) the drifting comparator sequence \(\{\theta_t^\star\}_{t \geq 1}\) creates an additional drift term that must be controlled separately.

We begin the proof by stating the one-step discounted inequality lemma.
\begin{lemma}[One-step discounted OMD inequality]
    \label{lemma:onestep}
    Let $\eta = 1 + RS$
    and $\lambda\ge
    \frac{6\eta R k_\mu S}{g(\tau)}$.
    Then, under Assumptions~\ref{assum:boundedness} and~\ref{assum:bounded_link},
    for every $\theta\in\Theta$ and every $t\ge 1$, we have
    \[
        \|\theta_{t+1}-\theta\|_{H_{t+1}}^2
        \le
        \|\theta_t-\theta\|_{A_t}^2
        +
        2\eta\bigl(\ell_t(\theta)-\ell_t(\theta_{t+1})\bigr)
        -
        \frac23\|\theta_{t+1}-\theta_t\|_{A_t}^2.
    \]
    Consequently, for every $\theta\in\Theta$ and every $t\ge 2$,
    \begin{equation*}
    \|\theta_t-\theta\|_{H_t}^2
    \le
    4\lambda S^2
    +
    2\eta\sum_{s=1}^{t-1}\gamma^{t-1-s}\bigl(\ell_s(\theta)-\ell_s(\theta_{s+1})\bigr)
    -
    \frac23\sum_{s=1}^{t-1}\gamma^{t-1-s}\|\theta_{s+1}-\theta_s\|_{A_s}^2.
    \end{equation*}
\end{lemma}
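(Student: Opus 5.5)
The plan is to follow the stationary OMD analysis of \citet{zhang2025generalized}, with the proximal matrix $A_t$ in place of the stationary curvature, and then unroll the resulting one-step inequality using $A_t=\gamma H_t+(1-\gamma)\lambda I_d$.

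\textbf{Step 1 (first-order optimality).} The update \eqref{eq:discounted_omd_update} minimizes a strongly convex quadratic over the convex set $\Theta$. Its optimality condition gives, for every $\theta\in\Theta$,
\[
\bigl\langle \nabla\ell_t(\theta_t)+\nabla^2\ell_t(\theta_t)(\theta_{t+1}-\theta_t)+\eta^{-1}A_t(\theta_{t+1}-\theta_t),\,\theta_{t+1}-\theta\bigr\rangle\le 0 .
\]
The $A_t$ part is handled by the three-point identity
\[
2\langle A_t(\theta_{t+1}-\theta_t),\theta_{t+1}-\theta\rangle=\|\theta_{t+1}-\theta\|_{A_t}^2+\|\theta_{t+1}-\theta_t\|_{A_t}^2-\|\theta_t-\theta\|_{A_t}^2 .
\]
After multiplying by $2\eta$, we get $\|\theta_{t+1}-\theta\|_{A_t}^2+\|\theta_{t+1}-\theta_t\|_{A_t}^2-\|\theta_t-\theta\|_{A_t}^2$ bounded by $2\eta\langle g_t,\theta-\theta_{t+1}\rangle$. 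Here $g_t:=\nabla\ell_t(\theta_t)+\nabla^2\ell_t(\theta_t)(\theta_{t+1}-\theta_t)$, which is the gradient at $\theta_{t+1}$ of the quadratic surrogate.

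\textbf{Step 2 (replacing the surrogate by the true loss).} We need
\[
\langle g_t,\theta-\theta_{t+1}\rangle\le \ell_t(\theta)-\ell_t(\theta_{t+1})-\tfrac{1}{2\eta}\|\theta-\theta_{t+1}\|^2_{\nabla^2\ell_t(\theta_{t+1})}+(\text{error}).
\]
All quantities lie along the direction $x_t$, so everything reduces to a one-dimensional statement about $m$ along $x_t$.
\begin{itemize}
\item Self-concordance (Proposition~\ref{prop:self-concordance}) together with $|x_t^\top(\theta-\theta_{t+1})|\le 2S$ gives a lower bound on the Bregman divergence of $\ell_t$. This is the generalized self-concordant bound of the form $\ell_t(\theta)\ge\ell_t(\theta_{t+1})+\langle\nabla\ell_t(\theta_{t+1}),\theta-\theta_{t+1}\rangle+\frac{1}{2+2RS}\|\theta-\theta_{t+1}\|^2_{\nabla^2\ell_t(\theta_{t+1})}$. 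The factor $1/(2\eta)$ with $\eta=1+RS$ comes from here.
\item The mismatch $\nabla\ell_t(\theta_{t+1})-g_t$ is a second-order Taylor remainder of $\mu$. By self-concordance it is bounded by $\frac{R k_\mu}{g(\tau)}(x_t^\top(\theta_{t+1}-\theta_t))^2$. Pairing it with $|x_t^\top(\theta-\theta_{t+1})|\le 2S$ gives an error of at most $\frac{2\eta RSk_\mu}{g(\tau)}\|\theta_{t+1}-\theta_t\|_2^2$ after multiplying by $2\eta$.
\item Since $A_t\succeq\lambda I_d$ and $\lambda\ge 6\eta Rk_\mu S/g(\tau)$, this error is at most $\frac13\|\theta_{t+1}-\theta_t\|_{A_t}^2$. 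That leaves the $-\frac23$ coefficient on the movement term.
\end{itemize}
The quadratic term $\|\theta-\theta_{t+1}\|^2_{\nabla^2\ell_t(\theta_{t+1})}$ moves to the left-hand side. Combined with $\|\theta_{t+1}-\theta\|_{A_t}^2$, it produces exactly $\|\theta_{t+1}-\theta\|^2_{H_{t+1}}$, since $H_{t+1}=A_t+\nabla^2\ell_t(\theta_{t+1})$. This proves the one-step inequality.

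\textbf{Step 3 (unrolling).} Substitute $\|\theta_t-\theta\|_{A_t}^2=\gamma\|\theta_t-\theta\|_{H_t}^2+(1-\gamma)\lambda\|\theta_t-\theta\|_2^2$. The second term is at most $(1-\gamma)\lambda\cdot 4S^2$. Iterating from $t$ down to $1$ multiplies each round-$s$ contribution by $\gamma^{t-1-s}$. The base term is $\gamma^{t-1}\lambda\|\theta_1-\theta\|^2\le\gamma^{t-1}4\lambda S^2$. The accumulated norm terms sum to $4\lambda S^2(1-\gamma)\sum_k\gamma^k$, and together with the base term the total is at most $4\lambda S^2$. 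This gives the stated consequence.

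\textbf{Main obstacle.} I expect Step 2 to be the hardest: obtaining the self-concordant lower bound with the precise constant $1/(2\eta)$ while controlling the surrogate-gradient mismatch within the $\frac13$ budget. I would adapt the corresponding lemma of \citet{zhang2025generalized}, checking that only $A_t\succeq\lambda I_d$ is used, which still holds under discounting.
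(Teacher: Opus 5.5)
Your proposal is correct and follows the paper's proof essentially step for step. It uses the optimality condition of the quadratic subproblem with the three-point identity in $A_t$, the self-concordant Bregman lower bound with constant $1/(2\eta)$, and a Taylor expansion of the surrogate-gradient mismatch absorbed into $\frac13\|\theta_{t+1}-\theta_t\|_{A_t}^2$ via $\lambda\ge 6\eta Rk_\mu S/g(\tau)$, followed by the same unrolling with $A_t=\gamma H_t+(1-\gamma)\lambda I_d$.

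One small fix: the Lagrange remainder bounds the mismatch by $\frac{Rk_\mu}{2g(\tau)}\bigl(x_t^\top(\theta_{t+1}-\theta_t)\bigr)^2$, not by $\frac{Rk_\mu}{g(\tau)}(\cdot)^2$. That factor $\frac12$ is what gives your (correct) final error $\frac{2\eta RSk_\mu}{g(\tau)}\|\theta_{t+1}-\theta_t\|_2^2$; without it the $\frac13$ budget would not close under the stated $\lambda$.
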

The proof is provided in Appendix~\ref{app_subsubsec:proof_lemma:onestep}.

By applying Lemma~\ref{lemma:onestep} with $\theta=\theta_t^\star$,
we get
\begin{align}
    \|\theta_t-\theta_t^\star\|_{H_t}^2
    \le
    4\lambda S^2
    +
    2\eta\sum_{s=1}^{t-1}\gamma^{t-1-s}
    \bigl(\ell_s(\theta_t^\star)-\ell_s(\theta_{s+1})\bigr)
    -
    \frac23\sum_{s=1}^{t-1}\gamma^{t-1-s}\|\theta_{s+1}-\theta_s\|_{A_s}^2.
    \label{eq:onestep}
\end{align}
For each round \(t\), let
$\Pi_t := \mathcal N(\theta_t,\alpha A_t^{-1})$,
where $\alpha=3\eta/2$, 
denote the Gaussian distribution centered at the current iterate \(\theta_t\), with covariance matrix \(\alpha A_t^{-1}\).
Associated with this Gaussian perturbation, we define the mix loss as follows:
\[
m_t(\Pi_t)
:=
-\log \EE_{\theta\sim \Pi_t}\!\left[e^{-\ell_t(\theta)}\right].
\]
Now, we decompose $\sum_{s=1}^{t-1}\gamma^{t-1-s}
    \bigl(\ell_s(\theta_t^\star)-\ell_s(\theta_{s+1})\bigr)$ in~\eqref{eq:onestep} as follows:
\begin{align*}
    &\sum_{s=1}^{t-1}\!\gamma^{t-1-s}
    \bigl(\ell_s(\theta_t^\star)-\ell_s(\theta_{s+1})\bigr)
    \\
    &\!=\sum_{s=1}^{t-1}\!\gamma^{t-1-s}
    \bigl(\ell_s(\theta_s^\star)-m_s(\Pi_s)\bigr)
    \!+\!\sum_{s=1}^{t-1}\!\gamma^{t-1-s}
    \bigl(m_s(\Pi_s)-\ell_s(\theta_{s+1})\bigr)
    \!+\!\sum_{s=1}^{t-1}\!\gamma^{t-1-s}
    \bigl(\ell_s(\theta_t^\star)-\ell_s(\theta_s^\star)\bigr).
    \numberthis
    \label{eq:onestep_decomposition}
\end{align*}
We next present three lemmas that bound the above three terms.
%
\begin{lemma}[Fixed-time discounted mix-loss concentration]
    \label{lemma:weighted-exp}
    Define $\Pi_t := \mathcal N(\theta_t,\alpha A_t^{-1})$ and
    $m_t(\Pi_t)
    :=
    -\log \EE_{\theta\sim \Pi_t}\!\left[e^{-\ell_t(\theta)}\right]$.
    Then, for any fixed $t\ge 2$, with probability at least $1-\delta$, we have
    \[
        \sum_{s=1}^{t-1}\gamma^{t-1-s}\bigl(\ell_s(\theta_s^\star)-m_s(\Pi_s)\bigr)
        \le
        \log\frac{1}{\delta}.
    \]
\end{lemma}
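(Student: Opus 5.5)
The plan follows the fixed-time exponential argument sketched in the main text. Fix \(t\ge 2\), write \(w_{s,t}:=\gamma^{t-1-s}\in(0,1]\) and \(Z_s:=\ell_s(\theta_s^\star)-m_s(\Pi_s)\). The goal is to show \(\EE\bigl[\exp\bigl(\sum_{s=1}^{t-1}w_{s,t}Z_s\bigr)\bigr]\le 1\). Markov's inequality then gives
\[
\Pr\Bigl[\sum_{s=1}^{t-1} w_{s,t}Z_s>\log(1/\delta)\Bigr]\le\delta,
\]
which is the claim.

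\textbf{Step 1: the filtration and the likelihood-ratio identity.} Let \(\widetilde{\mathcal F}_s\) be the \(\sigma\)-algebra generated by \(x_1,r_1,\dots,x_{s-1},r_{s-1}\) together with \(x_s\). Both \(\theta_s\) and \(A_s\) are built only from data up to round \(s-1\), so \(\Pi_s\) is \(\widetilde{\mathcal F}_s\)-measurable. The path \(\{\theta_u^\star\}\) is fixed exogenously, so \(\theta_s^\star\) is deterministic. Since \(\ell_s\) is the negative log-likelihood, \(e^{-\ell_s(\theta)}=p_\theta(r_s\mid x_s)\,e^{h(r_s,\tau)}\). The factor \(e^{h(r_s,\tau)}\) does not depend on \(\theta\), so it cancels in the ratio and
\[
e^{Z_s}=\frac{\EE_{\theta\sim\Pi_s}[p_\theta(r_s\mid x_s)]}{p_{\theta_s^\star}(r_s\mid x_s)}.
\]
Here \(r_s\mid\widetilde{\mathcal F}_s\) has density \(p_{\theta_s^\star}(\cdot\mid x_s)\) with respect to the base measure. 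Applying Fubini (or Tonelli, as everything is nonnegative) gives
\[
\EE[e^{Z_s}\mid\widetilde{\mathcal F}_s]=\EE_{\theta\sim\Pi_s}\Bigl[\int p_\theta(r\mid x_s)\,dr\Bigr]=1,
\]
because every \(p_\theta\) is a normalized density; for \(\theta\) outside \(\Theta\) it is still normalized, since \(m\) is the log-partition function on its domain. If the Gaussian puts mass on \(\theta\) for which \(m(x_s^\top\theta)\) is not finite, the identity only holds as an inequality \(\le 1\), and that is all the argument needs.

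\textbf{Step 2: discounting via Jensen.} Since \(w_{s,t}\in(0,1]\), the map \(y\mapsto y^{w_{s,t}}\) is concave on \(\RR_+\). Conditional Jensen then gives
\[
\EE[e^{w_{s,t}Z_s}\mid\widetilde{\mathcal F}_s]\le\bigl(\EE[e^{Z_s}\mid\widetilde{\mathcal F}_s]\bigr)^{w_{s,t}}\le 1.
\]
The weights \(w_{s,t}\) are deterministic once \(t\) is fixed, and this is exactly why the argument is carried out at fixed \(t\): the process indexed by the varying \(t\) is not a supermartingale (Remark~\ref{remark:why_fixed}).

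\textbf{Step 3: peeling.} Set \(M_k:=\exp\bigl(\sum_{s=1}^{k}w_{s,t}Z_s\bigr)\). The variable \(Z_s\) depends on \(r_s\), and \(r_s\) is \(\widetilde{\mathcal F}_{s+1}\)-measurable, so \(M_{k-1}\) is \(\widetilde{\mathcal F}_k\)-measurable. The tower property and Step 2 give \(\EE[M_k]=\EE\bigl[M_{k-1}\EE[e^{w_{k,t}Z_k}\mid\widetilde{\mathcal F}_k]\bigr]\le\EE[M_{k-1}]\). Iterating down to \(M_0=1\) yields \(\EE[M_{t-1}]\le1\), and Markov's inequality finishes the proof.

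The main obstacle is Step 1: one has to check measurability carefully (that \(\Pi_s\) does not depend on \(r_s\)), use the obliviousness of \(\theta_s^\star\), and justify the Fubini interchange for an unbounded Gaussian mixture. The remaining steps are routine.
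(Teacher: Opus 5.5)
Your proposal is correct and follows the paper's proof essentially step for step: the likelihood-ratio identity $\EE[e^{Z_s}\mid\widetilde{\mathcal F}_s]=1$, conditional Jensen for the concave map $y\mapsto y^{w_{s,t}}$, peeling via the tower property, and then Markov's inequality. There are two cosmetic differences. The paper enlarges the filtration by $\sigma(\theta_u^\star : u\ge 1)$ instead of treating the path as deterministic. Also, $e^{-\ell_s(\theta)}$ equals $p_\theta(r_s\mid x_s)\,e^{-h(r_s,\tau)}$ rather than $e^{+h(r_s,\tau)}$; this sign slip is harmless because the factor cancels in the ratio.
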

The proof is provided in Appendix~\ref{app_subsubsec:proof_lemma:weighted-exp}.
\begin{remark}[Why the discounted case only yields a fixed-time bound]
\label{remark:why_fixed}
Our exponential argument differs from Lemma 5 of~\citet{zhang2025generalized}. 
In the undiscounted setting of~\citet{zhang2025generalized}, the cumulative process is additive in time, which allows one to construct a nonnegative supermartingale and then apply Ville's inequality~\citep{ville1939etude} to obtain an anytime bound. 
By contrast, in our discounted setting, for a fixed terminal time $t$ we consider $\sum_{s=1}^{t-1}\gamma^{t-1-s} Z_s$, where $Z_s := \ell_s(\theta_s^\star)-m_s(\Pi_s)$. 
For each fixed $t$, one can show that $\EE[e^{\sum_{s=1}^{t-1}\gamma^{t-1-s} Z_s}] \le 1$, and hence a fixed-time high-probability bound follows by Markov's inequality. 
However, the process $(e^{\sum_{s=1}^{t-1}\gamma^{t-1-s} Z_s})_{t\ge 2}$ is generally not a supermartingale, so Ville's inequality cannot be applied.
To see this, define $S_t:= \sum_{s=1}^{t-1}\gamma^{t-1-s} Z_s$.
Then, the obstruction is the discounted recursion $S_{t+1} = \gamma S_t + Z_t$, which gives $e^{S_{t+1}} = e^{\gamma S_t} e^{Z_t}$. 
Since $\EE[e^{Z_t}| \widetilde{\mathcal F}_t]=1$, we get $\EE[e^{S_{t+1}}| \widetilde{\mathcal F}_t] = e^{\gamma S_t}$. 
A supermartingale argument would require $\EE[e^{S_{t+1}}| \widetilde{\mathcal F}_t] \le e^{S_t}$, but this is not guaranteed because $\gamma<1$ and $S_t$ need not be nonnegative.
Therefore, unlike Lemma 5 of~\citet{zhang2025generalized}, our discounted analysis yields concentration only for each fixed $t$, rather than an anytime bound uniform over all $t$.
\end{remark}
\begin{lemma}[Discounted mix-loss gap bound]
    \label{lemma:weighted-mix-gap}
    Define $\Pi_t := \mathcal N(\theta_t,\alpha A_t^{-1})$ and
    $m_t(\Pi_t)
    :=
    -\log \EE_{\theta\sim \Pi_t}\!\left[e^{-\ell_t(\theta)}\right]$.
    Let $\lambda \geq 32 \alpha d R^2 /7$.
    Then, for every $t\ge 2$,
    under Assumption~\ref{assum:boundedness}, 
    we have
    \[
    \begin{aligned}
    \sum_{s=1}^{t-1}\gamma^{t-1-s}\bigl(m_s(\Pi_s)-\ell_s(\theta_{s+1})\bigr)
    &\le
    \left(3\eta+\frac12\right)
    d\log\!\left(
    1+\frac{k_\mu(1-\gamma^{t-1})}{\lambda d\,g(\tau)(1-\gamma)}
    \right)
    \\
    &\quad+
    \frac{1}{3\eta}\sum_{s=1}^{t-1}\gamma^{t-1-s}\|\theta_{s+1}-\theta_s\|_{A_s}^2 
    .
    \end{aligned}
    \]
\end{lemma}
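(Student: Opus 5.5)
We follow the per-round mix-loss gap argument of \citet{zhang2025generalized} and \citet{zhang2024online}, but with the Gaussian $\Pi_s=\mathcal N(\theta_s,\alpha A_s^{-1})$ built on the pre-update matrix. Fix $s$. The loss $\ell_s(\theta)$ depends on $\theta$ only through $z=x_s^\top\theta$, and it is self-concordant in the sense of Proposition~\ref{prop:self-concordance}. So we compare $\ell_s$ with the quadratic surrogate $\widetilde\ell_s$ and write
\[
m_s(\Pi_s)\le -\log\EE_{\theta\sim\Pi_s}\bigl[e^{-\widetilde\ell_s(\theta)-\varepsilon_s(\theta)}\bigr],
\]
where $\varepsilon_s(\theta)$ is the third-order remainder. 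Under $\Pi_s$ the scalar $x_s^\top(\theta-\theta_s)$ is Gaussian with variance $\alpha\|x_s\|_{A_s^{-1}}^2\le\alpha/\lambda$. The remainder is controlled by $R\,\mu'\,|x_s^\top(\theta-\theta_s)|^3$, and the condition $\lambda\ge 32\alpha dR^2/7$ is exactly what makes the Gaussian moments of this cubic term integrable. After absorbing them into a constant multiple of the quadratic term, the remaining expectation is a Gaussian integral of an exponentiated quadratic, which we compute in closed form. This gives
\[
m_s(\Pi_s)\le \widetilde\ell_s(\theta_{s+1})+\frac{1}{2\alpha}\|\theta_{s+1}-\theta_s\|_{A_s}^2+\frac12\log\frac{\det\bigl(A_s+\alpha c\,\nabla^2\ell_s(\theta_s)\bigr)}{\det A_s}
\]
for an absolute constant $c$. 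The derivation minimizes the completed square over $\theta$ and then evaluates it at $\theta_{s+1}$, which is allowed because the Gaussian integral is bounded by the value of the exponent at any point plus the log-determinant.

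Next we replace $\widetilde\ell_s(\theta_{s+1})$ by $\ell_s(\theta_{s+1})$. The gap $\widetilde\ell_s(\theta_{s+1})-\ell_s(\theta_{s+1})$ is again a self-concordance remainder, and we bound it using $|x_s^\top(\theta_{s+1}-\theta_s)|\le 2S$ together with $\eta=1+RS$. We also replace $\nabla^2\ell_s(\theta_s)$ by $\nabla^2\ell_s(\theta_{s+1})$ inside the determinant, since the two Hessians differ by a factor of at most $e^{R|x_s^\top(\theta_{s+1}-\theta_s)|}$. Both replacements cost constant multiples of $\|\theta_{s+1}-\theta_s\|_{A_s}^2$ and inflate the Hessian coefficient; this is where the factor $3\eta$ enters the log-det term. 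With $\alpha=3\eta/2$, the movement coefficient becomes $1/(3\eta)$. The result is the per-round bound
\[
m_s(\Pi_s)-\ell_s(\theta_{s+1})\le\Bigl(3\eta+\tfrac12\Bigr)\log\frac{\det H_{s+1}}{\det A_s}+\frac{1}{3\eta}\|\theta_{s+1}-\theta_s\|_{A_s}^2 ,
\]
where we use $H_{s+1}=A_s+\nabla^2\ell_s(\theta_{s+1})$ and the monotonicity of $\log\det$ in the scaling of the added rank-one term.

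Finally we multiply by $\gamma^{t-1-s}$ and sum over $s$; the movement terms give the second term of the lemma directly. This summation is the main obstacle, because the log-determinant ratios do not telescope: $A_{s+1}=\gamma H_{s+1}+(1-\gamma)\lambda I\ne H_{s+1}$. Our first attempt is a discounted elliptical-potential argument. Write $\log\frac{\det H_{s+1}}{\det A_s}=\log(1+\|\tilde x_s\|_{A_s^{-1}}^2)$ with $\tilde x_s=\sqrt{\mu'(x_s^\top\theta_{s+1})/g(\tau)}\,x_s$, and bound it by $\log\det\bigl(I+\tilde x_s\tilde x_s^\top A_s^{-1}\bigr)$. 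Then apply the standard trick for discounted designs from \citet{russac2019weighted}: for fixed $t$, compare $A_s$ with the matrix $\gamma^{-(t-1-s)}$-rescaled to a common reference $W=\lambda I+\sum_{s}\gamma^{t-1-s}\tilde x_s\tilde x_s^\top$ (up to the $\lambda$ terms, which only help because $A_s\succeq\lambda I$). This yields
\[
\sum_s\gamma^{t-1-s}\log\bigl(1+\|\tilde x_s\|_{A_s^{-1}}^2\bigr)\le\log\frac{\det W}{\lambda^d}\le d\log\Bigl(1+\frac{k_\mu(1-\gamma^{t-1})}{\lambda d\,g(\tau)(1-\gamma)}\Bigr)
\]
by AM–GM on the trace, using $\mu'\le k_\mu$ and $\sum_s\gamma^{t-1-s}=\frac{1-\gamma^{t-1}}{1-\gamma}$. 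If the direct rescaling comparison fails, our fallback is the concavity bound $\gamma^{t-1-s}\log(1+u)\le\log(1+\gamma^{t-1-s}u)$, after which the sum telescopes against the weighted Gram matrix $\lambda I+\sum_{r\le s}\gamma^{t-1-r}\tilde x_r\tilde x_r^\top\preceq\gamma^{t-1-s}A_s+\dots$, which holds because $A_s\succeq\lambda I$ dominates the missing regularization.
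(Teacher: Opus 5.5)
Your per-round bound has a genuine gap. Your route expands \(\ell_s\) around \(\theta_s\) and then needs the exponent to be exactly quadratic so that the Gaussian integral has a closed form. But \(\Pi_s\) has full support on \(\RR^d\). With \(u=x_s^\top(\theta-\theta_s)\), self-concordance only gives \(\ell_s(\theta)-\widetilde\ell_s(\theta)\le \frac{\mu'(x_s^\top\theta_s)}{g(\tau)}u^2\int_0^1(1-\nu)(e^{R\nu|u|}-1)\,d\nu\). No absolute multiple of \(\|\theta-\theta_s\|^2_{\nabla^2\ell_s(\theta_s)}\) dominates this for all \(\theta\). For the logistic loss at \(x_s^\top\theta_s=-S\), the remainder at \(u=S\) is about \(\log 2\), while \(\mu'(-S)S^2\approx S^2e^{-S}\).

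The condition \(\lambda\ge 32\alpha dR^2/7\) is also not about integrability of cubic moments, which are always finite. It controls the exponential moment \(\EE[e^{R^2\|W\|_2^2/2}]\le 4/3\) (Lemma~\ref{lemma:gauss-exp}). That bound can only be used inside an expectation, after a change of measure, not pointwise in the exponent.

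The later replacements fail quantitatively too:
\begin{itemize}
\item Swapping \(\nabla^2\ell_s(\theta_s)\) for \(\nabla^2\ell_s(\theta_{s+1})\) using \(|x_s^\top(\theta_{s+1}-\theta_s)|\le 2S\) costs a factor up to \(e^{2RS}\), which is exponential in \(S\). So the log-det coefficient \(3\eta+\frac12\) does not follow.
\item The Gaussian step already produces \(\frac{1}{2\alpha}\|\theta_{s+1}-\theta_s\|_{A_s}^2=\frac{1}{3\eta}\|\theta_{s+1}-\theta_s\|_{A_s}^2\). You then add further positive multiples: the surrogate gap, which also needs a lower bound on \(\lambda\) involving \(k_\mu S\) that is not a hypothesis here, and the absorbed quadratic evaluated at \(\theta_{s+1}\). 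The movement coefficient therefore ends up strictly larger than \(\frac{1}{3\eta}\).
\end{itemize}
That constant is essential. In the proof of Theorem~\ref{thm:DOMD_confidence}, \(2\eta\cdot\frac{1}{3\eta}=\frac23\) must exactly cancel the \(-\frac23\) movement term of Lemma~\ref{lemma:onestep}.

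The missing idea is the variational bound \(m_s(\Pi_s)\le \EE_{Q_s}[\ell_s(\theta)]+\KL(Q_s\|\Pi_s)\) (Lemma~\ref{lemma:mixlog}) with the comparator \(Q_s=\mathcal N(\theta_{s+1},\alpha H_{s+1}^{-1})\). Centering at \(\theta_{s+1}\) kills the linear term, so \(\EE_{Q_s}[\ell_s]-\ell_s(\theta_{s+1})\) is the expected Bregman divergence at \(\theta_{s+1}\). The Hessian \(\nabla^2\ell_s(\theta_{s+1})\) then appears directly, with no surrogate and no Hessian swap. Self-concordance plus Cauchy--Schwarz bounds this term by \(2\alpha\log\frac{\det H_{s+1}}{\det A_s}\) (Lemma~\ref{lemma:local-gb}), using the exponential moment at most \(4/3\) and the rank-one fourth moment \(3\alpha^2\Tr(H_{s+1}^{-1}\nabla^2\ell_s(\theta_{s+1}))^2\). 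The Gaussian KL gives \(\frac12\log\frac{\det H_{s+1}}{\det A_s}+\frac{1}{2\alpha}\|\theta_{s+1}-\theta_s\|_{A_s}^2\), since \(\Tr(A_sH_{s+1}^{-1})\le d\). With \(\alpha=3\eta/2\), this is exactly your per-round target.

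Your summation step in its fallback form is correct, and it is a genuine alternative to the paper's argument. The paper uses concavity of \(\log\det\) on \(A_s=\gamma H_s+(1-\gamma)\lambda I_d\) and telescopes \(\sum_s\gamma^{t-1-s}(L_{s+1}-\gamma L_s)=L_t-\gamma^{t-1}L_1\). In your version, set \(W_s:=\lambda I_d+\sum_{r<s}\gamma^{t-1-r}\tilde x_r\tilde x_r^\top\). Then \(W_s\preceq A_s\) because \(\gamma^{s-r}\ge\gamma^{t-1-r}\) and the \(\lambda I_d\) terms coincide. Bernoulli's inequality then telescopes to \(\log\det H_t-d\log\lambda\), the same quantity. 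Your first, rescaling attempt points the wrong way: \(\gamma^{t-1-s}A_s\preceq W_s\), which lower-bounds rather than upper-bounds \(\|\tilde x_s\|_{A_s^{-1}}\).
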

The proof is provided in Appendix~\ref{app_subsubsec:proof_lemma:weighted-mix-gap}.
\begin{lemma}[Drift-loss bound]
\label{lemma:drift-loss}
    Under Assumptions~\ref{assum:boundedness} and~\ref{assum:bounded_link}, for \(t\ge 2\), with probability at least \(1-\delta\), we have
    \[
    \sum_{s=1}^{t-1}\gamma^{t-1-s}
    \bigl(\ell_s(\theta_t^\star)-\ell_s(\theta_s^\star)\bigr)
    \le
    \frac{k_\mu}{g(\tau)}\,\mathcal V_t^2
    +
    \frac{R^2}{4\,g(\tau)\,k_\mu}\log\frac{1}{\delta},
    \]
    where $\mathcal V_t
    =
    \sum_{p=1}^{t-1}
    \gamma^{(t-1-p)/2}
    \sqrt{\frac{1-\gamma^p}{1-\gamma}} \| \theta^\star_{p+1} - \theta^\star_p \|_2$.
\end{lemma}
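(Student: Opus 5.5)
We decompose each summand exactly as in the proof sketch. Write $\eta_s := r_s - \mu(x_s^\top\theta_s^\star)$ and $\Delta_s := x_s^\top(\theta_t^\star-\theta_s^\star)$. Then
\[
\ell_s(\theta_t^\star)-\ell_s(\theta_s^\star)
=
\frac{m(x_s^\top\theta_t^\star)-m(x_s^\top\theta_s^\star)-\mu(x_s^\top\theta_s^\star)\Delta_s}{g(\tau)}
-\frac{\eta_s\Delta_s}{g(\tau)} .
\]
Both $x_s^\top\theta_t^\star$ and $x_s^\top\theta_s^\star$ lie in $[-S,S]$. By Taylor's theorem and Assumption~\ref{assum:bounded_link}, the first term is therefore at most $\frac{k_\mu}{2g(\tau)}\Delta_s^2$. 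Summing with weights gives a deterministic part bounded by $\frac{k_\mu}{2g(\tau)}Q_t$, where $Q_t:=\sum_{s=1}^{t-1}\gamma^{t-1-s}\Delta_s^2$.

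\textbf{Bounding $Q_t$ by $\mathcal V_t^2$.} Since $\|x_s\|_2\le1$, we have $|\Delta_s|\le\sum_{p=s}^{t-1}\|\theta_{p+1}^\star-\theta_p^\star\|_2 =: \sum_{p\ge s} d_p$. Then $\sqrt{Q_t}$ is the weighted $\ell_2$ norm of the vector $(\sum_{p\ge s} d_p)_s$ with weights $\gamma^{t-1-s}$. Minkowski's inequality, applied to the sum over $p$, gives
\[
\sqrt{Q_t}
\le
\sum_{p=1}^{t-1} d_p\Bigl(\sum_{s=1}^{p}\gamma^{t-1-s}\Bigr)^{1/2}
=
\sum_{p} d_p\,\gamma^{(t-1-p)/2}\sqrt{\tfrac{1-\gamma^p}{1-\gamma}}
=
\mathcal V_t .
\]
This is exactly where the form of $\mathcal V_t$ comes from. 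Hence $Q_t\le\mathcal V_t^2$.

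\textbf{Stochastic part.} Fix $t$. Because the path $\{\theta_u^\star\}$ is oblivious, the coefficients $c_s:=\gamma^{t-1-s}\Delta_s/g(\tau)$ are $\mathcal F_{s-1}\vee\sigma(x_s)$-measurable. Conditionally, $\eta_s$ has mean zero and lies in an interval of length $R$. Hoeffding's lemma gives, for any $\rho>0$,
\[
\EE\bigl[e^{-\rho c_s\eta_s}\mid \cdot\bigr]\le e^{\rho^2c_s^2R^2/8}.
\]
Iterating over $s$ shows that $\exp\bigl(-\rho\sum_s c_s\eta_s-\frac{\rho^2R^2}{8}\sum_s c_s^2\bigr)$ has expectation at most $1$. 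Markov's inequality then yields, with probability $\ge1-\delta$,
\[
-\sum_s c_s\eta_s\le \frac{\rho R^2}{8}\sum_s c_s^2+\frac{1}{\rho}\log\frac1\delta .
\]
Since $\gamma^{2(t-1-s)}\le\gamma^{t-1-s}$, we have $\sum_s c_s^2\le Q_t/g(\tau)^2$.

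We choose a fixed, deterministic $\rho = 4g(\tau)k_\mu/R^2$, which is legitimate because it does not depend on the data. The bound becomes
\[
\frac{k_\mu}{2g(\tau)}Q_t+\frac{R^2}{4g(\tau)k_\mu}\log\frac1\delta .
\]
Adding the deterministic part and using $Q_t\le\mathcal V_t^2$ gives $\frac{k_\mu}{g(\tau)}\mathcal V_t^2+\frac{R^2}{4g(\tau)k_\mu}\log\frac1\delta$, as claimed.

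\textbf{Main obstacle.} The delicate step is the measurability/predictability needed for the Chernoff iteration. The weights involve $\theta_t^\star$ at the terminal time, which is fine only because the drift path is fixed in advance (oblivious). Together with $x_s$ being chosen before $r_s$, this makes $c_s$ predictable. The Minkowski step that produces $\mathcal V_t$ is the other place where care is needed.
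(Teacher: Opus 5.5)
Your proof is correct and follows the paper's argument essentially step for step. It uses the same split into a Taylor-bounded deterministic part and a martingale part, the same triangle-inequality (Minkowski) compression giving $\sum_{s}\gamma^{t-1-s}\bigl(x_s^\top(\theta_t^\star-\theta_s^\star)\bigr)^2\le\mathcal V_t^2$, and the same Hoeffding-lemma exponential bound.

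The one difference is that you fix $\rho=4g(\tau)k_\mu/R^2$ before applying Markov. This merges the paper's ``optimize $\bar\lambda$, then apply Young's inequality'' into a single step. It also sidesteps the paper's choice $\bar\lambda^\star\propto 1/\sum_s c_{s,t}^2$, which depends on the adaptively chosen $x_s$ and so is not a legitimate fixed Chernoff parameter as written. The paper's step is repaired by first replacing $\sum_s c_{s,t}^2$ with its deterministic upper bound $\mathcal V_t^2$.
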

The proof is provided in Appendix~\ref{app_subsubsec:proof_lemma:drift-loss}.

Consequently, on the right-hand side of~\eqref{eq:onestep_decomposition},
the first term is bounded by Lemma~\ref{lemma:weighted-exp} with probability $1-\delta/2$;
the second term is bounded deterministically by Lemma~\ref{lemma:weighted-mix-gap};
the third term is bounded by Lemma~\ref{lemma:drift-loss} with probability $1-\delta/2$.
Substituting these three bounds into \eqref{eq:onestep} and using
$2\eta\cdot \frac{1}{2\alpha}
=
2\eta\cdot \frac{1}{3\eta}
=
\frac23,$
we obtain, on an event of probability at least $1-\delta$,
\begin{align*}
    \|\theta_t-\theta_t^\star\|_{H_t}^2
    &\le
    4\lambda S^2
    +
    2\eta\log\frac{2}{\delta}
    + 2\eta\left(3\eta+\frac12\right)
    d\log\!\left(
    1+\frac{k_\mu(1-\gamma^{t-1})}{\lambda d\,g(\tau)(1-\gamma)}
    \right)
     \\
    &\quad
    +
    \frac{2\eta R^2}{g(\tau)k_\mu}\log\frac{2}{\delta}
    +
    \frac{2\eta k_\mu}{g(\tau)}\mathcal V_t^2,
    \qquad \forall t \geq 2
    .
\end{align*}
To obtain an anytime statement, for each \(t\ge 2\), set
$\delta_t:=\frac{6\delta}{\pi^2 t^2}.$
Applying the above fixed-time bound with \(\delta\) replaced by \(\delta_t\), we obtain that, for each fixed \(t\ge 2\), the desired inequality holds with probability at least \(1-\delta_t\). Moreover,
\[
\sum_{t=2}^\infty \delta_t
=
\frac{6\delta}{\pi^2}\sum_{t=2}^\infty \frac{1}{t^2}
\le
\frac{6\delta}{\pi^2}\sum_{t=1}^\infty \frac{1}{t^2}
=
\delta.
\]
Therefore, by a union bound over all \(t\ge 2\), with probability at least \(1-\delta\), the stated inequality holds simultaneously for all \(t\ge 2\). This completes the proof of Theorem~\ref{thm:DOMD_confidence}.
\end{proof}

\subsection{Proofs of Lemmas for Theorem~\ref{thm:DOMD_confidence}} 
\label{app_subsec:proof_lemmas_thm_confidence}
\subsubsection{Proof of Lemma~\ref{lemma:onestep}}
\label{app_subsubsec:proof_lemma:onestep}
\begin{proof}[Proof of Lemma~\ref{lemma:onestep}]
    Fix \(t\ge 1\) and \(\theta\in\Theta\). Define
    \[
    f_t(z):=\frac{m(z)-r_t z}{g(\tau)},
    \qquad z\in\RR.
    \]
    Then, for every \(\theta\in\Theta\),
    \[
    \ell_t(\theta)=f_t(x_t^\top\theta),
    \qquad
    f_t'(z)=\frac{\mu(z)-r_t}{g(\tau)},
    \qquad
    f_t''(z)=\frac{\mu'(z)}{g(\tau)},
    \qquad
    f_t'''(z)=\frac{\mu''(z)}{g(\tau)}.
    \]
    Hence, by Assumption~\ref{assum:bounded_link}, we have
    \[
    \frac{c_\mu}{g(\tau)}\le f_t''(z)\le \frac{k_\mu}{g(\tau)},
    \qquad
    |f_t'''(z)|\le \frac{Rk_\mu}{g(\tau)}.
    \]
    
    We first lower bound \(\ell_t(\theta)-\ell_t(\theta_{t+1})\).
    By the integral form of the Bregman divergence,
    \begin{align*}
    \ell_t(\theta)-\ell_t(\theta_{t+1})
    &=
    f_t(x_t^\top\theta)-f_t(x_t^\top\theta_{t+1}) \\
    &=
    f_t'(x_t^\top\theta_{t+1})\,x_t^\top(\theta-\theta_{t+1})
    \\
    &\quad
    +
    \bigl(x_t^\top(\theta-\theta_{t+1})\bigr)^2
    \int_0^1 (1-s)\,
    f_t''\!\left(x_t^\top\theta_{t+1}+s\,x_t^\top(\theta-\theta_{t+1})\right)\,\dd s.
    \end{align*}
    Using \(f_t''=\mu'/g(\tau)\) and applying Lemma~\ref{lemma:self-concor-control} with
    \[
    v=x_t^\top\theta_{t+1},
    \qquad
    z_1=x_t^\top\theta,
    \qquad
    z_2=x_t^\top\theta_{t+1},
    \]
    we obtain
    \[
    \int_0^1 (1-s)\,
    f_t''\!\left(x_t^\top\theta_{t+1}+s\,x_t^\top(\theta-\theta_{t+1})\right)\,\dd s
    \ge
    \frac{\mu'(x_t^\top\theta_{t+1})}
    {g(\tau)\bigl(2+R|x_t^\top(\theta-\theta_{t+1})|\bigr)}.
    \]
    Therefore,
    \begin{align*}
    \ell_t(\theta)-\ell_t(\theta_{t+1})
    &\ge
    f_t'(x_t^\top\theta_{t+1})\,x_t^\top(\theta-\theta_{t+1})
    +
    \frac{\mu'(x_t^\top\theta_{t+1})}
    {g(\tau)\bigl(2+R|x_t^\top(\theta-\theta_{t+1})|\bigr)}
    \bigl(x_t^\top(\theta-\theta_{t+1})\bigr)^2.
    \end{align*}
    Since \(\theta,\theta_{t+1}\in\Theta\) and \(\|x_t\|_2\le 1\),
    \[
    |x_t^\top(\theta-\theta_{t+1})|
    \le
    \|\theta-\theta_{t+1}\|_2
    \le
    2S.
    \]
    Hence, by the definition of $\eta = 1 + RS$, we get
    \begin{align*}
    \ell_t(\theta)-\ell_t(\theta_{t+1})
    &\ge
    f_t'(x_t^\top\theta_{t+1})\,x_t^\top(\theta-\theta_{t+1})
    +
    \frac{1}{2\eta}\frac{\mu'(x_t^\top\theta_{t+1})}{g(\tau)}
    \bigl(x_t^\top(\theta-\theta_{t+1})\bigr)^2
    \\
    &=
    f_t'(x_t^\top\theta_{t+1})\,x_t^\top(\theta-\theta_{t+1})
    +
    \frac{1}{2\eta}
    \|\theta-\theta_{t+1}\|_{\nabla^2\ell_t(\theta_{t+1})}^2.
    \numberthis
    \label{eq:one_step_loss_lower}
    \end{align*}
    
    Next, expand \(f_t'(x_t^\top\theta_{t+1})\) around \(x_t^\top\theta_t\).
    By Taylor's theorem, there exists \(\xi_t\) between \(x_t^\top\theta_t\) and \(x_t^\top\theta_{t+1}\) such that
    \[
    f_t'(x_t^\top\theta_{t+1})
    =
    f_t'(x_t^\top\theta_t)
    +
    f_t''(x_t^\top\theta_t)\,x_t^\top(\theta_{t+1}-\theta_t)
    +
    \frac12 f_t'''(\xi_t)\bigl(x_t^\top(\theta_{t+1}-\theta_t)\bigr)^2.
    \]
    Therefore,
    \begin{align*}
    &f_t'(x_t^\top\theta_{t+1})\,x_t^\top(\theta-\theta_{t+1})
    \\
    &\ge
    \Bigl(
    f_t'(x_t^\top\theta_t)
    +
    f_t''(x_t^\top\theta_t)\,x_t^\top(\theta_{t+1}-\theta_t)
    \Bigr)
    x_t^\top(\theta-\theta_{t+1})
    -
    \frac{Rk_\mu}{2g(\tau)}
    \bigl(x_t^\top(\theta_{t+1}-\theta_t)\bigr)^2
    |x_t^\top(\theta-\theta_{t+1})|
    \\
    &\ge
    \Bigl(
    f_t'(x_t^\top\theta_t)
    +
    f_t''(x_t^\top\theta_t)\,x_t^\top(\theta_{t+1}-\theta_t)
    \Bigr)
    x_t^\top(\theta-\theta_{t+1})
    -
    \frac{Rk_\mu S}{g(\tau)}
    \bigl(x_t^\top(\theta_{t+1}-\theta_t)\bigr)^2,
    \numberthis 
    \label{eq:one_step_taylor}
    \end{align*}
    where the last inequality holds because $|x_t^\top(\theta-\theta_{t+1})|\le 2S$ (Assumption~\ref{assum:boundedness}).
    
    Moreover, by the definitions of $f_t$ and $\ell_t$, we have
    \begin{align*}
    f_t'(x_t^\top\theta_t)\,x_t^\top(\theta-\theta_{t+1})
    =
    \langle \nabla\ell_t(\theta_t),\,\theta-\theta_{t+1}\rangle,
    \numberthis
    \label{eq:one_step_f'}
    \end{align*}
    and
    \begin{align*}
    f_t''(x_t^\top\theta_t)\,x_t^\top(\theta_{t+1}-\theta_t)\,x_t^\top(\theta-\theta_{t+1})
    =
    \Bigl\langle
    \nabla^2\ell_t(\theta_t)(\theta_{t+1}-\theta_t),\,
    \theta-\theta_{t+1}
    \Bigr\rangle.
    \numberthis
    \label{eq:one_step_loss_f''}
    \end{align*}
    By combining \eqref{eq:one_step_loss_lower}, \eqref{eq:one_step_taylor}, \eqref{eq:one_step_f'}, and \eqref{eq:one_step_loss_f''}, we obtain
    \begin{align*}
    \ell_t(\theta)-\ell_t(\theta_{t+1})
    &\ge
    \Bigl\langle
    \nabla\ell_t(\theta_t)
    +
    \nabla^2\ell_t(\theta_t)(\theta_{t+1}-\theta_t),\,
    \theta-\theta_{t+1}
    \Bigr\rangle
    \\
    &\quad
    +
    \frac{1}{2\eta}
    \|\theta-\theta_{t+1}\|_{\nabla^2\ell_t(\theta_{t+1})}^2
    -
    \frac{Rk_\mu S}{g(\tau)}
    \bigl(x_t^\top(\theta_{t+1}-\theta_t)\bigr)^2
    \\
    &
    \ge
    \langle \nabla\widetilde\ell_t(\theta_{t+1}),\,\theta-\theta_{t+1}\rangle
    +
    \frac{1}{2\eta}
    \|\theta-\theta_{t+1}\|_{\nabla^2\ell_t(\theta_{t+1})}^2
    -
    \frac{Rk_\mu S}{g(\tau)}
    \bigl(x_t^\top(\theta_{t+1}-\theta_t)\bigr)^2.
    \tag{$\nabla\widetilde\ell_t(\theta_{t+1})
    =
    \nabla\ell_t(\theta_t)
    +
    \nabla^2\ell_t(\theta_t)(\theta_{t+1}-\theta_t)$}
    \end{align*}
    Multiplying both sides by \(2\eta\), we get
    \begin{align*}
    2\eta\bigl(\ell_t(\theta)-\ell_t(\theta_{t+1})\bigr)
    &\ge
    2\eta \,\langle \nabla\widetilde\ell_t(\theta_{t+1}),\,\theta-\theta_{t+1}\rangle
    +
    \|\theta-\theta_{t+1}\|_{\nabla^2\ell_t(\theta_{t+1})}^2
    \\
    &\quad
    -
    \frac{2\eta Rk_\mu S}{g(\tau)}
    \bigl(x_t^\top(\theta_{t+1}-\theta_t)\bigr)^2.
    \end{align*}
    Because \(\|x_t\|_2\le 1\), \(A_t\succeq \lambda I_d\), and 
    $\lambda \ge \frac{6\eta Rk_\mu S}{g(\tau)}$,
    we have
    \[
    \frac{2\eta Rk_\mu S}{g(\tau)}
    \bigl(x_t^\top(\theta_{t+1}-\theta_t)\bigr)^2
    \le
    \frac{2\eta Rk_\mu S}{g(\tau)} \|\theta_{t+1}-\theta_t\|_2^2
    \le
    \frac{2\eta Rk_\mu S}{g(\tau)} \frac{1}{\lambda}\|\theta_{t+1}-\theta_t\|_{A_t}^2
    \le
    \frac13\|\theta_{t+1}-\theta_t\|_{A_t}^2.
    \]
    Hence, we obtain
    \begin{align*}
    2\eta\bigl(\ell_t(\theta)-\ell_t(\theta_{t+1})\bigr)
    &\ge
    2\eta \, \langle \nabla\widetilde\ell_t(\theta_{t+1}),\,\theta-\theta_{t+1}\rangle
    +
    \|\theta-\theta_{t+1}\|_{\nabla^2\ell_t(\theta_{t+1})}^2
    -
    \frac13\|\theta_{t+1}-\theta_t\|_{A_t}^2.
    \numberthis
    \label{eq:one_step_loss_gap_lower}
    \end{align*}
    Now, for any $ \theta\in\Theta$, we  define
    $F_t(\theta):=
    \widetilde\ell_t(\theta)
    +
    \frac{1}{2\eta}\|\theta-\theta_t\|_{A_t}^2$.
    Since \(F_t\) is convex and differentiable on the convex set \(\Theta\), and
    \(\theta_{t+1}\in\argmin_{\theta\in\Theta}F_t(\theta)\), we have
    \[
    \bigl\langle \nabla F_t(\theta_{t+1}),\,\theta-\theta_{t+1}\bigr\rangle\ge 0.
    \]
    That is,
    \begin{align*}
    \eta \,\langle \nabla\widetilde\ell_t(\theta_{t+1}),\,\theta-\theta_{t+1}\rangle
    \ge
    -\bigl\langle A_t(\theta_{t+1}-\theta_t),\,\theta-\theta_{t+1}\bigr\rangle.
    \numberthis
    \label{eq:one_step_optimality}
    \end{align*}
    Substituting \eqref{eq:one_step_optimality} into \eqref{eq:one_step_loss_gap_lower} yields
    \begin{align*}
    2\eta\bigl(\ell_t(\theta)-\ell_t(\theta_{t+1})\bigr)
    &\ge
    -2\,\bigl\langle A_t(\theta_{t+1}-\theta_t),\,\theta-\theta_{t+1}\bigr\rangle
    +
    \|\theta-\theta_{t+1}\|_{\nabla^2\ell_t(\theta_{t+1})}^2
    -
    \frac13\|\theta_{t+1}-\theta_t\|_{A_t}^2.
    \numberthis
    \label{eq:one_step_loss_gap_lower2}
    \end{align*}
    
    Finally, since
    $\theta-\theta_t
    =
    (\theta-\theta_{t+1})+(\theta_{t+1}-\theta_t)$,
    the three-point identity gives
    \[
    -2\,\bigl\langle A_t(\theta_{t+1}-\theta_t),\,\theta-\theta_{t+1}\bigr\rangle
    =
    \|\theta-\theta_{t+1}\|_{A_t}^2
    -
    \|\theta-\theta_t\|_{A_t}^2
    +
    \|\theta_{t+1}-\theta_t\|_{A_t}^2.
    \]
    Plugging this into \eqref{eq:one_step_loss_gap_lower2}, we have
    \begin{align*}
    2\eta\bigl(\ell_t(\theta)-\ell_t(\theta_{t+1})\bigr)
    &\ge
    \|\theta-\theta_{t+1}\|_{A_t}^2
    -
    \|\theta-\theta_t\|_{A_t}^2
    +
    \frac23\|\theta_{t+1}-\theta_t\|_{A_t}^2
    +
    \|\theta-\theta_{t+1}\|_{\nabla^2\ell_t(\theta_{t+1})}^2.
    \end{align*}
    Since
    $H_{t+1}=A_t+\nabla^2\ell_t(\theta_{t+1})$,
    we conclude that
    \begin{align*}
    \|\theta_{t+1}-\theta\|_{H_{t+1}}^2
    \le
    \|\theta_t-\theta\|_{A_t}^2
    +
    2\eta\bigl(\ell_t(\theta)-\ell_t(\theta_{t+1})\bigr)
    -
    \frac23\|\theta_{t+1}-\theta_t\|_{A_t}^2.
    \end{align*}
    This proves the one-step inequality, i.e., the first statement.
    
    To unroll this bound, using
    $A_t=\gamma H_t+(1-\gamma)\lambda I_d$,
    we get
    \[
    \|\theta_t-\theta\|_{A_t}^2
    \le
    \gamma\|\theta_t-\theta\|_{H_t}^2
    +
    (1-\gamma)\lambda\|\theta_t-\theta\|_2^2
    \le
    \gamma\|\theta_t-\theta\|_{H_t}^2+4(1-\gamma)\lambda S^2,
    \]
    where the last step uses \(\theta_t,\theta\in\Theta\).
    Therefore,
    \[
    \|\theta_{t+1}-\theta\|_{H_{t+1}}^2
    \le
    \gamma\|\theta_t-\theta\|_{H_t}^2
    +
    4(1-\gamma)\lambda S^2
    +
    2\eta\bigl(\ell_t(\theta)-\ell_t(\theta_{t+1})\bigr)
    -
    \frac23\|\theta_{t+1}-\theta_t\|_{A_t}^2.
    \]
    Iterating this inequality from \(s=1\) to \(t-1\), we obtain
    \begin{align*}
    \|\theta_t-\theta\|_{H_t}^2
    &\le
    \gamma^{t-1}\|\theta_1-\theta\|_{H_1}^2
    +
    4(1-\gamma)\lambda S^2\sum_{s=1}^{t-1}\gamma^{t-1-s}
    \\
    &\quad
    +
    2\eta\sum_{s=1}^{t-1}\gamma^{t-1-s}
    \bigl(\ell_s(\theta)-\ell_s(\theta_{s+1})\bigr)
    -
    \frac23\sum_{s=1}^{t-1}\gamma^{t-1-s}
    \|\theta_{s+1}-\theta_s\|_{A_s}^2.
    \numberthis 
    \label{eq:one_step_loss_gap_iterate}
    \end{align*}
    Since \(H_1=\lambda I_d\) and \(\theta_1,\theta\in\Theta\),
    \[
    \|\theta_1-\theta\|_{H_1}^2
    =
    \lambda\|\theta_1-\theta\|_2^2
    \le
    4\lambda S^2.
    \]
    Therefore,
    \[
    \gamma^{t-1}\|\theta_1-\theta\|_{H_1}^2
    +
    4(1-\gamma)\lambda S^2\sum_{s=1}^{t-1}\gamma^{t-1-s}
    \le
    4\gamma^{t-1} \lambda S^2
    + 4 (1-\gamma) \lambda S^2 \cdot \frac{1- \gamma^{t-1}}{1-\gamma}
    \le
    4\lambda S^2.
    \]
    Substituting this into~\eqref{eq:one_step_loss_gap_iterate} gives
    \[
    \|\theta_t-\theta\|_{H_t}^2
    \le
    4\lambda S^2
    +
    2\eta\sum_{s=1}^{t-1}\gamma^{t-1-s}
    \bigl(\ell_s(\theta)-\ell_s(\theta_{s+1})\bigr)
    -
    \frac23\sum_{s=1}^{t-1}\gamma^{t-1-s}
    \|\theta_{s+1}-\theta_s\|_{A_s}^2.
    \]
    This proves the second statement.
\end{proof}
\subsubsection{Proof of Lemma~\ref{lemma:weighted-exp}}
\label{app_subsubsec:proof_lemma:weighted-exp}
\begin{proof}[Proof of Lemma~\ref{lemma:weighted-exp}]
    Fix $t\ge 2$.
    For simplicity,  let
    $w_{s,t}:=\gamma^{t-1-s}\in(0,1]$ and
    $Z_s:=\ell_s(\theta_s^\star)-m_s(\Pi_s)$.
    Recall that
    $\ell_s(\theta)
    =
    \frac{m(x_s^\top \theta)-r_s x_s^\top \theta}{g(\tau)}
    =
    -\log \operatorname{Pr} \left[ r_s\mid x_s, \theta \right]+h(r_s,\tau)$.
    Hence, 
    we have
    \[
    m_s(\Pi_s)
    =
    -\log \int e^{-\ell_s(\theta)}\, \dd\Pi_s(\theta)
    =
    h(r_s,\tau)-\log\!\int p(r_s\mid x_s^\top\theta)\,\dd\Pi_s(\theta).
    \]
    Therefore, we can write $Z_s$ as:
    \[
    Z_s
    =
    \log\frac{\int p(r_s\mid x_s^\top\theta)\,\dd\Pi_s(\theta)}
    {p(r_s\mid x_s^\top\theta_s^\star)}.
    \]
    Define the filtration before observing $r_t$ by
    $\mathcal F_t
    :=
    \sigma(x_1,r_1,\dots,x_{t-1},r_{t-1},x_t),$
    and the enlarged filtration
    $\widetilde{\mathcal F}_t
    :=
    \mathcal F_t \vee \sigma(\theta_s^\star:s\ge 1)$.
    Then, 
    since \(\Pi_s\) is \(\widetilde{\mathcal F}_s\)-measurable and
    \(r_s\mid \widetilde{\mathcal F}_s \sim p(\cdot\mid x_s^\top\theta_s^\star)\),
    we obtain
    \begin{align*}
    \EE[e^{Z_s}\mid \widetilde{\mathcal F}_s]
    &=
    \EE\!\left[
    \frac{\int p(r_s\mid x_s^\top\theta)\,\dd\Pi_s(\theta)}
    {p(r_s\mid x_s^\top\theta_s^\star)}
    \;\middle|\;
    \widetilde{\mathcal F}_s
    \right] \\
    &=
    \int
    \frac{\int p(r\mid x_s^\top\theta)\,\dd\Pi_s(\theta)}
    {p(r\mid x_s^\top\theta_s^\star)}
    \,p(r\mid x_s^\top\theta_s^\star)\,\dd r \\
    &=
    \int \int p(r\mid x_s^\top\theta)\,\dd\Pi_s(\theta)\,\dd r \\
    &=
    \int 1\,\dd\Pi_s(\theta)=1.
    \end{align*}
    Because $0<w_{s,t}\le 1$, the map $x\mapsto x^{w_{s,t}}$ is concave on $\RR_+$.
    Therefore, conditional Jensen implies
    \begin{align*}
    \EE[e^{w_{s,t}Z_s}\mid \widetilde{\mathcal F}_s]
    =
    \EE[(e^{Z_s})^{w_{s,t}}\mid \widetilde{\mathcal F}_s]
    \le
    \bigl(\EE[e^{Z_s}\mid \widetilde{\mathcal F}_s]\bigr)^{w_{s,t}}
    =
    1.
    \numberthis 
    \label{eq:lemma:weighted-exp_jensen}
    \end{align*}
    
    We now iterate conditional expectations.
    By \eqref{eq:lemma:weighted-exp_jensen}, we get
    \begin{align*}
    \EE\!\left[
    e^{\sum_{s=1}^{t-2}w_{s,t}Z_s}\,
    \EE[e^{w_{t-1,t}Z_{t-1}}\mid \widetilde{\mathcal F}_{t-1}]
    \right] 
    \le
    \EE\!\left[e^{\sum_{s=1}^{t-2}w_{s,t}Z_s}\right].
    \end{align*}
    Applying the same argument recursively to $t-2,t-3,\dots,1$ and using \eqref{eq:lemma:weighted-exp_jensen} at each step, we get
    \[
    \EE[e^{\sum_{s=1}^{t-1}w_{s,t}Z_s}] \le 1.
    \]
    Hence, by Markov's inequality, we obtain
    \[
    \operatorname{Pr}\!\left(\sum_{s=1}^{t-1}w_{s,t}Z_s\ge \log\frac1\delta\right)
    =
    \operatorname{Pr}\!\left(e^{\sum_{s=1}^{t-1}w_{s,t}Z_s}\ge \frac1\delta\right)
    \le
    \delta \EE[e^{\sum_{s=1}^{t-1}w_{s,t}Z_s}]
    \le \delta.
    \]
    Therefore, with probability at least $1-\delta$, we have
    \[
    \sum_{s=1}^{t-1}w_{s,t}Z_s
    \le
    \log\frac1\delta.
    \]
    Recalling $w_{s,t}=\gamma^{t-1-s}$ 
    and
    $Z_s=\ell_s(\theta_s^\star)-m_s(\Pi_s)$
    proves the lemma.
\end{proof}
\subsubsection{Proof of Lemma~\ref{lemma:weighted-mix-gap}}
\label{app_subsubsec:proof_lemma:weighted-mix-gap}
\begin{proof} [Proof of Lemma~\ref{lemma:weighted-mix-gap}]
    Fix $s$.
    Define $Q_s:=\mathcal N(\theta_{s+1},\alpha H_{s+1}^{-1})$, where  $\alpha=3\eta/2$.
    Then, by the variational formula for the mix loss (Lemma~\ref{lemma:mixlog}), we have
    \[
    m_s(\Pi_s)
    \le
    \EE_{\theta\sim Q_s}[\ell_s(\theta)]
    +
    \KL(Q_s\|\Pi_s).
    \]
    Recall the definition of the Bregman divergence
    \[D_{\ell_s}(\theta,\theta_{s+1})
    :=
    \ell_s(\theta)-\ell_s(\theta_{s+1})
    -\langle \nabla \ell_s(\theta_{s+1}),\,\theta-\theta_{s+1}\rangle.\]
    Taking expectation under $Q_s$ and using that $Q_s$ is centered at $\theta_{s+1}$, so that $\EE_{Q_s}[\theta-\theta_{s+1}]=0$, gives
    \[
    \EE_{Q_s}[\ell_s(\theta)]-\ell_s(\theta_{s+1})
    =
    \EE_{Q_s}[D_{\ell_s}(\theta,\theta_{s+1})].
    \]
    Hence, we get
    \begin{align}
    m_s(\Pi_s)-\ell_s(\theta_{s+1})
    \le
    \underbrace{\EE_{Q_s}\bigl[D_{\ell_s}(\theta,\theta_{s+1})\bigr]}_{=:I_{1,s}}
    +
    \underbrace{\KL(Q_s\|\Pi_s)}_{=:I_{2,s}}.
    \label{eq:lemma:weighted-mix-gap_intermid}
    \end{align}
    To bound $I_{1,s}$, we apply
    Lemma~\ref{lemma:local-gb}, which yields
    \begin{align}
        I_{1,s}\le 2\alpha\Bigl(
        \log\det(H_{s+1})-\log\det(A_s)
        \Bigr).
        \label{eq:lemma:weighted-mix-gap_I_1}
    \end{align}
    We next turn to the term $I_{2,s}$.
    Since
    $\Pi_s=\mathcal N(\theta_s,\alpha A_s^{-1})$ and
    $Q_s=\mathcal N(\theta_{s+1},\alpha H_{s+1}^{-1})$,
    the Gaussian KL formula gives
    \[
    I_{2,s}
    =
    \frac12\left[
    \log\frac{\det(H_{s+1})}{\det(A_s)}
    +
    \Tr(A_sH_{s+1}^{-1})
    -d
    +
    \frac{1}{\alpha}\|\theta_{s+1}-\theta_s\|_{A_s}^2
    \right].
    \]
    Since $H_{s+1}\succeq A_s$, we have
    $A_s^{1/2}H_{s+1}^{-1}A_s^{1/2}\preceq I_d$,
    so every eigenvalue of $A_sH_{s+1}^{-1}$ lies in $(0,1]$. 
    Therefore, we get
    \[
    \Tr(A_sH_{s+1}^{-1})-d \le 0.
    \]
    This gives
    \begin{align*}
        I_{2,s}
        &\leq \frac12\left[
        \log\frac{\det(H_{s+1})}{\det(A_s)}
        +
        \frac{1}{\alpha}\|\theta_{s+1}-\theta_s\|_{A_s}^2
        \right]
        \\
        &= \frac12 \Bigl(
            \log\det(H_{s+1})-\log\det(A_s)
        \Bigr) 
        +\frac{1}{2\alpha}\|\theta_{s+1}-\theta_s\|_{A_s}^2
        \numberthis
        \label{eq:lemma:weighted-mix-gap_I_2}
    \end{align*}
    Plugging~\eqref{eq:lemma:weighted-mix-gap_I_1} and~\eqref{eq:lemma:weighted-mix-gap_I_2} into~\eqref{eq:lemma:weighted-mix-gap_intermid}, we obtain
    \begin{align*}
    m_s(\Pi_s)-\ell_s(\theta_{s+1})
    \le
    \left(2\alpha+\frac12\right)\Bigl(             \log\det(H_{s+1})-\log\det(A_s)         \Bigr) 
    +
    \frac{1}{2\alpha}\|\theta_{s+1}-\theta_s\|_{A_s}^2.
    \numberthis \label{eq:lemma:weighted-mix-gap_intermid2}
    \end{align*}
    For simplicity, let
    $L_s:=\log\det(H_s)$.
    Since
    $A_s=\gamma H_s+(1-\gamma)\lambda I_d$,
    concavity of $\log\det(\cdot)$ implies
    \[
    \log\det(A_s)\ge \gamma L_s+(1-\gamma)d\log\lambda.
    \]
    Therefore
    \begin{align}
     \log\det(H_{s+1})-\log\det(A_s)
    =
    L_{s+1}-\log\det(A_s)
    \le
    L_{s+1}-\gamma L_s-(1-\gamma)d\log\lambda.
    \label{eq:lemma:weighted-mix-gap_logdet_gap}
    \end{align}
    Multiplying both sides of~\eqref{eq:lemma:weighted-mix-gap_logdet_gap} by $\gamma^{t-1-s}$ and summing over $s=1,\dots,t-1$ yields
    \begin{align*}
        \sum_{s=1}^{t-1}\gamma^{t-1-s}\Bigl(             \log\det(H_{s+1})-\log\det(A_s)         \Bigr) 
        &\le
        \sum_{s=1}^{t-1}\gamma^{t-1-s}L_{s+1}
        -\sum_{s=1}^{t-1}\gamma^{t-s}L_s
        \\
        &\quad
        -
        (1-\gamma)d\log\lambda \sum_{s=1}^{t-1}\gamma^{t-1-s}.
    \end{align*}
    The first two sums can be telescoped as follows:
    \[
    \sum_{s=1}^{t-1}\gamma^{t-1-s}L_{s+1}
    -
    \sum_{s=1}^{t-1}\gamma^{t-s}L_s
    =
    L_t-\gamma^{t-1}L_1
    =  L_t-\gamma^{t-1} d\log\lambda
    ,
    \]
    where the last equality holds since $L_1=d\log\lambda$.
    Moreover, evaluating the geometric sum gives
    \[
    (1-\gamma)\sum_{s=1}^{t-1}\gamma^{t-1-s}=1-\gamma^{t-1}.
    \]
    Hence, we get
    \begin{align*}
        \sum_{s=1}^{t-1}\gamma^{t-1-s}\Bigl(             \log\det(H_{s+1})-\log\det(A_s)         \Bigr) 
        &\le
        L_t - \gamma^{t-1} d \log \lambda
        - (1 - \gamma^{t-1}) d \log \lambda
        \\
        &=
        L_t-d\log\lambda.
        \numberthis
        \label{eq:lemma:weighted-mix-gap_sum_logdet_gap}
    \end{align*}
    On the other hand, by the definition of $H_t
    =
    \lambda I_d+\sum_{s=1}^{t-1}\gamma^{t-1-s}\frac{\mu'(x_s^\top\theta_{s+1})}{g(\tau)}x_sx_s^\top$,
    we have
    \begin{align*}
        \Tr(H_t)
        &=
        \lambda d+\sum_{s=1}^{t-1}\gamma^{t-1-s}\frac{\mu'(x_s^\top\theta_{s+1})}{g(\tau)}\|x_s\|_2^2
        \\
        &\le
        \lambda d+\frac{k_\mu}{g(\tau)}\sum_{s=1}^{t-1}\gamma^{t-1-s}
        \tag{$\|x\|_2 \leq 1$, Assumption~\ref{assum:boundedness}}
        \\
        &= \lambda d+\frac{k_\mu(1-\gamma^{t-1})}{g(\tau)(1-\gamma)}
        \tag{$\sum_{s=1}^{t-1}\gamma^{t-1-s}=(1-\gamma^{t-1})/(1-\gamma)$}
        .
    \end{align*}
    Therefore, we obtain
    \begin{align*}
        L_t-d\log\lambda
        &= 
        \log\det(H_t)
        -d\log\lambda
        \le
        \log
        \left(\frac{\Tr(H_t)}{d}\right)^d 
        -d\log\lambda
        \tag{AM-GM inequality}
        \\
        &\leq
        d\log\!\left(
        1+\frac{k_\mu(1-\gamma^{t-1})}{\lambda d\,g(\tau)(1-\gamma)}
        \right).
        \numberthis
        \label{eq:lemma:weighted-mix-gap_L_t}
    \end{align*}
    Finally, summing~\eqref{eq:lemma:weighted-mix-gap_intermid2} with weights $\gamma^{t-1-s}$ yields
    \begin{align*}
    \sum_{s=1}^{t-1}&\gamma^{t-1-s}\bigl(m_s(\Pi_s)-\ell_s(\theta_{s+1})\bigr)
    \\
    &\le
    \left(2\alpha+\frac12\right)\sum_{s=1}^{t-1}\gamma^{t-1-s}\Bigl(         \log\det(H_{s+1})-\log\det(A_s)   \Bigr) 
    +
    \frac{1}{2\alpha}\sum_{s=1}^{t-1}\gamma^{t-1-s}\|\theta_{s+1}-\theta_s\|_{A_s}^2 \\
    &\le
    \left(2\alpha+\frac12\right)\bigl(L_t-d\log\lambda\bigr)
    +
    \frac{1}{2\alpha}\sum_{s=1}^{t-1}\gamma^{t-1-s}\|\theta_{s+1}-\theta_s\|_{A_s}^2 
    \tag{Eqn.~\eqref{eq:lemma:weighted-mix-gap_sum_logdet_gap}}
    \\
    &\le
    \left(2\alpha+\frac12\right)
    d\log\!\left(
    1+\frac{k_\mu(1-\gamma^{t-1})}{\lambda d\,g(\tau)(1-\gamma)}
    \right)
    +
    \frac{1}{2\alpha}\sum_{s=1}^{t-1}\gamma^{t-1-s}\|\theta_{s+1}-\theta_s\|_{A_s}^2 
    \tag{Eqn.~\eqref{eq:lemma:weighted-mix-gap_L_t}}
    .
    \end{align*}
    Finally, substituting $\alpha=\frac{3\eta}{2}$ completes the proof.
\end{proof}
\subsubsection{Proof of Lemma~\ref{lemma:drift-loss}}
\label{app_subsubsec:proof_lemma:drift-loss}
\begin{proof}[Proof of Lemma~\ref{lemma:drift-loss}]
    Fix \(t\ge 2\). 
    Using the definition of \(\ell_s\), we obtain
    \begin{align*}
    \ell_s(\theta_t^\star)-\ell_s(\theta_s^\star)
    &=
    \frac{
    m(x_s^\top\theta_t^\star)-r_s\,x_s^\top\theta_t^\star
    -m(x_s^\top\theta_s^\star)+r_s\,x_s^\top\theta_s^\star
    }{g(\tau)} \\
    &=
    \frac{
    m(x_s^\top\theta_t^\star)-m(x_s^\top\theta_s^\star)
    -r_s\,x_s^\top(\theta_t^\star-\theta_s^\star)
    }{g(\tau)} \\
    &=
    \frac{
    m(x_s^\top\theta_t^\star)-m(x_s^\top\theta_s^\star)
    -\mu(x_s^\top\theta_s^\star)\,x_s^\top(\theta_t^\star-\theta_s^\star)
    }{g(\tau)}
    -
    \frac{\eta_s\,x_s^\top(\theta_t^\star-\theta_s^\star)}{g(\tau)}.
    \tag{$r_s = \mu(x_s^\top\theta_s^\star) + \eta_s$}
    \end{align*}
    Therefore, we get
    \[
    \sum_{s=1}^{t-1}\gamma^{t-1-s}
    \bigl(\ell_s(\theta_t^\star)-\ell_s(\theta_s^\star)\bigr)
    =
    B_t^{\mathrm{det}}-M_t,
    \]
    where the deterministic term is defined by
    \[
    B_t^{\mathrm{det}}
    :=
    \frac{1}{g(\tau)}
    \sum_{s=1}^{t-1}\gamma^{t-1-s}
    \Bigl(
    m(x_s^\top\theta_t^\star)-m(x_s^\top\theta_s^\star)
    -\mu(x_s^\top\theta_s^\star)\,x_s^\top(\theta_t^\star-\theta_s^\star)
    \Bigr),
    \]
    and the stochastic term is
    \[
    M_t
    :=
    \frac{1}{g(\tau)}
    \sum_{s=1}^{t-1}\gamma^{t-1-s}\eta_s\,x_s^\top(\theta_t^\star-\theta_s^\star).
    \]

    \emph{i) Deterministic part.}\,
    By Assumption~\ref{assum:boundedness},  both \(x_s^\top\theta_t^\star\) and \(x_s^\top\theta_s^\star\) belong to \([-S,S]\).
    Since 
    \(m'=\mu\),
    \(m''=\mu'\), 
    and Assumption~\ref{assum:bounded_link} gives \(\mu'(z)\le k_\mu\) for all \(z\in[-S,S]\),
    Taylor's theorem implies that
    \begin{align*}
    m(x_s^\top\theta_t^\star)-m(x_s^\top\theta_s^\star)
    -\mu(x_s^\top\theta_s^\star)\,x_s^\top(\theta_t^\star-\theta_s^\star) 
    \le
    \frac{k_\mu}{2}\bigl(x_s^\top(\theta_t^\star-\theta_s^\star)\bigr)^2.
    \end{align*}
    Consequently, we can bound $B_t^{\mathrm{det}}$ as follows:
    \begin{align*}
    B_t^{\mathrm{det}}
    &\le
    \frac{k_\mu}{2g(\tau)}
    \sum_{s=1}^{t-1}\gamma^{t-1-s}
    \bigl(x_s^\top(\theta_t^\star-\theta_s^\star)\bigr)^2
    \\
    &\le \frac{k_\mu}{2g(\tau)}\,\mathcal V_t^2
    \tag{Lemma~\ref{lemma:kernel}}
    .
    \end{align*}

    \emph{ii) Stochastic part.}\,
    For fixed \(t\), let
    \[
    c_{s,t}:=\gamma^{t-1-s}\,x_s^\top(\theta_t^\star-\theta_s^\star),
    \qquad s=1,\dots,t-1.
    \]
    Define the filtration before observing $r_t$ by
    $\mathcal F_t
    :=
    \sigma(x_1,r_1,\dots,x_{t-1},r_{t-1},x_t),$
    and the enlarged filtration
    $\widetilde{\mathcal F}_t
    :=
    \mathcal F_t \vee \sigma(\theta_u^\star:u\ge 1)$.
    Since the path \(\{\theta_u^\star\}_{u=1}^T\) is \textit{oblivious}, i.e.,
    the entire sequence of underlying parameters does not depend on the realized rewards or on the learner's internal randomness,
    and \(x_s\) is
    \(\widetilde{\mathcal F}_s\)-measurable, each \(c_{s,t}\) is
    \(\widetilde{\mathcal F}_s\)-measurable.

    Moreover, because \(r_s\in[0,R]\) almost surely (Assumption~\ref{assum:boundedness}) and
    \(\mu(x_s^\top\theta_s^\star)=\mathbb E[r_s | \widetilde{\mathcal F}_s]\in[0,R]\), we have
    $\eta_s \in [-\mu(x_s^\top\theta_s^\star),\,R-\mu(x_s^\top\theta_s^\star)]$
    almost surely.
    Thus, conditioned on \(\widetilde{\mathcal F}_s\), the centered random variable \(\eta_s\)
    lies in an interval of length \(R\). 
    Hence, by Hoeffding's lemma, for every \(\bar{\lambda}>0\), we get
    \[
    \mathbb E\!\left[
    \exp(-\bar{\lambda} c_{s,t}\eta_s)
    \,\middle|\,
    \widetilde{\mathcal F}_s
    \right]
    \le
    \exp\!\left(\frac{\bar{\lambda}^2 c_{s,t}^2 R^2}{8}\right).
    \]

    Let
    $Y_t:=\sum_{s=1}^{t-1}c_{s,t}\eta_s.$
    Iterating conditional expectations from \(s=t-1\) down to \(s=1\) yields
    \[
    \mathbb E\!\left[e^{-\bar{\lambda} Y_t}\right]
    \le
    \exp\!\left(
    \frac{\bar{\lambda}^2 R^2}{8}
    \sum_{s=1}^{t-1}c_{s,t}^2
    \right).
    \]
    Therefore, for every \(u>0\) and every \(\bar{\lambda}>0\), Chernoff's bound gives
    \begin{align*}
    \mathbb P(-M_t\ge u)
    &=
    \mathbb P\!\left(
    -\sum_{s=1}^{t-1}c_{s,t}\eta_s
    \ge
    g(\tau)u
    \right) \\
    &\le
    \exp\!\left(-\bar{\lambda} g(\tau)u\right)
    \mathbb E\!\left[e^{-\bar{\lambda} Y_t}\right] 
    \tag{Chernoff's bound}
    \\
    &\le
    \exp\!\left(
    -\bar{\lambda} g(\tau)u
    +
    \frac{\bar{\lambda}^2 R^2}{8}
    \sum_{s=1}^{t-1}c_{s,t}^2
    \right).
    \end{align*}
    Optimizing over \(\bar{\lambda}>0\) gives
    $\bar{\lambda}^\star
    =
    \frac{4g(\tau)u}{R^2\sum_{s=1}^{t-1}c_{s,t}^2},$
    and thus
    \[
    \mathbb P(-M_t\ge u)
    \le
    \exp\!\left(
    -\frac{2g(\tau)^2u^2}
    {R^2\sum_{s=1}^{t-1}c_{s,t}^2}
    \right).
    \]
    Hence, with probability at least \(1-\delta\), we have
    \begin{align*}
    -M_t
    &\le
    \frac{R}{g(\tau)}
    \sqrt{
    \frac12\log\frac{1}{\delta}
    \sum_{s=1}^{t-1}c_{s,t}^2
    }
    \\
    &= \frac{R}{g(\tau)}
    \sqrt{
    \frac12\log\frac{1}{\delta}
    \sum_{s=1}^{t-1}\gamma^{2(t-1-s)}
    \bigl(x_s^\top(\theta_t^\star-\theta_s^\star)\bigr)^2
    }
    \\
    &\leq \frac{R}{g(\tau)}
    \sqrt{\frac12\log\frac{1}{\delta}}\,
    \mathcal V_t.
    \tag{Lemma~\ref{lemma:kernel}}
    \end{align*}
    Applying Young's inequality  $ab\le \frac{\varepsilon}{2}a^2+\frac{1}{2\varepsilon}b^2$ 
    with 
    $a=\mathcal V_t$,
    $b=\frac{R}{g(\tau)}\sqrt{\frac12\log\frac{1}{\delta}}$,
    and $\varepsilon=\frac{k_\mu}{g(\tau)}$,
    we obtain
    \[
    -M_t
    \le
    \frac{k_\mu}{2g(\tau)}\mathcal V_t^2
    +
    \frac{R^2}{4g(\tau)k_\mu}\log\frac{1}{\delta}.
    \]

    Combining the bounds on \(B_t^{\mathrm{det}}\) and \(-M_t\), we conclude that,
    with probability at least \(1-\delta\),
    \[
    \sum_{s=1}^{t-1}\gamma^{t-1-s}
    \bigl(\ell_s(\theta_t^\star)-\ell_s(\theta_s^\star)\bigr)
    \le
    \frac{k_\mu}{g(\tau)}\,\mathcal V_t^2
    +
    \frac{R^2}{4g(\tau)k_\mu}\log\frac{1}{\delta}.
    \]
    This completes the proof.
\end{proof}

\subsection{Auxiliary Lemmas for Theorem~\ref{thm:DOMD_confidence}} 
\label{app_subsec:aux_thm_confidence}
\begin{lemma}[Self-concordance control]
    \label{lemma:self-concor-control}
    Under Assumption~\ref{assum:boundedness}, 
    for every $v,z_1,z_2\in\RR$,
    \begin{equation}
    \label{eq:self-concor-control-lower}
    \int_0^1 (1-s)\mu'\bigl(v+s(z_1-z_2)\bigr)\, \dd s
    \ge
    \frac{\mu'(v)}{2+R|z_1-z_2|},
    \end{equation}
    and
    \begin{equation}
    \label{eq:self-concor-control-upper}
    \int_0^1 (1-s)\mu'\bigl(v+s(z_1-z_2)\bigr)\, \dd s
    \le
    \exp\!\left(\frac{R^2(z_1-z_2)^2}{4}\right)\mu'(v).
    \end{equation}
\end{lemma}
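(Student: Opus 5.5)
The plan is to reduce both inequalities to one-dimensional Grönwall-type comparisons coming from the self-concordance property in Proposition~\ref{prop:self-concordance}. Under Assumption~\ref{assum:boundedness} the rewards lie in $[0,R]$, so $|\mu''(z)|\le R\,\mu'(z)$ for all $z\in\RR$, and $\mu'\ge 0$ because $\mu'=m''$ is the variance of the exponential family. Write $\Delta:=z_1-z_2$ and $a:=R|\Delta|\ge 0$.

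\textbf{Step 1: pointwise comparison.} I will show that for all $s\in[0,1]$,
\[
\mu'(v)\,e^{-as}\;\le\;\mu'(v+s\Delta)\;\le\;\mu'(v)\,e^{as}.
\]
If $\mu'>0$ along the segment, this follows from $\bigl|\tfrac{d}{ds}\log\mu'(v+s\Delta)\bigr|=|\mu''(v+s\Delta)|\,|\Delta|/\mu'(v+s\Delta)\le a$, integrated from $0$ to $s$. In general, I will apply Grönwall to $\phi(s):=\mu'(v+s\Delta)$, which satisfies $|\phi'(s)|\le a\,\phi(s)$. This gives $\tfrac{d}{ds}\bigl(e^{-as}\phi(s)\bigr)\le 0$ and $\tfrac{d}{ds}\bigl(e^{as}\phi(s)\bigr)\ge 0$, which are exactly the two bounds above without any positivity assumption.

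\textbf{Step 2: lower bound \eqref{eq:self-concor-control-lower}.} Integrating the lower comparison against the weight $(1-s)$ gives
\[
\int_0^1(1-s)\mu'(v+s\Delta)\,\dd s\;\ge\;\mu'(v)\,g(a),\qquad g(a):=\int_0^1(1-s)e^{-as}\,\dd s=\frac{a-1+e^{-a}}{a^2},
\]
with $g(0)=1/2$. It then suffices to prove the scalar inequality $g(a)\ge 1/(2+a)$ for $a\ge0$, which is equivalent to
\[
h(a):=(2+a)(a-1+e^{-a})-a^2\;=\;a-2+(2+a)e^{-a}\;\ge\;0 .
\]
This holds because $h(0)=0$ and $h'(a)=1-(1+a)e^{-a}\ge0$, the latter since $e^{a}\ge 1+a$.

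\textbf{Step 3: upper bound \eqref{eq:self-concor-control-upper}.} This is the step I expect to be the main obstacle. The exact integral $(e^a-1-a)/a^2$ is awkward to compare directly with $e^{a^2/4}$, and the crude bound $e^{a}/2$ fails near $a=2$. To avoid this, I will split the exponent with AM--GM: $as\le a^2/4+s^2$. Combined with the upper comparison from Step 1, this yields
\[
\int_0^1(1-s)\mu'(v+s\Delta)\,\dd s\;\le\;\mu'(v)\,e^{a^2/4}\int_0^1(1-s)e^{s^2}\,\dd s .
\]
It remains to show the last integral is at most $1$. Since $u\mapsto e^{u}$ is convex, for $u=s^2\in[0,1]$ we have $e^{s^2}\le 1+(e-1)s^2$. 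Hence
\[
\int_0^1(1-s)e^{s^2}\,\dd s\;\le\;\frac12+\frac{e-1}{12}\;<\;1,
\]
which gives \eqref{eq:self-concor-control-upper} because $a^2/4=R^2(z_1-z_2)^2/4$.
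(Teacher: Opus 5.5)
Your proof is correct. Steps~1 and~2 essentially match the paper; the genuine difference is in the upper bound.

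In Step~1 the paper integrates $\bigl|\tfrac{d}{du}\log\mu'(u)\bigr|\le R$, which divides by $\mu'$. Your version bounds the derivatives of $e^{\mp as}\phi(s)$ instead, so it never needs $\mu'>0$ off $[-S,S]$. That positivity is all Assumption~\ref{assum:bounded_link} guarantees, so this is a small gain in robustness. Step~2 is the same as the paper: it uses the same function $(2+a)(a-1+e^{-a})-a^2$ and the same derivative argument.

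For the upper bound, the paper writes $\int_0^1(1-s)e^{us}\,\mathrm{d}s=\tfrac12\mathbb{E}[e^{uB}]$ with $B\sim\mathrm{Beta}(1,2)$. It applies Hoeffding's lemma to $B-\tfrac13$ to get $\tfrac12 e^{u/3+u^2/8}$. It then completes the square, $u/3\le u^2/8+2/9$, to absorb the linear term, ending with $\tfrac12 e^{2/9}e^{u^2/4}\le e^{u^2/4}$.

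You instead apply AM--GM pointwise inside the integral, $as\le a^2/4+s^2$. This pulls out the target factor $e^{a^2/4}$ at once and leaves the parameter-free integral $\int_0^1(1-s)e^{s^2}\,\mathrm{d}s$. You bound that by $\tfrac12+\tfrac{e-1}{12}<1$ using the chord inequality for $e^u$ on $[0,1]$.

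Your route is shorter and fully elementary: it needs no probabilistic representation, no sub-Gaussian lemma, and no separate step to remove the linear term. The paper's route passes through the sharper intermediate bound $\tfrac12 e^{u/3+u^2/8}$, which has a smaller quadratic coefficient and would matter if one wanted better constants. Since both are then relaxed to $e^{u^2/4}$ with leading constant below $1$ (about $0.62$ for the paper versus about $0.64$ for you), nothing is lost in the stated lemma.
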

\begin{proof}[Proof of Lemma~\ref{lemma:self-concor-control}]
    Under the bounded rewards assumption (Assumption~\ref{assum:boundedness}),
    Proposition~\ref{prop:self-concordance} implies that
    $|\mu''(z)| \le R\,\mu'(z)$ for all $z \in \RR$.
    Equivalently,
    \[
    \left|\frac{d}{du}\log\mu'(u)\right|
    =
    \left|\frac{\mu''(u)}{\mu'(u)}\right|
    \le R.
    \]
    Hence, for any $s\in[0,1]$,
    \begin{align*}
        \log \mu'\bigl(v+s(z_1-z_2)\bigr)-\log \mu'(v)
        &=
        \int_0^1
        \frac{d}{d w}\log \mu'\bigl(v+w s(z_1-z_2)\bigr)\,\dd w
        \\
        &=
        \int_0^1
        \frac{\mu''\bigl(v+w s(z_1-z_2)\bigr)}{\mu'\bigl(v+ ws(z_1-z_2)\bigr)}\, s(z_1-z_2)\, \dd w.
    \end{align*}
    Therefore,
    \[
    \left|\log \mu'\bigl(v+s(z_1-z_2)\bigr)-\log \mu'(v)\right|
    \le
    \int_0^1 R\,s|z_1-z_2|\, \dd w
    =
    Rs|z_1-z_2|,
    \]
    which implies
    \[
    e^{-Rs|z_1-z_2|}\mu'(v)
    \le
    \mu'\bigl(v+s(z_1-z_2)\bigr)
    \le
    e^{Rs|z_1-z_2|}\mu'(v).
    \]

    For the lower bound, let $u:=R|z_1-z_2|$.
    Then
    \begin{align*}
        \int_0^1 (1-s)\mu'\bigl(v+s(z_1-z_2)\bigr)\,\dd s
    &\ge
    \mu'(v)\int_0^1 (1-s)e^{-u s}\, \dd s \\
    &=
    \mu'(v)\frac{u-1+e^{-u}}{u^2}
    \qquad (u>0),
    \end{align*}
    where the last equality follows from a direct calculation.

    Hence it suffices to show
    \[
    \frac{u-1+e^{-u}}{u^2}\ge \frac{1}{2+u}
    \qquad \forall u > 0.
    \]
    Define
    $q(u):=(2+u)(u-1+e^{-u})-u^2.$
    Then $q(0)=0$, and
    \[
    q'(u)=1-(1+u)e^{-u}\ge 0
    \qquad \forall u > 0,
    \]
    because $e^u > 1+u$.
    Therefore $q(u) > 0$ for all $u > 0$, which proves \eqref{eq:self-concor-control-lower}.

    For the upper bound, 
    using
    $\mu'\bigl(v+s(z_1-z_2)\bigr)\le e^{us}\mu'(v),$
    we obtain
    \[
    \int_0^1 (1-s)\mu'\bigl(v+s(z_1-z_2)\bigr)\,\dd s
    \le
    \mu'(v)\int_0^1 (1-s)e^{us}\,\dd s.
    \]
    Let \(B\sim \mathrm{Beta}(1,2)\), whose density is \(2(1-s)\mathbf 1\{0\le s\le 1\}\). Then
    \[
    \int_0^1 (1-s)e^{us}\,\dd s=\frac12 \EE[e^{uB}].
    \]
    Since \(B\in[0,1]\), Hoeffding's lemma yields
    \begin{equation}
        \label{eq:beta_hoeffding}
        \EE[e^{u(B-\EE B)}]\le e^{u^2/8}.    
    \end{equation}
    Using \(\EE B=1/3\), we get
    \begin{align*}
        \int_0^1 (1-s)e^{us}\,\dd s
        &=
        \frac12 \EE[e^{uB}]
        =
        \frac12 e^{u \EE B }\EE[e^{u(B-\EE B)}]
        =
        \frac12 e^{u/3}\EE[e^{u(B-\EE B)}]
        \\
        &\le
        \frac12 \exp\!\left(\frac{u}{3}+\frac{u^2}{8}\right).
        \tag{Eqn.~\eqref{eq:beta_hoeffding}}
    \end{align*}
    
    Moreover,
    $\frac{u^2}{8}+\frac{2}{9}-\frac{u}{3}
    =
    \frac{(3u-4)^2}{72}\ge 0,$
    so
    $\frac{u}{3}\le \frac{u^2}{8}+\frac{2}{9}$.
    Hence
    \begin{align*}
    \int_0^1 (1-s)e^{us}\,\dd s
    \leq
     \frac12 \exp\!\left(\frac{u}{3}+\frac{u^2}{8}\right)
    \le
    \frac12 e^{2/9}e^{u^2/4}
    \le
    e^{u^2/4},
    \tag{\(e^{2/9}/2<1\)}
    \end{align*}
    which proves \eqref{eq:self-concor-control-upper}.
\end{proof}
\begin{lemma}[Gaussian exponential moment]
\label{lemma:gauss-exp}
    For any $\alpha, c >0$,
    let $Z\sim\mathcal N(0,\alpha H^{-1})$, where $H\succeq \lambda I_d$.
    If
    $\lambda \ge \frac{64\alpha dc}{7},$
    then, we have
    \[
    \EE\!\left[e^{c\|Z\|_2^2}\right]\le \frac43.
    \]
\end{lemma}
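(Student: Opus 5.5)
The plan is to diagonalize. Write $H = U\Lambda U^\top$ with eigenvalues $\lambda_1,\dots,\lambda_d \ge \lambda$. Since $Z\sim\mathcal N(0,\alpha H^{-1})$, we have $\|Z\|_2^2 = \sum_{i=1}^d \frac{\alpha}{\lambda_i} G_i^2$ with $G_i$ i.i.d.\ standard normal. By independence,
\[
\EE\!\left[e^{c\|Z\|_2^2}\right] = \prod_{i=1}^d \EE\!\left[e^{c\alpha G_i^2/\lambda_i}\right] = \prod_{i=1}^d \left(1-\frac{2c\alpha}{\lambda_i}\right)^{-1/2},
\]
which is valid whenever $2c\alpha/\lambda_i<1$. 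The hypothesis $\lambda\ge 64\alpha dc/7$ gives $2c\alpha/\lambda_i \le 2c\alpha/\lambda \le \frac{7}{32 d}\le \frac{7}{32}<1$, so every factor is finite. Each factor is monotone in $\lambda_i$, so the product is at most $\bigl(1-\frac{7}{32d}\bigr)^{-d/2}$.

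It then remains to show $\bigl(1-\frac{7}{32d}\bigr)^{-d/2}\le \frac43$ for all $d\ge1$. Set $x=\frac{7}{32d}\in(0,7/32]$. Use $-\log(1-x)\le \frac{x}{1-x}$, which gives
\[
-\frac d2\log(1-x) \le \frac d2\cdot\frac{x}{1-x} = \frac{7}{64}\cdot\frac{1}{1-x} \le \frac{7}{64}\cdot\frac{32}{25} = \frac{7}{50}.
\]
Since $e^{0.14}\approx 1.150<4/3$, the bound follows. Equivalently, $\log(4/3)\approx 0.2877$ comfortably exceeds $0.14$.

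The only delicate step is this last uniform-in-$d$ numerical bound. The quantity $(1-a/d)^{-d/2}$ increases towards $e^{a/2}$ as $d\to\infty$, so it suffices to check both the limit $e^{7/64}\approx1.116$ and the $d=1$ case $(25/32)^{-1/2}\approx1.131$. The $\frac{x}{1-x}$ estimate handles all $d$ at once, so the argument needs no case split.
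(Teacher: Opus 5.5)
Your proof is correct and follows essentially the paper's route: both reduce to a product of one-dimensional $\chi^2$ moment generating functions bounded by $(1-\tfrac{7}{32d})^{-d/2}$, then apply $-\log(1-x)\le \tfrac{x}{1-x}$ to get $e^{7/50}<\tfrac43$. The only difference is that the paper uses the pointwise bound $\|Z\|_2^2\le \tfrac{\alpha}{\lambda}\|\xi\|_2^2$ instead of diagonalizing $H$; separately, your closing remark has a harmless slip, since $(1-a/d)^{-d/2}$ decreases to $e^{a/2}$ as $d\to\infty$ (your own values $1.131$ at $d=1$ and $1.116$ in the limit show this).
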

\begin{proof} [Proof of Lemma~\ref{lemma:gauss-exp}]
    Let $\xi\sim\mathcal N(0,I_d)$ and write $Z=\sqrt{\alpha}\,H^{-1/2}\xi$.
    Since $H^{-1}\preceq \lambda^{-1}I_d$, we have pointwise
    \[
    \|Z\|_2^2
    =
    \alpha\,\xi^\top H^{-1}\xi
    \le
    \frac{\alpha}{\lambda}\|\xi\|_2^2.
    \]
    Therefore
    \begin{align*}
    \EE\!\left[e^{c\|Z\|_2^2}\right]
    \le
    \EE\!\left[e^{\frac{\alpha c}{\lambda}\|\xi\|_2^2}\right].
    \numberthis
    \label{eq:lemma:gauss-exp_upper}
    \end{align*}
    Now write $\xi=(\xi_1,\dots,\xi_d)$ with $\xi_i \stackrel{\mathrm{i.i.d.}}{\sim}\mathcal N(0,1)$. 
    Then, since
    $\|\xi\|_2^2=\sum_{i=1}^d \xi_i^2,$
    by defining $h := \alpha c/\lambda$, we get
    \begin{align*}
    \EE\!\left[e^{h\|\xi\|_2^2}\right]
    &=
    \EE\!\left[\exp\!\left(h\sum_{i=1}^d \xi_i^2\right)\right]
    =
    \prod_{i=1}^d \EE\!\left[e^{h \xi_i^2}\right].
    \end{align*}
    For each $i$, we have
    \begin{align*}
    \EE\!\left[e^{h \xi_i^2}\right]
    &=
    \frac{1}{\sqrt{2\pi}}\int_{\mathbb R} \exp\!\left(h x^2-\frac{x^2}{2}\right)\,dx \\
    &=
    \frac{1}{\sqrt{2\pi}}\int_{\mathbb R} \exp\!\left(-\frac{1-2h}{2}x^2\right)\,dx \\
    &=
    \frac{1}{\sqrt{1-2h}}.
    \tag{$\frac{\alpha c}{\lambda} \leq \frac{7}{64 d} < \frac{1}{2}$}
    \end{align*}
    Hence, we get
    \[
    \EE\!\left[e^{h\|\xi\|_2^2}\right]
    =
    \prod_{i=1}^d \EE\!\left[e^{h \xi_i^2}\right]
    =
    (1-2h)^{-d/2}.
    \]
    Plugging this into \eqref{eq:lemma:gauss-exp_upper} and substituting $h=\alpha c/\lambda$, we obtain
    \[
    \EE\!\left[e^{c\|Z\|_2^2}\right]
    \le
    \left(1-\frac{2\alpha c}{\lambda}\right)^{-d/2}.
    \]
    Finally, our assumption $\lambda \ge \frac{64\alpha dc}{7}$ implies
    $\frac{2\alpha c}{\lambda}\le \frac{7}{32d} < 1$.
    Since the map $x\mapsto (1-x)^{-d/2}$ is increasing on $(0,1)$, it follows that
    \[
    \left(1-\frac{2\alpha c}{\lambda}\right)^{-d/2}
    \le
    \left(1-\frac{7}{32d}\right)^{-d/2}.
    \]
    Now use the elementary bound $-\log(1-x)\le \frac{x}{1-x}$ for $x\in(0,1)$. With $x=\frac{7}{32d}$, we obtain
    \begin{align*}
    \left(1-\frac{7}{32d}\right)^{-d/2}
    &=
    \exp\!\left(-\frac d2 \log\!\left(1-\frac{7}{32d}\right)\right) 
    \le
    \exp\!\left(\frac d2 \cdot \frac{7/(32d)}{1-7/(32d)}\right) \\
    &=
    \exp\!\left(\frac{7/64}{1-7/(32d)}\right).
    \end{align*}
    Finally, since $d\ge 1$,
    we get
    $1-\frac{7}{32d}\ge 1-\frac{7}{32}=\frac{25}{32}$,
    and therefore
    \[
    \frac{7/64}{1-7/(32d)}
    \le
    \frac{7/64}{25/32}
    =
    \frac{7}{50}.
    \]
    Hence, we have
    \[
    \left(1-\frac{2\alpha c}{\lambda}\right)^{-d/2}
    \le
    \exp\!\left(\frac{7/64}{1-7/(32d)}\right)
    \le
    e^{7/50}
    <
    \frac43,
    \]
    which proves the lemma.
\end{proof}
\begin{lemma}
\label{lemma:local-gb}
    Suppose \(\|x_s\|_2\le 1\)
    and $\lambda \geq 32 \alpha d R^2 /7$.
    For each \(s \ge 1\), let
    $Q_s := \mathcal N(\theta_{s+1}, \alpha H_{s+1}^{-1})$
    be the Gaussian distribution centered at \(\theta_{s+1}\) with covariance matrix \(\alpha H_{s+1}^{-1}\).
    Also define the Bregman divergence of \(\ell_s\) at \(\theta_{s+1}\) by
    $D_{\ell_s}(\theta,\theta_{s+1})
    :=
    \ell_s(\theta)-\ell_s(\theta_{s+1})
    -\langle \nabla \ell_s(\theta_{s+1}),\,\theta-\theta_{s+1}\rangle.$
    Then, we have
    \[
    \EE_{\theta\sim Q_s}\!\left[D_{\ell_s}(\theta,\theta_{s+1})\right]
    \le
    2\alpha\Bigl(
    \log\det(H_{s+1})-\log\det(A_s)
    \Bigr).
    \]
\end{lemma}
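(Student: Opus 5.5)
We must bound $\EE_{Q_s}[D_{\ell_s}(\theta,\theta_{s+1})]$, where $Q_s=\mathcal N(\theta_{s+1},\alpha H_{s+1}^{-1})$. The plan is to reduce this to a one-dimensional Gaussian expectation along $x_s$ and then convert it into a log-determinant increment. Write $\theta=\theta_{s+1}+Z$ with $Z\sim\mathcal N(0,\alpha H_{s+1}^{-1})$, and set $v:=x_s^\top\theta_{s+1}$ and $\zeta:=x_s^\top Z$. By the integral form of the Bregman divergence of $f_s(z)=(m(z)-r_sz)/g(\tau)$,
\[
D_{\ell_s}(\theta,\theta_{s+1})=\zeta^2\int_0^1(1-u)\frac{\mu'(v+u\zeta)}{g(\tau)}\,\dd u .
\]
The upper bound \eqref{eq:self-concor-control-upper} of Lemma~\ref{lemma:self-concor-control} bounds this by
\[
\frac{\mu'(v)}{g(\tau)}\,\zeta^2\exp\!\left(\frac{R^2\zeta^2}{4}\right).
\]
It does not require $\theta\in\Theta$, which matters because the Gaussian has unbounded support.

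Next we separate the quadratic factor from the exponential one. By Cauchy--Schwarz,
\[
\EE[\zeta^2e^{R^2\zeta^2/4}]\le\sqrt{\EE[\zeta^4]}\sqrt{\EE[e^{R^2\zeta^2/2}]}.
\]
Since $|\zeta|\le\|Z\|_2$, Lemma~\ref{lemma:gauss-exp} applies with $c=R^2/2$. Its hypothesis $\lambda\ge 64\alpha d c/7=32\alpha dR^2/7$ is exactly our assumption, and $H_{s+1}\succeq\lambda I_d$ holds because $A_s\succeq\lambda I_d$. This gives $\EE[e^{R^2\zeta^2/2}]\le 4/3$. The variable $\zeta$ is Gaussian with variance $\sigma^2=\alpha\|x_s\|_{H_{s+1}^{-1}}^2$, so $\EE[\zeta^4]=3\sigma^4$. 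Together, $\EE[D_{\ell_s}]\le \sqrt{4}\,\alpha\,\frac{\mu'(v)}{g(\tau)}\|x_s\|_{H_{s+1}^{-1}}^2 = 2\alpha\,\frac{\mu'(v)}{g(\tau)}\|x_s\|_{H_{s+1}^{-1}}^2$, since $\sqrt{3}\cdot\sqrt{4/3}=2$. This is how the constant $2\alpha$ arises.

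Finally, the curvature update gives $H_{s+1}=A_s+ww^\top$ with $w:=\sqrt{\mu'(v)/g(\tau)}\,x_s$, so the bound is $2\alpha\|w\|_{H_{s+1}^{-1}}^2$. By the matrix determinant lemma,
\[
\|w\|_{H_{s+1}^{-1}}^2=\frac{\|w\|_{A_s^{-1}}^2}{1+\|w\|_{A_s^{-1}}^2}=1-\frac{\det A_s}{\det H_{s+1}}\le\log\frac{\det H_{s+1}}{\det A_s},
\]
where the last step uses $1-1/y\le\log y$ for $y\ge1$. This yields the claim. The main obstacle is the exponential self-concordance factor under the unbounded Gaussian. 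It is handled by requiring $\lambda$ large enough that the exponential moment from Lemma~\ref{lemma:gauss-exp} stays bounded by $4/3$, and by tracking constants so that the Cauchy--Schwarz split gives exactly $2\alpha$.
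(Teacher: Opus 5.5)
Your proposal is correct and follows essentially the same route as the paper. Both arguments use the self-concordance upper bound, then a Cauchy--Schwarz split into the Gaussian exponential moment (Lemma~\ref{lemma:gauss-exp} with $c=R^2/2$, giving $4/3$) and a fourth moment (giving $3$), which produces the constant $2\alpha$, and then the rank-one log-determinant identity. The differences are cosmetic: you get the fourth moment directly from the one-dimensional Gaussian $\zeta=x_s^\top Z$ instead of the paper's eigen-decomposition of the rank-one $M_s$, and you use the matrix determinant lemma with $1-1/y\le\log y$ where the paper uses Sherman--Morrison with $\log(1+y)\ge y/(1+y)$.
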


\begin{proof}[Proof of Lemma~\ref{lemma:local-gb}]
    For convenience, define
    $\omega_s:=\frac{\mu'(x_s^\top\theta_{s+1})}{g(\tau)}$, so that 
    \[
    \nabla^2\ell_s(\theta_{s+1})
    =
    \frac{\mu'(x_s^\top\theta_{s+1})}{g(\tau)}x_sx_s^\top
    =
    \omega_s x_sx_s^\top,
    \]
    By the integral form of the Bregman divergence, we can write $D_{\ell_s}(\theta,\theta_{s+1})$ as follows:
    \[
    D_{\ell_s}(\theta,\theta_{s+1})
    =
    \frac{\bigl(x_s^\top(\theta-\theta_{s+1})\bigr)^2}{g(\tau)}
    \int_0^1 (1-\nu)\mu'\bigl(x_s^\top\theta_{s+1}+\nu x_s^\top(\theta-\theta_{s+1})\bigr)\,\dd\nu,
    \]
    Then, by applying Lemma~\ref{lemma:self-concor-control}, we obtain
    \[
    D_{\ell_s}(\theta,\theta_{s+1})
    \le
    \exp\!\left(\frac{R^2\bigl(x_s^\top(\theta-\theta_{s+1})\bigr)^2}{4}\right)
    \frac{\mu'(x_s^\top\theta_{s+1})}{g(\tau)}
    \bigl(x_s^\top(\theta-\theta_{s+1})\bigr)^2.
    \]
    Since
    \[
    \| \theta-\theta_{s+1}\|_{\nabla^2\ell_s(\theta_{s+1})}^2
    =
    (\theta-\theta_{s+1})^\top \nabla^2\ell_s(\theta_{s+1})(\theta-\theta_{s+1})
    =
    \frac{\mu'(x_s^\top \theta_{s+1})}{g(\tau)}
    \bigl(x_s^\top(\theta-\theta_{s+1})\bigr)^2,
    \]
    it follows that
    \begin{align*}
        D_{\ell_s}(\theta,\theta_{s+1})
        &\le
        \exp\!\left(\frac{R^2\bigl(x_s^\top(\theta-\theta_{s+1})\bigr)^2}{4}\right)
        \|\theta-\theta_{s+1}\|_{\nabla^2\ell_s(\theta_{s+1})}^2
        \\
        &\le 
        \exp\!\left(\frac{R^2\|\theta-\theta_{s+1}\|_2^2}{4}\right)
        \|\theta-\theta_{s+1}\|_{\nabla^2\ell_s(\theta_{s+1})}^2
        \numberthis \label{eq:lemma:local-gb_breg}
        .
    \end{align*}
    where the last inequality holds because \(\|x_s\|_2\le 1\).
    Now let
    $W_{s+1}:=\theta-\theta_{s+1}\sim\mathcal N(0,\alpha H_{s+1}^{-1})$.
    By taking expectations on both sides of \eqref{eq:lemma:local-gb_breg} and applying Cauchy--Schwarz, we obtain
    \begin{align}
    \EE_{Q_s}\bigl[D_{\ell_s}(\theta,\theta_{s+1})\bigr]
    \le
    \sqrt{
    \EE\!\left[e^{R^2\|W_{s+1}\|_2^2/2}\right]\;
    \EE\!\left[\|W_{s+1}\|_{\nabla^2\ell_s(\theta_{s+1})}^4\right]
    }.
    \label{eq:lemma:local-gb_upper}
    \end{align}
    Because \(H_{s+1}\succeq \lambda I_d\) and \(\lambda\ge 32\alpha dR^2/7 = 64 \alpha d c /7\), Lemma~\ref{lemma:gauss-exp} with \(c=R^2/2\) yields
    \begin{align}
    \EE\!\left[e^{R^2\|W_{s+1}\|_2^2/2}\right]\le \frac43.
    \label{eq:lemma:local-gb_upper_1}
    \end{align}
    
    It remains to bound the fourth moment.
    Let
    $\Sigma_s:=\alpha H_{s+1}^{-1}$ and
    $M_s:=\Sigma_s^{1/2} \nabla^2\ell_s(\theta_{s+1}) \Sigma_s^{1/2}$.
    Since \(W_{s+1}\sim \mathcal N(0,\Sigma_s)\), we may write
    \[
    W_{s+1}=\Sigma_s^{1/2}\xi,
    \qquad
    \xi\sim\mathcal N(0,I_d).
    \]
    Therefore,
    \[
    W_{s+1}^\top \nabla^2\ell_s(\theta_{s+1}) W_{s+1}
    =
    \xi^\top M_s \xi.
    \]
    
    Now diagonalize \(M_s\) as
    \[
    M_s=U\Lambda U^\top,
    \qquad
    \Lambda=\mathrm{diag}(\rho_1,\dots,\rho_d),
    \]
    where \(U\) is an orthogonal matrix
    and $\rho_1,\dots,\rho_d \geq 0$ are the eigenvalues of $M_s$.
    Since \(U^\top \xi\sim\mathcal N(0,I_d)\), letting
    $\zeta:=U^\top \xi$,
    we obtain
    \[
    W_{s+1}^\top \nabla^2\ell_s(\theta_{s+1}) W_{s+1}
    =
    \xi^\top M_s \xi
    =
    \zeta^\top \Lambda \zeta
    =
    \sum_{i=1}^d \rho_i \zeta_i^2,
    \]
    where \(\zeta_1,\dots,\zeta_d\) are i.i.d. standard Gaussian random variables. 
    Hence
    \[
    \|W_{s+1}\|_{\nabla^2\ell_s(\theta_{s+1})}^4 
    =
    (\xi^\top M_s \xi)^2
    =
    \left(\sum_{i=1}^d \rho_i \zeta_i^2\right)^2
    =
    \sum_{i=1}^d \rho_i^2 \zeta_i^4
    +
    \sum_{i\neq j}\rho_i\rho_j \zeta_i^2\zeta_j^2.
    \]
    Taking expectations and using
    $\EE[\zeta_i^4]=3$ and
    $\EE[\zeta_i^2\zeta_j^2]=1$ for
    $i\neq j$,
    we get
    \begin{align*}
    \EE \! \left[ \|W_{s+1}\|_{\nabla^2\ell_s(\theta_{s+1})}^4  \right]
    &=
    \EE\left[(\xi^\top M_s \xi)^2 \right]
    =
    3\sum_{i=1}^d \rho_i^2
    +
    \sum_{i\neq j}\rho_i\rho_j \\
    &=
    2\sum_{i=1}^d \rho_i^2
    +
    \left(\sum_{i=1}^d \rho_i\right)^2
    \\
    &= 2\Tr(M_s^2)+\Tr(M_s)^2
    .
    \end{align*}
    Recall that \(M_s=\Sigma_s^{1/2}\nabla^2\ell_s(\theta_{s+1})\Sigma_s^{1/2}\).
    Since
        $\nabla^2\ell_s(\theta_{s+1})
        =
        \frac{\mu'(x_s^\top\theta_{s+1})}{g(\tau)}x_sx_s^\top,$
    the matrix \(M_s\) is positive semidefinite and has rank at most one. Hence,
    if \(\lambda_s(M_s)\) denotes its unique possibly nonzero eigenvalue, then
    \[
        \Tr(M_s)=\lambda_s(M_s),
        \qquad
        \Tr(M_s^2)=\lambda_s(M_s)^2,
    \]
    and therefore
    \[
        \Tr(M_s)^2=\Tr(M_s^2).
    \]
    Therefore, we obtain
    \begin{align*}
        \EE \! \left[ \|W_{s+1}\|_{\nabla^2\ell_s(\theta_{s+1})}^4  \right]
        &=
        2\Tr(M_s^2)+\Tr(M_s)^2
        \\
        &=
        3\Tr(M_s)^2
        \tag{\(M_s\) is rank-one}
        \\
        &=
        3\Tr\!\left(
            \Sigma_s
            \nabla^2\ell_s(\theta_{s+1})
        \right)^2
        \tag{cyclicity of trace}
        \\
        &=
        3\alpha^2
        \Tr\!\left(
            H_{s+1}^{-1}
            \nabla^2\ell_s(\theta_{s+1})
        \right)^2
        \numberthis
        \label{eq:lemma:local-gb_upper_2}
        .
    \end{align*}
    Substituting~\eqref{eq:lemma:local-gb_upper_1} and~\eqref{eq:lemma:local-gb_upper_2}) into~\eqref{eq:lemma:local-gb_upper}, we obtain
    \begin{align}
    \EE_{Q_s}\bigl[D_{\ell_s}(\theta,\theta_{s+1})\bigr]
    \le
    2\alpha\,\Tr(H_{s+1}^{-1}\nabla^2\ell_s(\theta_{s+1})).
    \label{eq:lemma:local-gb_intermid}
    \end{align}
    
    Now write
    $H_{s+1}=A_s+\omega_s x_sx_s^\top$ and
    $u_s:=x_s^\top A_s^{-1}x_s$.
    By the Sherman--Morrison formula,
    \begin{align*}
    \Tr(H_{s+1}^{-1}\nabla^2\ell_s(\theta_{s+1}))
    &=
    \omega_s x_s^\top H_{s+1}^{-1}x_s \\
    &=
    \omega_s\!\left(
    x_s^\top A_s^{-1}x_s
    -
    \frac{\omega_s\,x_s^\top A_s^{-1}x_s\,x_s^\top A_s^{-1}x_s}
    {1+\omega_s x_s^\top A_s^{-1}x_s}
    \right) \\
    &=
    \omega_s\!\left(u_s-\frac{\omega_s u_s^2}{1+\omega_s u_s}\right)
    =
    \frac{\omega_s u_s}{1+\omega_s u_s}.
    \numberthis
    \label{eq:lemma:local-gb_trace}
    \end{align*}
    
    On the other hand, we have
    \begin{align*}
    \log\det(H_{s+1})-\log\det(A_s) 
    &=
    \log\det(A_s+\omega_s x_sx_s^\top)-\log\det(A_s) \\
    &=
    \log\Bigl(\det(A_s)\bigl(1+\omega_s x_s^\top A_s^{-1}x_s\bigr)\Bigr)-\log\det(A_s) \\
    &=
    \log(1+\omega_s u_s).
    \numberthis
    \label{eq:lemma:local-gb_logdet}
    \end{align*}
    
    Since \(\log(1+y)\ge y/(1+y)\) for every \(y>-1\), it follows that
    \begin{align*}
    \Tr(H_{s+1}^{-1}\nabla^2\ell_s(\theta_{s+1}))
    &=
    \frac{\omega_s u_s}{1+\omega_s u_s}
    \tag{Eqn.~\eqref{eq:lemma:local-gb_trace}}
    \\
    &\le
    \log(1+\omega_s u_s)
    \tag{\(\log(1+y)\ge y/(1+y)\)}
    \\
    &=
    \log\det(H_{s+1})-\log\det(A_s). 
    \tag{Eqn.~\eqref{eq:lemma:local-gb_logdet}}
    \end{align*}
    Combining this with~\eqref{eq:lemma:local-gb_intermid} completes the proof.
\end{proof}
\begin{lemma} [Variational formula for mix loss, Lemma 12 of \citealt{zhang2025generalized}]
\label{lemma:mixlog}
    Let \(P\) be a probability distribution defined over \(\mathbb{R}^d\) and \(\Delta\) be the set of all measurable distributions. For any loss function \(\ell:\mathbb{R}^d\to\mathbb{R}\), we have
    \begin{equation*}
    -\frac{1}{\alpha}\log\!\left(\mathbb{E}_{\theta\sim P}\!\left[e^{-\alpha \ell(\theta)}\right]\right)
    =
    \mathbb{E}_{\theta\sim P_\star}[\ell(\theta)]
    +
    \frac{1}{\alpha}\mathrm{KL}(P_\star\|P),
    \end{equation*}
    where
    $P_\star=\argmin_{P'\in\Delta}\,
    \mathbb{E}_{\theta\sim P'}[\ell(\theta)]
    +
    \frac{1}{\alpha}\mathrm{KL}(P'\|P)$
    is the optimal solution. Furthermore, for any distribution \(Q\) defined over \(\mathbb{R}^d\), we have
    \begin{equation*}
    -\frac{1}{\alpha}\log\!\left(\mathbb{E}_{\theta\sim P}\!\left[e^{-\alpha \ell(\theta)}\right]\right)
    =
    \mathbb{E}_{\theta\sim Q}[\ell(\theta)]
    +
    \frac{1}{\alpha}\mathrm{KL}(Q\|P)
    -
    \frac{1}{\alpha}\mathrm{KL}(Q\|P_\star).
    \end{equation*}
\end{lemma}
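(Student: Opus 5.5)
This is the Donsker--Varadhan (Gibbs) variational identity, and I will prove it by direct computation with the Gibbs posterior. Set $Z := \mathbb{E}_{\theta\sim P}[e^{-\alpha\ell(\theta)}]$, which I assume is finite and positive so that the left-hand side is well defined. Define the tilted distribution $P_\star$ by
\[
\frac{\mathrm{d}P_\star}{\mathrm{d}P}(\theta) = \frac{e^{-\alpha\ell(\theta)}}{Z}.
\]
The plan is to prove the second (general-$Q$) identity first and then derive the first identity and the minimizer characterization from it.

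\textbf{Step 1 (general $Q$).} Take any $Q$ with $Q\ll P$; otherwise both KL terms are $+\infty$ and the identity holds with the usual conventions. Since $P_\star$ and $P$ are mutually absolutely continuous, $Q\ll P_\star$ as well. The chain rule for densities gives
\[
\log\frac{\mathrm{d}Q}{\mathrm{d}P_\star} = \log\frac{\mathrm{d}Q}{\mathrm{d}P} + \alpha\ell(\theta) + \log Z .
\]
Integrate this against $Q$:
\[
\mathrm{KL}(Q\|P_\star) = \mathrm{KL}(Q\|P) + \alpha\,\mathbb{E}_{Q}[\ell] + \log Z .
\]
Divide by $\alpha$ and rearrange:
\[
-\frac{1}{\alpha}\log Z = \mathbb{E}_Q[\ell] + \frac{1}{\alpha}\mathrm{KL}(Q\|P) - \frac{1}{\alpha}\mathrm{KL}(Q\|P_\star),
\]
which is the second claim.

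\textbf{Step 2 (first identity and optimality).} Since $\mathrm{KL}(Q\|P_\star)\ge 0$, with equality if and only if $Q=P_\star$, Step 1 shows that $\mathbb{E}_Q[\ell]+\frac{1}{\alpha}\mathrm{KL}(Q\|P)\ge -\frac{1}{\alpha}\log Z$ for every $Q$. Equality holds exactly at $Q=P_\star$. So the Gibbs distribution $P_\star$ is the (unique) argmin in the definition, and substituting $Q=P_\star$ gives the first identity.

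\textbf{Main obstacle.} The only delicate point is integrability. One must check that $\mathbb{E}_Q[\ell]$ is well defined, so that $\infty-\infty$ never occurs. I would restrict to $Q$ with $\mathrm{KL}(Q\|P)<\infty$ and with $\mathbb{E}_Q[\ell]$ well defined, and treat all other cases by convention (both sides equal $+\infty$). In the paper's application, $\ell_s$ is a GLM loss of at most linear-plus-convex growth and $P$ is Gaussian, so all these quantities are finite.
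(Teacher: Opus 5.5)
Your proof is correct. Defining $P_\star$ by $\mathrm{d}P_\star/\mathrm{d}P=e^{-\alpha\ell}/Z$, taking logarithms in the chain rule for Radon--Nikodym derivatives and integrating against $Q$ gives the second identity. Nonnegativity of $\mathrm{KL}(Q\|P_\star)$, with equality only at $Q=P_\star$, then gives the minimizer characterization and the first identity. The paper does not prove this lemma; it imports it as Lemma~12 of \citet{zhang2025generalized}. So there is no competing argument to compare with, and your Gibbs (Donsker--Varadhan) computation is the standard, self-contained derivation behind that citation.

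A few small corrections to your edge-case remarks:
\begin{itemize}
\item When $Q\not\ll P$ or $\mathbb{E}_Q[\ell]=+\infty$, the right-hand side of the second identity has the form $\infty-\infty$. The left-hand side $-\frac{1}{\alpha}\log Z$ is finite, so it is not true that ``both sides equal $+\infty$''. What does hold trivially there is the inequality $-\frac{1}{\alpha}\log Z\le\mathbb{E}_Q[\ell]+\frac{1}{\alpha}\mathrm{KL}(Q\|P)$, and that is all your Step~2 needs.
\item Substituting $Q=P_\star$ in Step~2 requires $P_\star$ itself to satisfy your integrability restriction. This can fail for pathological $\ell$, where $Z<\infty$ but $\mathbb{E}_{P_\star}[\ell]=-\infty$ and $\mathrm{KL}(P_\star\|P)=+\infty$.
\item You should state $\alpha>0$, since both the division and the direction of the inequality in Step~2 use it.
\end{itemize}

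None of this affects the paper's only use of the lemma, in Lemma~\ref{lemma:weighted-mix-gap}. There the lemma's tilt parameter equals $1$; the paper's $\alpha=3\eta/2$ is a covariance scale, not this $\alpha$. $Q_s$ and $\Pi_s$ are nondegenerate Gaussians. Because rewards lie in $[0,R]$, the log-partition function satisfies $m(0)+\mu(0)z\le m(z)\le m(0)+\max\{0,Rz\}$, so $\ell_s$ has at most linear growth in $x_s^\top\theta$. Hence $Z$, $\mathbb{E}_{Q_s}[\ell_s]$, both KL terms, and the moments of $P_\star$ are all finite, and your Step~1 applies directly.
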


\begin{lemma}[Kernel compression]
\label{lemma:kernel}
    For \(t\ge 2\), define
    $\mathcal V_t
        :=
        \sum_{p=1}^{t-1} \gamma^{(t-1-p)/2}\sqrt{\frac{1-\gamma^p}{1-\gamma}} \|\theta_{p+1}^\star-\theta_p^\star\|_2$.
    Assume that $\| x_s \|_2 \leq 1$ for all $s \in [T]$.
    Then, we have
    \[
    \sum_{s=1}^{t-1}\gamma^{t-1-s}
    \left( x_s^\top(\theta_t^\star-\theta_s^\star) \right)^2
    \le
    \mathcal V_t^2
    \qquad
    \text{and} \qquad
    \sum_{s=1}^{t-1}\gamma^{2(t-1-s)}
    \left( x_s^\top(\theta_t^\star-\theta_s^\star) \right)^2
    \le
    \mathcal V_t^2.
    \]
\end{lemma}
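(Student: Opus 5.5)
We prove Lemma~\ref{lemma:kernel} by telescoping the parameter differences and then applying a weighted Cauchy--Schwarz (Minkowski) argument. Write $\Delta_p := \theta_{p+1}^\star-\theta_p^\star$, so that for $s\le t-1$,
\[
\theta_t^\star-\theta_s^\star=\sum_{p=s}^{t-1}\Delta_p,
\qquad
\bigl|x_s^\top(\theta_t^\star-\theta_s^\star)\bigr|\le \sum_{p=s}^{t-1}\|\Delta_p\|_2 ,
\]
using $\|x_s\|_2\le 1$. Since $\gamma^{2(t-1-s)}\le\gamma^{t-1-s}$ for $\gamma\in(0,1)$, the second inequality follows from the first. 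It therefore suffices to show
\[
\sum_{s=1}^{t-1}\gamma^{t-1-s}\Bigl(\sum_{p=s}^{t-1}\|\Delta_p\|_2\Bigr)^2\le \mathcal V_t^2 .
\]

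The left-hand side is the squared weighted $\ell_2$-norm of the vector $f=(f_s)_{s=1}^{t-1}$ with $f_s=\sum_{p\ge s}\|\Delta_p\|_2$, under weights $\gamma^{t-1-s}$. Decompose $f=\sum_{p=1}^{t-1}\|\Delta_p\|_2\,\mathbbm 1\{s\le p\}$ and apply Minkowski's inequality in this weighted norm:
\[
\Bigl(\sum_{s}\gamma^{t-1-s}f_s^2\Bigr)^{1/2}\le \sum_{p=1}^{t-1}\|\Delta_p\|_2\Bigl(\sum_{s=1}^{p}\gamma^{t-1-s}\Bigr)^{1/2}.
\]

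It remains to evaluate the inner geometric sum:
\[
\sum_{s=1}^{p}\gamma^{t-1-s}=\gamma^{t-1-p}\sum_{j=0}^{p-1}\gamma^{j}=\gamma^{t-1-p}\,\frac{1-\gamma^{p}}{1-\gamma}.
\]
Its square root is exactly the coefficient $\gamma^{(t-1-p)/2}\sqrt{(1-\gamma^p)/(1-\gamma)}$ appearing in $\mathcal V_t$. Hence the right-hand side of the Minkowski bound equals $\mathcal V_t$, and squaring gives the claim.

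The only step requiring care is choosing Minkowski rather than a crude Cauchy--Schwarz over $p$. Cauchy--Schwarz would introduce an extra factor of order the horizon length and fail to match the defining weights of $\mathcal V_t$. The remaining steps are routine once the index ranges ($p$ from $s$ to $t-1$, hence $s\le p$) are handled correctly.
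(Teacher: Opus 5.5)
Your proposal is correct and follows essentially the same route as the paper: write the drift as a telescoping sum, apply the triangle (Minkowski) inequality in the $\gamma^{t-1-s}$-weighted $\ell_2$ norm, evaluate $\sum_{s\le p}\gamma^{t-1-s}=\gamma^{t-1-p}(1-\gamma^p)/(1-\gamma)$, and deduce the second inequality from $\gamma^{2(t-1-s)}\le\gamma^{t-1-s}$. The only cosmetic difference is that you apply $|x_s^\top\Delta_p|\le\|\Delta_p\|_2$ before Minkowski, whereas the paper keeps $x_s^\top\Delta_p$ inside the vectors $v_{p,t}$ and uses Cauchy--Schwarz only when bounding their norms.
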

\begin{proof}[Proof of Lemma~\ref{lemma:kernel}]
    Fix \(t\ge 2\).
    For simplicity, define
    $\zeta_{s,t}:=x_s^\top(\theta_t^\star-\theta_s^\star)
    =
    x_s^\top\sum_{p=s}^{t-1}
    \left(\theta_{p+1}^\star-\theta_p^\star \right)$.
    Then, for each \(s\in\{1,\dots,t-1\}\), we have
    \[
    \zeta_{s,t}
    =
    x_s^\top \sum_{p=s}^{t-1}(\theta_{p+1}^\star-\theta_p^\star)
    =
    \sum_{p=1}^{t-1}\mathbbm{1}\{s\le p\}\,
    x_s^\top(\theta_{p+1}^\star-\theta_p^\star).
    \]
    Define
    $u_t
    :=
    \bigl(\gamma^{(t-1-s)/2}\zeta_{s,t}\bigr)_{s=1}^{t-1}
    \in \mathbb R^{t-1},$
    and, for each \(p\in\{1,\dots,t-1\}\),
    \[
    v_{p,t}
    :=
    \Bigl(
    \mathbbm{1}\{s\le p\}\,
    \gamma^{(t-1-s)/2}\,
    x_s^\top(\theta_{p+1}^\star-\theta_p^\star)
    \Bigr)_{s=1}^{t-1}
    \in \mathbb R^{t-1}.
    \]
    Then, by the above decomposition of \(\zeta_{s,t}\), we have
    \[
    u_t=\sum_{p=1}^{t-1} v_{p,t}.
    \]
    
    We now bound \(\|v_{p,t}\|_2\). 
    For each \(p\), we get
    \begin{align*}
    \|v_{p,t}\|_2^2
    &=
    \sum_{s=1}^{t-1}
    \mathbbm{1}\{s\le p\}\,
    \gamma^{t-1-s}
    \bigl(x_s^\top(\theta_{p+1}^\star-\theta_p^\star)\bigr)^2 \\
    &=
    \sum_{s=1}^{p}
    \gamma^{t-1-s}
    \bigl(x_s^\top(\theta_{p+1}^\star-\theta_p^\star)\bigr)^2 \\
    &\le
    \sum_{s=1}^{p}
    \gamma^{t-1-s}\,
    \|x_s\|_2^2\,
    \|\theta_{p+1}^\star-\theta_p^\star\|_2^2 \\
    &\le
    \|\theta_{p+1}^\star-\theta_p^\star\|_2^2
    \sum_{s=1}^{p}\gamma^{t-1-s}.
    \tag{$\|x_s \|_2^2 \leq 1$}
    \end{align*}
    Since
    \[
    \sum_{s=1}^{p}\gamma^{t-1-s}
    =
    \gamma^{t-1-p}\sum_{r=0}^{p-1}\gamma^r
    =
    \gamma^{t-1-p}\frac{1-\gamma^p}{1-\gamma},
    \]
    it follows that
    \[
    \|v_{p,t}\|_2
    \le
    \gamma^{(t-1-p)/2}
    \sqrt{\frac{1-\gamma^p}{1-\gamma}}\,
    \|\theta_{p+1}^\star-\theta_p^\star\|_2.
    \]
    
    Next, by the triangle inequality in \(\mathbb R^{t-1}\), we obtain
    \begin{align*}
        \|u_t\|_2
        =
        \left\|\sum_{p=1}^{t-1}v_{p,t}\right\|_2
        \le
        \sum_{p=1}^{t-1}\|v_{p,t}\|_2
        \le
        \sum_{p=1}^{t-1}
        \gamma^{(t-1-p)/2}
        \sqrt{\frac{1-\gamma^p}{1-\gamma}}\,
        \|\theta_{p+1}^\star-\theta_p^\star\|_2
        =
        \mathcal V_t.
    \end{align*}
    Therefore, by the definition of \(u_t\), we get
    \[
    \|u_t\|_2^2 
    = \sum_{s=1}^{t-1}\gamma^{t-1-s}\zeta_{s,t}^2
    \le \mathcal V_t^2,
    \]
    which proves the first inequality.
    
    It remains to prove the second inequality. Since \(\gamma\in(0,1)\), for every \(s\le t-1\),
    \[
    \gamma^{2(t-1-s)}\le \gamma^{t-1-s}.
    \]
    Therefore,
    \[
    \sum_{s=1}^{t-1}\gamma^{2(t-1-s)}\zeta_{s,t}^2
    \le
    \sum_{s=1}^{t-1}\gamma^{t-1-s}\zeta_{s,t}^2
    \le
    \mathcal V_t^2.
    \]
    This completes the proof.
\end{proof}

\section{Proof of Theorem~\ref{thm:regret}}
\label{app_sec:proof_thm:regret}
In this section, we provide the proof of Theorem~\ref{thm:regret}. 
\subsection{Main Proof of Theorem~\ref{thm:regret}}
\label{app_subsec:main_proof_thm:regret}
We begin by stating useful lemmas that will be used throughout the proof. 
First, define the high-probability event
\begin{equation}
    \Ecal
    :=
    \bigcap_{t\ge 2}
    \left\{
    \theta_t^\star \in \Ccal_t(\delta)
    \right\}.
    \label{eq:good_event}
\end{equation}
By Theorem~\ref{thm:DOMD_confidence}, this event satisfies
\[
\Pr(\Ecal)\ge 1-\delta.
\]
On \(\Ecal\), the confidence bound in parameter space directly translates into a corresponding prediction bound for the mean reward, which will be used repeatedly in the regret analysis.
\begin{lemma}[Prediction bound under the discounted-OMD confidence event]
\label{lemma:DOMD_prediction}
    Under the same parameter choices as in Theorem~\ref{thm:DOMD_confidence}, and under Assumptions~\ref{assum:boundedness} and~\ref{assum:bounded_link}, on the event \(\Ecal\), the following holds for every \(t \ge 2\) and every feasible action \(x \in \Xcal_t\):
    \[
    \left|x^\top(\theta_t-\theta_t^\star)\right|
        \le
        \beta_t(\delta)\|x\|_{H_t^{-1}}
        +
        \frac{\iota}{\sqrt{\lambda}}\mathcal V_t.
    \]
\end{lemma}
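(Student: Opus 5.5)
The plan is to combine the membership $\theta_t^\star\in\Ccal_t(\delta)$, guaranteed on $\Ecal$ by Theorem~\ref{thm:DOMD_confidence}, with Cauchy--Schwarz in the $H_t$-geometry, and then to treat the drift part of the radius separately so that it is decoupled from the feature-dependent norm $\|x\|_{H_t^{-1}}$.

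First, I fix $t\ge 2$ and $x\in\Xcal_t$, and write
\[
|x^\top(\theta_t-\theta_t^\star)|\le \|x\|_{H_t^{-1}}\|\theta_t-\theta_t^\star\|_{H_t}.
\]
On $\Ecal$ this is at most $\|x\|_{H_t^{-1}}\bigl(\beta_t(\delta)+\iota\mathcal V_t\bigr)$. The first piece is already the desired term $\beta_t(\delta)\|x\|_{H_t^{-1}}$.

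For the drift piece $\iota\mathcal V_t\|x\|_{H_t^{-1}}$, I use $H_t\succeq\lambda I_d$, which gives $H_t^{-1}\preceq\lambda^{-1}I_d$. Together with $\|x\|_2\le 1$ from Assumption~\ref{assum:boundedness}, this yields
\[
\|x\|_{H_t^{-1}}\le \|x\|_2/\sqrt\lambda\le 1/\sqrt\lambda.
\]
Hence the drift piece is at most $\frac{\iota}{\sqrt\lambda}\mathcal V_t$, and adding the two pieces gives the claimed bound.

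The only point needing care is verifying $H_t\succeq\lambda I_d$. This holds because $H_1=\lambda I_d$, and the recursion $H_{t+1}=\gamma H_t+(1-\gamma)\lambda I_d+\nabla^2\ell_t(\theta_{t+1})$ preserves the bound, since the added term is PSD (as $\mu'>0$) and $\gamma\lambda+(1-\gamma)\lambda=\lambda$. Alternatively, it follows directly from the closed form $H_t=\lambda I_d+\sum_{s}\gamma^{t-1-s}\nabla^2\ell_s(\theta_{s+1})$. I do not anticipate any genuine obstacle beyond this.
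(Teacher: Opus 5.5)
Your proposal is correct and follows the paper's proof: Cauchy--Schwarz in the $H_t$-geometry, the confidence radius $\beta_t(\delta)+\iota\mathcal V_t$ available on $\Ecal$, and the bound $\|x\|_{H_t^{-1}}\le 1/\sqrt\lambda$ to decouple the drift term. Your explicit check that $H_t\succeq\lambda I_d$ via the recursion is a step the paper uses implicitly.
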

The proof is provided in Appendix~\ref{app_subsubsec:proof_lemma:DOMD_prediction}.

Lemma~\ref{lemma:elliptical_discount} is a discounted version of the standard elliptical potential lemma, closely related to Lemma~8 of~\citet{faury2021regret}; the main difference is that their analysis is based on $V_t=\lambda I_d+\sum_{s=1}^{t-1}\gamma^{t-1-s}x_sx_s^\top$, whereas ours uses a Hessian-based matrix $
H_t
=
\lambda I_d+\sum_{s=1}^{t-1}\gamma^{t-1-s}\nabla^2 \ell_s(\theta_{s+1})$.
\begin{lemma}[Discounted elliptical potential]
    \label{lemma:elliptical_discount}
    Assume $\lambda\ge c_\mu/g(\tau)$. 
    Then for every $T\ge 1$, we have
    \begin{align*}
    \sum_{t=1}^T \|x_t\|_{H_t^{-1}}^2
    \le
    \frac{2g(\tau)}{c_\mu} \left[  d\log\!\left(1+\frac{k_\mu}{\lambda d\,g(\tau)(1-\gamma)}\right)  +Td\log\frac{1}{\gamma} \right].
    \end{align*}
\end{lemma}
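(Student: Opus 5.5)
We will follow the standard elliptical-potential route of Lemma~8 in \citet{faury2021regret}, adapted to the Hessian-weighted matrices. The first step is to replace $H_t$ by a matrix that grows rank-one at a time. Recall $A_t=\gamma H_t+(1-\gamma)\lambda I_d$ and $H_{t+1}=A_t+\omega_t x_tx_t^\top$ with $\omega_t:=\mu'(x_t^\top\theta_{t+1})/g(\tau)\ge c_\mu/g(\tau)$ (both $\theta_{t+1}$ and $x_t$ satisfy the boundedness, so $x_t^\top\theta_{t+1}\in[-S,S]$). Since $A_t\succeq \gamma H_t$, we get $A_t^{-1}\preceq \gamma^{-1}H_t^{-1}$, but we need the opposite direction. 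Instead we use $H_t\succeq A_t$ only where convenient and note $H_t\succeq \gamma H_t + (1-\gamma)\lambda I$ trivially fails in general, so we work directly: $\|x_t\|_{H_t^{-1}}^2\le \gamma^{-1}\|x_t\|_{A_t^{-1}}^2$ is wrong-way too. Thus the cleanest route is to bound $\|x_t\|_{A_t^{-1}}^2$ terms and relate: since $A_t\preceq H_t$ when $H_t\succeq\lambda I$ (indeed $H_t-A_t=(1-\gamma)(H_t-\lambda I)\succeq0$), we have $\|x_t\|_{H_t^{-1}}^2\le\|x_t\|_{A_t^{-1}}^2$.

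Next, with $u_t:=\omega_t\|x_t\|_{A_t^{-1}}^2$, the matrix determinant lemma gives $\log\det H_{t+1}-\log\det A_t=\log(1+u_t)$. Because $\omega_t\ge c_\mu/g(\tau)$ and $\|x_t\|_{A_t^{-1}}^2\le 1/\lambda\le g(\tau)/c_\mu$ by the assumption on $\lambda$, we have $u_t\le k_\mu/c_\mu$; more usefully, $\|x_t\|_{A_t^{-1}}^2\le \frac{g(\tau)}{c_\mu}\min\{1,u_t\}$ using $\|x_t\|^2_{A_t^{-1}}\le 1/\lambda\le g(\tau)/c_\mu$. Then $\min\{1,u\}\le 2\log(1+u)$ yields
\[
\|x_t\|_{H_t^{-1}}^2\le \frac{2g(\tau)}{c_\mu}\bigl(\log\det H_{t+1}-\log\det A_t\bigr).
\]

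The remaining step, which I expect to be the main bookkeeping obstacle, is summing these non-telescoping gaps. Write $\log\det A_t\ge \log\det(\gamma H_t)=d\log\gamma+\log\det H_t$, so each gap is at most $\log\det H_{t+1}-\log\det H_t+d\log(1/\gamma)$. Summing over $t=1,\dots,T$ telescopes to $\log\det H_{T+1}-\log\det H_1+Td\log(1/\gamma)$. Finally, as in the proof of Lemma~\ref{lemma:weighted-mix-gap}, AM--GM with $\Tr(H_{T+1})\le \lambda d+k_\mu/(g(\tau)(1-\gamma))$ gives $\log\det H_{T+1}-d\log\lambda\le d\log\bigl(1+\frac{k_\mu}{\lambda d g(\tau)(1-\gamma)}\bigr)$, which combined with the factor $2g(\tau)/c_\mu$ gives the claimed bound.
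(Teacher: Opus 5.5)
Your proof is correct and follows the paper's argument step for step. The shared steps are $A_t\preceq H_t$ via $H_t-A_t=(1-\gamma)(H_t-\lambda I_d)\succeq 0$, the rank-one identity $\log\det H_{t+1}-\log\det A_t=\log(1+\omega_t\|x_t\|_{A_t^{-1}}^2)$, the bound $\log\det A_t\ge d\log\gamma+\log\det H_t$ to telescope, and AM--GM on $\Tr(H_{T+1})$. The only cosmetic difference is that you apply $\min\{1,u\}\le 2\log(1+u)$ to $u_t$ directly, whereas the paper first replaces $u_t$ by the smaller quantity $(c_\mu/g(\tau))\|x_t\|_{H_t^{-1}}^2\in[0,1]$ and then uses $\log(1+u)\ge u/2$. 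You should also delete the false starts in your first paragraph, since only the observation $H_t-A_t\succeq 0$ is needed.
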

The proof is provided in Appendix~\ref{app_subsubsec:proof_lemma:elliptical_discount}.

Lemma~\ref{lemma:kernel_sum} provides a bound on the cumulative drift kernel 
\(\mathcal V_t\), similarly to prior work such as~\citet{wang2023revisiting}.
The geometric discounting suppresses the influence of past parameter changes, so the total contribution is controlled by the path length \(P_T\).
\begin{lemma}[Summing the drift kernel]
\label{lemma:kernel_sum}
Let $\mathcal V_t
:=
\sum_{p=1}^{t-1} \gamma^{(t-1-p)/2}\sqrt{\frac{1-\gamma^p}{1-\gamma}} \|\theta_{p+1}^\star-\theta_p^\star\|_2$.
Then, for every horizon $T\ge 2$, we have
    \begin{equation*}
    \sum_{t=2}^{T}\mathcal V_t
    \le
    \frac{1+\sqrt\gamma}{(1-\gamma)^{3/2}}\,P_T
    \le
    \frac{2}{(1-\gamma)^{3/2}}\,P_T.
    \end{equation*}
\end{lemma}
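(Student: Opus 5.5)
The plan is to exchange the order of summation. Write $\Delta_p := \|\theta_{p+1}^\star-\theta_p^\star\|_2$ and bound the factor $\sqrt{(1-\gamma^p)/(1-\gamma)}$ crudely by $1/\sqrt{1-\gamma}$, since $1-\gamma^p\le 1$. This gives
\[
\sum_{t=2}^{T}\mathcal V_t
\le
\frac{1}{\sqrt{1-\gamma}}\sum_{t=2}^{T}\sum_{p=1}^{t-1}\gamma^{(t-1-p)/2}\Delta_p
=
\frac{1}{\sqrt{1-\gamma}}\sum_{p=1}^{T-1}\Delta_p\sum_{t=p+1}^{T}\gamma^{(t-1-p)/2},
\]
where the swap is a valid rearrangement of a finite double sum over $1\le p<t\le T$.

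The inner sum is a geometric series in $\sqrt\gamma$:
\[
\sum_{t=p+1}^{T}(\sqrt\gamma)^{t-1-p}\le \sum_{k\ge 0}(\sqrt\gamma)^k=\frac{1}{1-\sqrt\gamma}.
\]
Rationalizing gives $\frac{1}{1-\sqrt\gamma}=\frac{1+\sqrt\gamma}{1-\gamma}$. Combining this with the prefactor yields $\sum_{t}\mathcal V_t\le \frac{1+\sqrt\gamma}{(1-\gamma)^{3/2}}\sum_{p=1}^{T-1}\Delta_p$, and the last sum is exactly $P_T$. The second inequality then follows from $\sqrt\gamma\le 1$ for $\gamma\in(0,1)$.

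No step is a real obstacle. The only care needed is getting the index ranges right in the interchange. Each $p\in\{1,\dots,T-1\}$ appears for $t$ from $p+1$ to $T$, so the exponent $t-1-p$ runs from $0$ upward, and the geometric bound holds uniformly in $p$.
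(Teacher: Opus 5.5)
Your proposal is correct and matches the paper's proof essentially step for step. Both arguments bound $\sqrt{(1-\gamma^p)/(1-\gamma)}$ by $1/\sqrt{1-\gamma}$, exchange the order of summation, and bound the inner geometric series in $\sqrt\gamma$ by $1/(1-\sqrt\gamma)=(1+\sqrt\gamma)/(1-\gamma)$, finishing with $1+\sqrt\gamma\le 2$.
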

The proof is provided in Appendix~\ref{app_subsubsec:proof_lemma:kernel_sum}.

Now, we are ready to provide the proof of Theorem~\ref{thm:regret}.
\begin{proof} [Proof of Theorem~\ref{thm:regret}]
    Throughout the proof, we condition on the event
    $\Ecal
    :=
    \bigcap_{t\ge 2}
    \left\{
    \theta_t^\star\in\Ccal_t(\delta)
    \right\},$
    defined in~\eqref{eq:good_event},
    which holds with probability at least $1-\delta$ by Theorem~\ref{thm:DOMD_confidence}.
    
    Let
        $R_t
        :=
        \mu((x_t^\star)^\top\theta_t^\star)
        -
        \mu(x_t^\top\theta_t^\star)$
    be the instantaneous pseudo-regret. 
    Then, by Lemma~\ref{lemma:DOMD_prediction}, with probability at least \(1-\delta\), for every \(x\in\mathcal X_t\), we have
    \begin{align*}
        R_t 
        &=
        \mu((x_t^\star)^\top\theta_t^\star)
        -
        \mu(x_t^\top\theta_t^\star)
        \\
        &\leq k_\mu \left( (x_t^\star - x_t)^\top\theta_t^\star \right)
        \tag{\(k_\mu\)-Lipschitz continuity of \(\mu\)}
        \\
        &\leq
         k_\mu \left( (x_t^\star)^\top\theta_t  +\beta_t(\delta)\|x_t^\star\|_{H_t^{-1}}
            +\frac{\iota}{\sqrt{\lambda}}\mathcal V_t
            - 
                \left(
                x_t^\top\theta_t
                    -\beta_t(\delta)\|x_t\|_{H_t^{-1}}
                    -\frac{\iota}{\sqrt{\lambda}}\mathcal V_t
                \right)
            \right) 
        \tag{Lemma~\ref{lemma:DOMD_prediction}}
        \\
        &\leq 
        2k_\mu\beta_t(\delta)\|x_t\|_{H_t^{-1}}
        +
        2\frac{k_\mu\iota}{\sqrt{\lambda}}\mathcal V_t.
        \tag{action-selection rule in Eqn.~\eqref{eq:discounted_ucb_rule}}
    \end{align*}
    Summing over $t=1,\dots,T$ yields
    \begin{align*}
        \Regret
        &\le
        R_1
        +
        2k_\mu\sum_{t=2}^{T}\beta_t(\delta)\|x_t\|_{H_t^{-1}}
        +
        2\frac{k_\mu\iota}{\sqrt\lambda}\sum_{t=2}^{T}\mathcal V_t
        \\
        &\le R_1
        +
        2k_\mu \beta_T(\delta) \sum_{t=2}^{T}\|x_t\|_{H_t^{-1}}
        +
        2\frac{k_\mu\iota}{\sqrt\lambda}\sum_{t=2}^{T}\mathcal V_t
        \tag{$\beta_t(\delta)$ is nondecreasing}
        \\
        &\le 
        R_1
        +
        2k_\mu \beta_T(\delta)\sqrt{T\sum_{t=2}^{T}\|x_t\|_{H_t^{-1}}^2} 
        +
        2\frac{k_\mu\iota}{\sqrt\lambda}\sum_{t=2}^{T}\mathcal V_t
        \tag{Cauchy-Schwarz inequality}
        \\
        &\le
        R_1
        +
        2k_\mu \beta_T(\delta)\sqrt{
            T\frac{2g(\tau)}{c_\mu}
            \left(
            d\log\!\left(1+\frac{k_\mu}{\lambda d\,g(\tau)(1-\gamma)}\right)
            +
            Td\log\frac{1}{\gamma} \right)
            }
        +
        2\frac{k_\mu\iota}{\sqrt\lambda}\sum_{t=2}^{T}\mathcal V_t
        \tag{Lemma~\ref{lemma:elliptical_discount}}
        \\
        &\le
        R_1
        +
        2k_\mu\beta_T(\delta)\sqrt{
        T\frac{2g(\tau)}{c_\mu}
        \left[
        d\log\!\left(1+\frac{k_\mu}{\lambda d\,g(\tau)(1-\gamma)}\right)
        +
        Td\log\frac{1}{\gamma}
        \right]}
        \\
        &\quad+
        \frac{4k_\mu\iota}{\sqrt\lambda}\frac{P_T}{(1-\gamma)^{3/2}}
        \tag{Lemma~\ref{lemma:kernel_sum}}
        \\
        &\le
        2k_\mu S
        +
        2k_\mu\beta_T(\delta)\sqrt{
        T\frac{2g(\tau)}{c_\mu}
        \left[
        d\log\!\left(1+\frac{k_\mu}{\lambda d\,g(\tau)(1-\gamma)}\right)
        +
        Td\log\frac{1}{\gamma}
        \right]}
        \\
        &\quad+
        \frac{4k_\mu\iota}{\sqrt\lambda}\frac{P_T}{(1-\gamma)^{3/2}}
        \numberthis \label{eq:regret_upper}
        ,
    \end{align*}
    where the last inequality holds because $R_1
    \le
    k_\mu(\|x_1^\star\|_2+\|x_1\|_2)\|\theta_1^\star\|_2
    \le
    2k_\mu S$.
    
    By Theorem~\ref{thm:DOMD_confidence}, under
    \[
    \lambda \ge \max\left\{
    \frac{6\eta Rk_\mu S}{g(\tau)},
    \frac{32\alpha dR^2}{7},
    \frac{c_\mu}{g(\tau)}
    \right\},
    \quad
    \eta = 1+RS,
    \quad
    \alpha = \frac{3\eta}{2},
    \quad
    \iota = \sqrt{\frac{2\eta k_\mu}{g(\tau)}},
    \]
    we have
    \begin{align}
    \beta_T(\delta)=\BigOTilde(\sqrt d)
    \qquad
    \text{and}
    \qquad
    \frac{k_\mu\iota}{\sqrt\lambda}
    =
    \BigOTilde\!\left(\frac{k_\mu^{3/2}}{\sqrt{c_\mu}}\right).
    \label{eq:regret_beta}
    \end{align}
    Moreover, 
    since \(\gamma\in[1/T,1)\), we have
    \[
    \log\frac1\gamma
    =
    \begin{cases}
    \BigO(1-\gamma), & \text{if }\gamma\in[1/2,1),\\[3pt]
    \BigO(\log T)=\BigOTilde(1), & \text{if }\gamma\in[1/T,1/2),
    \end{cases}
    \]
    and hence, in both cases,
    \[
    T\sqrt{\log(1/\gamma)}
    =
    \BigOTilde\bigl(T\sqrt{1-\gamma}\bigr).
    \]
    Therefore, we have
    \begin{align}
    \sqrt{
    \frac{2g(\tau)T}{c_\mu}
    \left[
    d\log\!\left(1+\frac{k_\mu}{\lambda d\,g(\tau)(1-\gamma)}\right)
    +
    Td\log\frac{1}{\gamma}
    \right]}
    =
    \BigOTilde\!\left(
    \sqrt{\frac{Td}{c_\mu}}
    +
    T\sqrt{\frac{d(1-\gamma)}{c_\mu}}
    \right).
    \label{eq:regret_sqrt}
    \end{align}
    Plugging~\eqref{eq:regret_beta} and~\eqref{eq:regret_sqrt} into~\eqref{eq:regret_upper}, we obtain
    \[
    \Regret
    =
    \BigOTilde\!\left(
    \frac{k_\mu}{\sqrt{c_\mu}}\,d\sqrt T
    +
    \frac{k_\mu}{\sqrt{c_\mu}}\,dT\sqrt{1-\gamma}
    +
    \frac{k_\mu^{3/2}}{\sqrt{c_\mu}}\frac{P_T}{(1-\gamma)^{3/2}}
    \right).
    \]
    This proves the first bound.

    Now we prove the second bound.
    For the tuned choice, let
    \[
    \gamma
    =
    1-
    \min\left\{
    1-\frac1T,\;
    \max\left\{
    \frac1T,\;
    \sqrt{\frac{\sqrt{k_\mu}P_T}{dT}}
    \right\}
    \right\}.
    \]
    Then \(\gamma\in[1/T,1)\), and therefore the first bound applies.
    
    \textbf{Case 1. }\, If
    $\frac{d}{\sqrt{k_\mu}\,T}
    \le
    P_T
    \le
    \frac{dT}{\sqrt{k_\mu}}
    \left(1-\frac1T\right)^2,$
    then
    $1-\gamma
    =
    \sqrt{\frac{\sqrt{k_\mu}P_T}{dT}}.$
    This implies that
    \[
    \frac{k_\mu}{\sqrt{c_\mu}}\,dT\sqrt{1-\gamma}
    =
    \BigOTilde\!\left(
    \frac{k_\mu^{9/8}}{\sqrt{c_\mu}}\,d^{3/4}P_T^{1/4}T^{3/4}
    \right),
    \]
    and
    \[
    \frac{k_\mu^{3/2}}{\sqrt{c_\mu}}\frac{P_T}{(1-\gamma)^{3/2}}
    =
    \BigOTilde\!\left(
    \frac{k_\mu^{9/8}}{\sqrt{c_\mu}}\,d^{3/4}P_T^{1/4}T^{3/4}
    \right).
    \]
    Moreover, under
    $P_T\ge \frac{d}{\sqrt{k_\mu}\,T}$,
    the first term
    $\frac{k_\mu}{\sqrt{c_\mu}}\,d\sqrt T$
    is dominated by the same quantity. 
    Therefore, we get
    \[
    \Regret
    =
    \BigOTilde\!\left(
    \frac{k_\mu^{9/8}}{\sqrt{c_\mu}}\,d^{3/4}P_T^{1/4}T^{3/4}
    \right).
    \]
    
    \textbf{Case 2. }\,
    If
    $0\le P_T<\frac{d}{\sqrt{k_\mu}\,T},$
    then
    $1-\gamma=\frac1T.$
    In this case, we have
    \[
    \frac{k_\mu}{\sqrt{c_\mu}}\,dT\sqrt{1-\gamma}
    =
    \BigOTilde\!\left(
    \frac{k_\mu}{\sqrt{c_\mu}}\,d\sqrt T
    \right),
    \]
    and
    \[
    \frac{k_\mu^{3/2}}{\sqrt{c_\mu}}\frac{P_T}{(1-\gamma)^{3/2}}
    =
    \BigOTilde\!\left(
    \frac{k_\mu^{3/2}}{\sqrt{c_\mu}}\,P_T T^{3/2}
    \right)
    <
    \BigOTilde\!\left(
    \frac{k_\mu}{\sqrt{c_\mu}}\,d\sqrt T
    \right).
    \]
    Hence, we obtain
    \[
    \Regret
    \le
    \BigOTilde\!\left(
    \frac{k_\mu}{\sqrt{c_\mu}}\,d\sqrt T
    \right).
    \]
    
    \textbf{Case 3. }\,
    If
    $P_T>
    \frac{dT}{\sqrt{k_\mu}}
    \left(1-\frac1T\right)^2,$
    then
    $1-\gamma=1-\frac1T,$
    that is, 
    $\gamma=\frac1T.$
    In this case, we have
    \[
    \frac{k_\mu}{\sqrt{c_\mu}}\,dT\sqrt{1-\gamma}
    =
    \BigOTilde\!\left(
    \frac{k_\mu}{\sqrt{c_\mu}}\,dT
    \right),
    \]
    and
    \[
    \frac{k_\mu^{3/2}}{\sqrt{c_\mu}}\frac{P_T}{(1-\gamma)^{3/2}}
    =
    \BigOTilde\!\left(
    \frac{k_\mu^{3/2}}{\sqrt{c_\mu}}\,P_T
    \right).
    \]
    Moreover, the first term
    $\frac{k_\mu}{\sqrt{c_\mu}}\,d\sqrt T$
    is dominated by
    $\frac{k_\mu}{\sqrt{c_\mu}}\,dT.$
    Since \(T\ge 2\), we have
    $\left(1-\frac1T\right)^2\ge \frac14,$
    and thus
    \[
    P_T>
    \frac{dT}{\sqrt{k_\mu}}
    \left(1-\frac1T\right)^2
    \ge
    \frac{dT}{4\sqrt{k_\mu}}.
    \]
    Therefore, the first term can be bounded as
    \[
    \frac{k_\mu}{\sqrt{c_\mu}}\,d\sqrt T
    \le
    \frac{k_\mu}{\sqrt{c_\mu}}\,dT
    \le
    4\frac{k_\mu^{3/2}}{\sqrt{c_\mu}}\,P_T.
    \]
    Hence, all three terms are dominated by
    $\BigOTilde\!\left(
    \frac{k_\mu^{3/2}}{\sqrt{c_\mu}}\,P_T
    \right),$
    and we obtain
    \[
    \Regret
    \le
    \BigOTilde\!\left(
    \frac{k_\mu^{3/2}}{\sqrt{c_\mu}}\,P_T
    \right).
    \]
    This completes the proof of Theorem~\ref{thm:regret}.
\end{proof}

\begin{remark}[On the clipped choice of \(\gamma\)]
\label{remark:clip_gamma}
    The clipped definition of \(\gamma\) is necessary because our generic regret bound is proved only for admissible \(\gamma \in [1/T,1)\), whereas the unconstrained minimizer of the simplified upper bound need not belong to this interval.
    Earlier works often stated only the interior regimes and implicitly assumed that the resulting choice of \(\gamma\) was feasible.
    For example, \citet{faury2021regret} effectively restrict to a sublinear-\(P_T\) regime in the proof, which allows them to exclude the large-\(P_T\) case and thus ensures that the tuned choice \(\gamma = 1 - \left(\frac{P_T}{dT}\right)^{2/3}\) remains admissible.
    Similarly, \citet{wang2023revisiting} first derives generic bounds for admissible \(\gamma \in (1/T,1)\), but then states tuned two-case corollaries without an explicit feasibility check.
    A similar issue also appears in \citet{russac2020algorithms,russac2021self} for the piecewise-stationary setting, where the tuning implicitly relies on a sublinear-\(\Gamma_T\) regime so that the proposed choice \(\gamma \approx 1 - \left(\frac{\Gamma_T}{dT}\right)^{2/3}\) remains admissible.
    By contrast, our three-case bound is fully rigorous: the clipping step enforces admissibility explicitly, so the resulting bound applies uniformly over all parameter regimes. If the same feasibility check were made explicit in the earlier analyses, their tuned bounds would likewise take an analogous multi-case form, rather than covering only the interior one- or two-case regimes.
\end{remark}

\subsection{Proofs of Lemmas for Theorem~\ref{thm:regret}}
\label{app_subsec:proof_lemmas_thm:regret}
\subsubsection{Proof of Lemma~\ref{lemma:DOMD_prediction}}
\label{app_subsubsec:proof_lemma:DOMD_prediction}
\begin{proof}[Proof of Lemma~\ref{lemma:DOMD_prediction}]
    Fix any \(t\ge 2\) and any feasible action \(x\in\Xcal_t\). On the event \(\Ecal\), we have
    $\theta_t^\star\in \Ccal_t(\delta),$
    and hence, by the definition of \(\Ccal_t(\delta)\),
    we have
    \[
    \|\theta_t^\star-\theta_t\|_{H_t}
    \le
    \beta_t(\delta)+\iota\mathcal V_t.
    \]
    Therefore, by Cauchy--Schwarz inequality,
    \begin{align*}
    |x^\top(\theta_t^\star-\theta_t)|
    &\le
    \|x\|_{H_t^{-1}}\|\theta_t^\star-\theta_t\|_{H_t} \\
    &\le
    \beta_t(\delta)\|x\|_{H_t^{-1}}
    +
    \iota\mathcal V_t\,\|x\|_{H_t^{-1}}
    \\
    &\le 
    \beta_t(\delta)\|x\|_{H_t^{-1}}
    +
    \frac{\iota}{\sqrt{\lambda}}\mathcal V_t,
    \end{align*}
    where the last inequality holds because
    \begin{align*}
    \|x\|_{H_t^{-1}}
    \le
    \frac{\|x\|_2}{\sqrt{\lambda}}
    \le
    \frac{1}{\sqrt{\lambda}}.
    \tag{Assumption~\ref{assum:boundedness}}
    \end{align*}
    This completes the proof.
\end{proof}
\subsubsection{Proof of Lemma~\ref{lemma:elliptical_discount}}
\label{app_subsubsec:proof_lemma:elliptical_discount}
\begin{proof} [Proof of Lemma~\ref{lemma:elliptical_discount}]
    Since $A_t
    = \gamma H_t + (1-\gamma)\lambda I_d
    \preceq \gamma H_t + (1-\gamma)H_t
    = H_t$, inverse monotonicity on positive definite matrices gives
    $A_t^{-1}\succeq H_t^{-1}$, which 
    implies
    $\|x_t\|_{A_t^{-1}}^2 \ge \|x_t\|_{H_t^{-1}}^2$.
    
    Now, we write
    \[
    \nabla^2\ell_t(\theta_{t+1})
    =
    \frac{\mu'(x_t^\top\theta_{t+1})}{g(\tau)}x_tx_t^\top
    =:h_t x_tx_t^\top,
    \]
    where Assumption~\ref{assum:bounded_link} ensures that 
    $\frac{c_\mu}{g(\tau)}\le h_t\le \frac{k_\mu}{g(\tau)}$.

    Then, we have
    \begin{align*}
    \log\detn(H_{t+1})-\log\detn(A_t) 
    &= \log\detn(A_t + h_t x_t x_t^\top)-\log\detn(A_t) \\
    &=
    \log\detn\bigl(I_d+h_tA_t^{-1/2}x_tx_t^\top A_t^{-1/2}\bigr) \\
    &=
    \log\bigl(1+h_t\|x_t\|_{A_t^{-1}}^2\bigr)
    \\
    &\ge \log\!\left(1+\frac{c_\mu}{g(\tau)}\|x_t\|_{H_t^{-1}}^2\right).
    \tag{$h_t\ge c_\mu/g(\tau)$ and $\|x_t\|_{A_t^{-1}}^2\ge \|x_t\|_{H_t^{-1}}^2$}
    \end{align*}
    Since $\|x_t\|_{H_t^{-1}}^2\le 1/\lambda$ and $\lambda\ge c_\mu/g(\tau)$, we have $0\le (c_\mu/g(\tau))\|x_t\|_{H_t^{-1}}^2\le 1$.
    Using $\log(1+u)\ge u/2$ for $u\in[0,1]$, we get
    \[
    \log\detn(H_{t+1})-\log\detn(A_t) \ge \frac{c_\mu}{2g(\tau)}\|x_t\|_{H_t^{-1}}^2,
    \]
    which implies
    \begin{align}
    \sum_{t=1}^T \|x_t\|_{H_t^{-1}}^2\le \frac{2g(\tau)}{c_\mu} 
    \sum_{t=1}^T\big( 
         \log\detn(H_{t+1})-\log\detn(A_t)
    \big).
    \label{eq:sum_elliptical}
    \end{align}
    Because $A_t=\gamma H_t+(1-\gamma)\lambda I_d\succeq \gamma H_t$, monotonicity of $\log\detn$ gives
    \[
    \log\detn(A_t)\ge \log\detn(\gamma H_t)=d\log\gamma+\log\detn(H_t).
    \]
    Hence, we get
    \[
         \log\detn(H_{t+1})-\log\detn(A_t)
    \le
    \log\detn(H_{t+1})-\log\detn(H_t)-d\log\gamma.
    \]
    Summing from $t=1$ to $T$ yields
    \begin{align*}
    \sum_{t=1}^T
    \big( 
         \log\detn(H_{t+1})-\log\detn(A_t)
    \big)
    &\le
    \log\detn(H_{T+1})-\log\detn(H_1)+Td\log\frac1\gamma
    \\
    &= \log\detn(H_{T+1})-d\log\lambda+Td\log\frac1\gamma.
    \tag{$H_1=\lambda I_d$}
    \end{align*}
    Also, we have
    \[
    H_{T+1}
    =
    \lambda I_d+\sum_{s=1}^{T}\gamma^{T-s}\frac{\mu'(x_s^\top\theta_{s+1})}{g(\tau)}x_sx_s^\top
    \preceq
    \lambda I_d+\frac{k_\mu}{g(\tau)}\sum_{s=1}^{T}\gamma^{T-s}x_sx_s^\top.
    \]
    Using $\|x_s\|_2\le 1$, we get
    \[
    \Tr(H_{T+1})
    \le
    \lambda d+\frac{k_\mu}{g(\tau)}\sum_{s=1}^{T}\gamma^{T-s}
    \le
    \lambda d+\frac{k_\mu}{g(\tau)(1-\gamma)}.
    \]
    By AM--GM, we obtain
    \[
    \detn(H_{T+1})
    \le
    \left(\frac{\Tr(H_{T+1})}{d}\right)^d
    \le
    \left(\lambda+\frac{k_\mu}{d\,g(\tau)(1-\gamma)}\right)^d.
    \]
    which implies.
    \begin{align*}
        \sum_{t=1}^T
        \big( 
             \log\detn(H_{t+1})-\log\detn(A_t)
        \big)
        &\leq \log\detn(H_{T+1})-d\log\lambda+Td\log\frac1\gamma
        \\
        &\leq d\log\!\left(1+\frac{k_\mu}{\lambda d\,g(\tau)(1-\gamma)}\right)+Td\log\frac1\gamma.
        \numberthis \label{eq:sum_logdet}
    \end{align*}
    Substituting \eqref{eq:sum_logdet} into \eqref{eq:sum_elliptical} completes the proof.
\end{proof}
\subsubsection{Proof of Lemma~\ref{lemma:kernel_sum}}
\label{app_subsubsec:proof_lemma:kernel_sum}
\begin{proof} [Proof of Lemma~\ref{lemma:kernel_sum}]
    By the definition of $\mathcal V_t$,
    we have
    \begin{align*}
    \sum_{t=2}^{T}\mathcal V_t
    &\le
    \sum_{t=2}^{T}\sum_{p=1}^{t-1}\frac{\gamma^{(t-1-p)/2}}{\sqrt{1-\gamma}}\,\|\theta_{p+1}^\star-\theta_p^\star\|_2 \\
    &=
    \sum_{p=1}^{T-1}\|\theta_{p+1}^\star-\theta_p^\star\|_2\sum_{t=p+1}^{T}\frac{\gamma^{(t-1-p)/2}}{\sqrt{1-\gamma}} \\
    &=
    \frac{1}{\sqrt{1-\gamma}}\sum_{p=1}^{T-1}\|\theta_{p+1}^\star-\theta_p^\star\|_2\sum_{k=0}^{T-1-p}\gamma^{k/2} \\
    &\le
    \frac{1}{\sqrt{1-\gamma}}\sum_{p=1}^{T-1}\|\theta_{p+1}^\star-\theta_p^\star\|_2\cdot \frac{1}{1-\sqrt\gamma} \\
    &=
    \frac{1}{\sqrt{1-\gamma}(1-\sqrt\gamma)}\,P_T
    \\
    &= \frac{1+\sqrt\gamma}{(1-\gamma)^{3/2}}\,P_T
    ,
    \tag{$1-\sqrt\gamma=\frac{1-\gamma}{1+\sqrt\gamma}$}
    \end{align*}
    which proves the first inequality.
    Since $1+\sqrt\gamma\le 2$, the second inequality follows.
\end{proof}

\section{Proof of Theorem~\ref{thm:regret_piecewise}}
\label{app_sec:proof_thm:regret_piecewise}
In this section, we prove Theorem~\ref{thm:regret_piecewise}.
The overall argument closely follows that of Appendix~\ref{app_sec:proof_thm:regret};
the only additional ingredient is a sharper bound on \(\sum_t \mathcal V_t\) over rounds that are sufficiently far from the most recent change point.
\subsection{Main Proof of Theorem~\ref{thm:regret_piecewise}}
\label{app_subsec:main_proof_thm:regret_piecewise}
As in the proof of Theorem~\ref{thm:regret}, we condition on the good event
\[
\Ecal
:=
\bigcap_{t\ge 2}
\left\{
\theta_t^\star\in\Ccal_t(\delta)
\right\},
\]
which, by Theorem~\ref{thm:DOMD_confidence}, holds with probability at least \(\Pr(\Ecal)\ge 1-\delta\).
We further use Lemma~\ref{lemma:DOMD_prediction} and Lemma~\ref{lemma:elliptical_discount} from Appendix~\ref{app_sec:proof_thm:regret}.

For each integer \(D \ge 1\), let \(\mathcal G_D\) denote the set of rounds \(t \in [T]\) such that the environment parameter has not changed during the previous \(D\) rounds. 
More precisely,
\[
\mathcal G_D
:=
\big\{
t\in[T]:
\theta_s^\star=\theta_t^\star
\ \text{for all } s\in\{\max\{t-D, 1\},\ldots,t-1\}
\big\}.
\]
Equivalently, \(t \in \mathcal G_D\) if the parameter value at round \(t\) coincides with its value at every round in the most recent \(D\)-step history prior to \(t\) (or from round \(1\) onward when \(t \le D\)).

We first quantify how many rounds fail to belong to \(\mathcal G_D\).
Under piecewise stationarity, a round is excluded from \(\mathcal G_D\) only if a changepoint occurred within its most recent \(D\)-step history.
Therefore, each changepoint can affect only a length-\(D\) window of subsequent rounds, which yields the following bound on the number of such ``bad'' rounds.
\begin{lemma}[Bad-round count under piecewise stationarity]
\label{lemma:pw_bad_rounds}
    For every integer \(D\ge 1\), we have
    \[
    |[T]\setminus \mathcal G_D|
    \le
    \Gamma_T D.
    \]
\end{lemma}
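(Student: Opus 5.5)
The plan is a direct counting (covering) argument. Let $\mathcal{C} := \{p \in [T-1] : \theta_{p+1}^\star \neq \theta_p^\star\}$ be the set of change indices, so $|\mathcal{C}| = \Gamma_T$ by definition. The goal is to show that every bad round lies within $D$ steps after some change index, i.e.
\[
[T]\setminus\mathcal G_D \subseteq \bigcup_{p\in\mathcal C}\{p+1,\ldots,p+D\}.
\]
The claim then follows by a union bound on cardinalities, since each set on the right has at most $D$ elements, giving $|[T]\setminus\mathcal G_D| \le \Gamma_T D$.

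To prove the inclusion, fix $t\notin\mathcal G_D$. By definition of $\mathcal G_D$, the window $W_t := \{\max\{t-D,1\},\ldots,t-1\}$ is nonempty (so $t\ge 2$) and contains some $s$ with $\theta_s^\star\neq\theta_t^\star$. Write the telescoping relation $\theta_t^\star-\theta_s^\star=\sum_{p=s}^{t-1}(\theta_{p+1}^\star-\theta_p^\star)$. Since the left-hand side is nonzero, at least one summand is nonzero, so there is $p\in\{s,\ldots,t-1\}$ with $p\in\mathcal C$. Then $p\le t-1$ and $p\ge s\ge t-D$, hence $t\in\{p+1,\ldots,p+D\}$, as required. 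Note that round $t=1$ is automatically good (its window is empty, so the condition holds vacuously), which is consistent with the inclusion.

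No step is a real obstacle. The only care needed is the boundary convention $\max\{t-D,1\}$ and the empty-window case $t=1$, both of which the argument above handles. The bound is not tight when windows overlap or extend past $T$, but it does not need to be.
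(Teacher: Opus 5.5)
Your proposal is correct and follows essentially the same covering argument as the paper: setting $\nu_j = p+1$, your windows $\{p+1,\ldots,p+D\}$ are exactly the paper's $\{\nu_j,\ldots,\nu_j+D-1\}$, and the final cardinality count is identical. The one difference is that your telescoping identity $\theta_t^\star-\theta_s^\star=\sum_{p=s}^{t-1}(\theta_{p+1}^\star-\theta_p^\star)$ explicitly justifies why some change index lies in $\{s,\ldots,t-1\}$, which the paper simply asserts from piecewise constancy.
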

The proof is deferred to Appendix~\ref{app_subsubsec:proof_lemma:pw_bad_rounds}.

We next control the cumulative drift term over the complementary set of ``good'' rounds \(\mathcal G_D\).
On these rounds, the parameter has remained unchanged throughout the most recent \(D\) rounds, so any contribution to the drift quantity \(\mathcal V_t\) must come from changes that occurred strictly earlier.
Because the discounted kernel downweights older rounds geometrically, those remote changes are suppressed by a factor of order \(\gamma^{D/2}\), leading to the following estimate.
\begin{lemma}[Piecewise drift-kernel sum]
\label{lemma:pw_kernel_sum}
Under the piecewise-stationary assumption, for every integer \(D\ge 1\),
    \[
    \sum_{t\in\mathcal G_D\cap\{2,\dots,T\}}\mathcal V_t
    \le
    \frac{2\gamma^{D/2}}{(1-\gamma)^{3/2}}\,P_T
    \le
    \frac{4S\Gamma_T\,\gamma^{D/2}}{(1-\gamma)^{3/2}}.
    \]
\end{lemma}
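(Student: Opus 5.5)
The plan is to redo the computation in the proof of Lemma~\ref{lemma:kernel_sum}, but restricted to rounds $t\in\mathcal G_D$. The new ingredient is the observation that such rounds only see parameter changes lying at least $D$ steps in the past.

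First, fix $t\in\mathcal G_D\cap\{2,\dots,T\}$. By the definition of $\mathcal G_D$, $\theta_s^\star=\theta_t^\star$ for all $s\in\{\max\{t-D,1\},\dots,t-1\}$. Hence $\|\theta_{p+1}^\star-\theta_p^\star\|_2=0$ for every $p\in\{\max\{t-D,1\},\dots,t-1\}$, since both endpoints lie in this stationary block or equal $\theta_t^\star$. Only the indices $p\le t-D-1$ survive in $\mathcal V_t$, and for those $t-1-p\ge D$. Using $\sqrt{(1-\gamma^p)/(1-\gamma)}\le (1-\gamma)^{-1/2}$, this gives
\[
\mathcal V_t\le \frac{1}{\sqrt{1-\gamma}}\sum_{p\le t-1-D}\gamma^{(t-1-p)/2}\|\theta_{p+1}^\star-\theta_p^\star\|_2 .
\]
If $t\le D$, the sum is empty and $\mathcal V_t=0$.

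Second, sum over $t\in\mathcal G_D$ and enlarge the range to all $t\in\{2,\dots,T\}$ with $t-1-p\ge D$; all terms are nonnegative, so this only increases the bound. Exchanging the order of summation as in Lemma~\ref{lemma:kernel_sum}, each fixed $p$ contributes
\[
\|\theta_{p+1}^\star-\theta_p^\star\|_2\,(1-\gamma)^{-1/2}\sum_{k\ge D}\gamma^{k/2}=\|\theta_{p+1}^\star-\theta_p^\star\|_2\,\frac{\gamma^{D/2}}{\sqrt{1-\gamma}\,(1-\sqrt\gamma)} .
\]
Then $1/(1-\sqrt\gamma)=(1+\sqrt\gamma)/(1-\gamma)\le 2/(1-\gamma)$, and summing over $p$ yields the first inequality, $\sum_{t\in\mathcal G_D\cap\{2,\dots,T\}}\mathcal V_t\le 2\gamma^{D/2}P_T/(1-\gamma)^{3/2}$.

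Third, for the second inequality, under piecewise stationarity each nonzero increment satisfies $\|\theta_{p+1}^\star-\theta_p^\star\|_2\le 2S$ by Assumption~\ref{assum:boundedness}, and there are exactly $\Gamma_T$ nonzero increments. Hence $P_T\le 2S\Gamma_T$, which gives the bound $4S\Gamma_T\gamma^{D/2}/(1-\gamma)^{3/2}$.

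The only delicate step is the first one: checking the index bookkeeping, namely that membership in $\mathcal G_D$ kills exactly the increments with $p\ge t-D$, including the boundary increment $p=t-1$ (between $\theta_{t-1}^\star$ and $\theta_t^\star$) and the case $t\le D$. Everything after that is a geometric-series calculation identical to Lemma~\ref{lemma:kernel_sum}.
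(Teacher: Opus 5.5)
Your proposal is correct and follows essentially the same route as the paper. Both arguments bound $\sqrt{(1-\gamma^p)/(1-\gamma)}\le(1-\gamma)^{-1/2}$, use membership in $\mathcal G_D$ to kill the increments with $p\ge t-D$, enlarge the sum to all $t\in\{2,\dots,T\}$, swap the order of summation, sum the geometric tail $\sum_{k\ge D}\gamma^{k/2}=\gamma^{D/2}/(1-\sqrt\gamma)$, and finish with $P_T\le 2S\Gamma_T$. Your explicit treatment of the boundary increment $p=t-1$ and of the case $t\le D$ is, if anything, slightly more careful than the paper's.
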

The proof is deferred to Appendix~\ref{app_subsubsec:proof_lemma:pw_kernel_sum}.

Now, we are ready to provide the proof of Theorem~\ref{thm:regret_piecewise}.
\begin{proof}[Proof of Theorem~\ref{thm:regret_piecewise}]
    We work on the event
    $\Ecal
    :=
    \bigcap_{t\ge 2}
    \left\{
    \theta_t^\star\in\Ccal_t(\delta)
    \right\},$
    defined in~\eqref{eq:good_event},
    which satisfies \(\Pr(\Ecal)\ge 1-\delta\) by Theorem~\ref{thm:DOMD_confidence}.
    
    Let
    $R_t
    :=
    \mu((x_t^\star)^\top\theta_t^\star)-\mu(x_t^\top\theta_t^\star)$ and
    $x_t^\star\in\argmax_{x\in\Xcal_t}\mu(x^\top\theta_t^\star),$
    so that
    $\Regret=\sum_{t=1}^T R_t.$
    Fix any integer \(D\ge 1\). We decompose the regret as
    \[
    \Regret
    =
    R_1
    +
    \sum_{t\in\mathcal G_D\cap\{2,\dots,T\}}R_t
    +
    \sum_{t\in([T]\setminus\mathcal G_D)\cap\{2,\dots,T\}}R_t.
    \]
    
    For every \(t\ge 2\), the same argument as in the proof of Theorem~\ref{thm:regret}, using Lemma~\ref{lemma:DOMD_prediction} 
    and our action-selection rule in~\eqref{eq:discounted_ucb_rule} (i.e., UCB optimality of $x_t$), yields
    \[
    R_t
    \le
    2k_\mu\beta_t(\delta)\|x_t\|_{H_t^{-1}}
    +
    2\frac{k_\mu\iota}{\sqrt{\lambda}}\mathcal V_t.
    \]
    Hence, we have
    \begin{align*}
        \sum_{t\in\mathcal G_D\cap\{2,\dots,T\}}R_t
        &\le
        2k_\mu
        \sum_{t\in\mathcal G_D\cap\{2,\dots,T\}}
        \beta_t(\delta)\|x_t\|_{H_t^{-1}}
        +
        2\frac{k_\mu\iota}{\sqrt{\lambda}}
        \sum_{t\in\mathcal G_D\cap\{2,\dots,T\}}\mathcal V_t \\
        &\le
        2k_\mu \beta_T(\delta)\sum_{t=2}^{T}\|x_t\|_{H_t^{-1}}
        +
        2\frac{k_\mu\iota}{\sqrt{\lambda}}
        \sum_{t\in\mathcal G_D\cap\{2,\dots,T\}}\mathcal V_t \\
        &\le
        2k_\mu \beta_T(\delta)
        \sqrt{T\sum_{t=2}^{T}\|x_t\|_{H_t^{-1}}^2}
        +
        2\frac{k_\mu\iota}{\sqrt{\lambda}}
        \sum_{t\in\mathcal G_D\cap\{2,\dots,T\}}\mathcal V_t \\
        &\le
        2k_\mu \beta_T(\delta)\sqrt{\frac{2g(\tau)T}{c_\mu} \left[ d\log\!\left( 1+\frac{k_\mu}{\lambda d\,g(\tau)(1-\gamma)} \right) + Td\log\frac1\gamma \right]}
        \\
        &\quad+
        \frac{8Sk_\mu\iota}{\sqrt{\lambda}}
        \frac{\Gamma_T\,\gamma^{D/2}}{(1-\gamma)^{3/2}}.
        \tag{Lemma~\ref{lemma:elliptical_discount}
            and Lemma~\ref{lemma:pw_kernel_sum}}
    \end{align*}
    On the other hand, by Assumption~\ref{assum:boundedness},
    every conditional mean reward belongs to \([0,R]\), and hence
    $0\le R_t\le R$,
    for all
    $t\in[T]$.
    Therefore, we get
    \begin{align*}
    R_1
    +
    \sum_{t\in([T]\setminus\mathcal G_D)\cap\{2,\dots,T\}}R_t
    &\le
    R\bigl(1+|[T]\setminus\mathcal G_D|\bigr)
    \\
    &\le
    R(\Gamma_T D+1).
    \tag{Lemma~\ref{lemma:pw_bad_rounds}}
    \end{align*}
    Putting the above two bounds together, we obtain that, for every integer \(D\ge 1\),
    \begin{align*}
    \Regret
    &\le
    R(\Gamma_T D+1)
    +
    2k_\mu \beta_T(\delta)\sqrt{\frac{2g(\tau)T}{c_\mu} \left[ d\log\!\left( 1+\frac{k_\mu}{\lambda d\,g(\tau)(1-\gamma)} \right) + Td\log\frac1\gamma \right]}
    \\
    &\quad+
    \frac{8Sk_\mu\iota}{\sqrt{\lambda}}
    \frac{\Gamma_T\,\gamma^{D/2}}{(1-\gamma)^{3/2}}.
    \numberthis \label{eq:piecewise_regret_decomposition}
    \end{align*}

    Now
    we define
    $C_{\mathrm{pw}} := \max \left\{ 
        e,\, \frac{8Sk_\mu\iota}{\sqrt{\lambda}}
    \right\}$
    and
    $D_\gamma
    :=
    \left\lceil
    \frac{2\log\bigl(C_{\mathrm{pw}}  T/(1-\gamma)\bigr)}{\log(1/\gamma)}
    \right\rceil
    .$
    Then, we have
    \[
    \frac{D_\gamma}{2}\log\frac1\gamma
    \ge
    \log\frac{C_{\mathrm{pw}} T}{1-\gamma}.
    \]
    Equivalently,
    \[
    \gamma^{D_\gamma/2}
    =
    \exp\!\left(-\frac{D_\gamma}{2}\log\frac1\gamma\right)
    \le
    \frac{1-\gamma}{C_{\mathrm{pw}} T}
    \le \frac{(1-\gamma)\sqrt{\lambda}}{8Sk_\mu\iota T}
    .
    \]
    Moreover, since \(\gamma\in[1/T,1)\), we have
    \[
    \log\frac1\gamma
    =
    \begin{cases}
        \Theta(1-\gamma), & \text{if }\gamma\in[1/2,1),\\[3pt]
        \Omega(1), & \text{if }\gamma\in[1/T,1/2),
    \end{cases}
    \]
    and therefore
    \[
    D_\gamma
    =
    \left\lceil
    \frac{2\log \bigl (C_{\mathrm{pw}} T/(1-\gamma)\bigr)}{\log(1/\gamma)}
    \right\rceil
    =
    \begin{cases}
    \BigOTilde\!\left(\dfrac{1}{1-\gamma}\right),
    & \text{if }\gamma\in[1/2,1),\\[8pt]
    \BigO(\log T)=\BigOTilde\!\left(\dfrac{1}{1-\gamma}\right),
    & \text{if }\gamma\in[1/T,1/2).
    \end{cases}
    \]
    Hence, in both cases,
    \[
    D_\gamma
    =
    \BigOTilde\!\left(\frac{1}{1-\gamma}\right).
    \]

    Hence, substituting \(D=D_\gamma\) into~\eqref{eq:piecewise_regret_decomposition} yields
    \begin{align*}
        \Regret
        &\le
        R(\Gamma_T D_\gamma+1)
        +
        2k_\mu \beta_T(\delta)\sqrt{\frac{2g(\tau)T}{c_\mu} \left[ d\log\!\left( 1+\frac{k_\mu}{\lambda d\,g(\tau)(1-\gamma)} \right) + Td\log\frac1\gamma \right]}
        \\
        &\quad+
        \frac{8Sk_\mu\iota}{\sqrt{\lambda}}
        \frac{\Gamma_T\,\gamma^{D_\gamma/2}}{(1-\gamma)^{3/2}}.
        \numberthis
        \label{eq:piecewise_regret_decomposition2}
    \end{align*}
    The middle term is handled exactly as in the proof of Theorem~\ref{thm:regret}:
    \begin{align*}
        2k_\mu \beta_T(\delta)&\sqrt{\frac{2g(\tau)T}{c_\mu} \left[ d\log\!\left( 1+\frac{k_\mu}{\lambda d\,g(\tau)(1-\gamma)} \right) + Td\log\frac1\gamma \right]}
        \\
        &\qquad\qquad\qquad\qquad\qquad\qquad\qquad=
        \BigOTilde\!\left(
        \frac{k_\mu}{\sqrt{c_\mu}}\,d\sqrt T
        +
        \frac{k_\mu}{\sqrt{c_\mu}}\,dT\sqrt{1-\gamma}
        \right).
        \numberthis
        \label{eq:piecewise_regret_1}
    \end{align*}
    Also, we have
    \begin{align*}
        R(\Gamma_T D_\gamma+1)
        =
        \BigOTilde\!\left(
        1+\frac{\Gamma_T}{1-\gamma}
        \right),
        \numberthis
        \label{eq:piecewise_regret_2}
    \end{align*}
    and, using \(\gamma^{D_\gamma/2}\le \frac{(1-\gamma)\sqrt{\lambda}}{8Sk_\mu\iota T}\), we get
    \begin{align*}
        \frac{8Sk_\mu\iota}{\sqrt{\lambda}}
        \frac{\Gamma_T\,\gamma^{D_\gamma/2}}{(1-\gamma)^{3/2}}
        \le
        \frac{\Gamma_T}{T\sqrt{1-\gamma}}
        \le
        \BigOTilde\!\left(
        \frac{\Gamma_T}{1-\gamma}
        \right).
        \numberthis
        \label{eq:piecewise_regret_3}
        \end{align*}
    Plugging~\eqref{eq:piecewise_regret_1},~\eqref{eq:piecewise_regret_2}, and~\eqref{eq:piecewise_regret_3} into~\eqref{eq:piecewise_regret_decomposition2}, we obtain
    \[
    \Regret
    \le
    \BigOTilde\!\left(
    \frac{k_\mu}{\sqrt{c_\mu}}\,d\sqrt T
    +
    \frac{k_\mu}{\sqrt{c_\mu}}\,dT\sqrt{1-\gamma}
    +
    \frac{\Gamma_T}{1-\gamma}
    \right).
    \]
    This proves the first bound.

    Now we prove the second bound.
    For the clipped tuned choice, let
    \[
    \gamma
    =
    1-
    \min\left\{
    1-\frac1T,\;
    \max\left\{
    \frac1T,\;
    \left(
    \frac{\Gamma_T\sqrt{c_\mu}}{k_\mu dT}
    \right)^{2/3}
    \right\}
    \right\}.
    \]
    By construction, we have \(\gamma\in[1/T,1)\).

    \textbf{Case 1. }\, If
    $\frac{k_\mu d}{\sqrt{c_\mu T}}
    \le
    \Gamma_T
    \le
    \frac{k_\mu dT}{\sqrt{c_\mu}}
    \left(1-\frac1T\right)^{3/2},$
    then
    $\frac1T
    \le
    \left(
    \frac{\Gamma_T\sqrt{c_\mu}}{k_\mu dT}
    \right)^{2/3}
    \le
    1-\frac1T,$
    and therefore
    $1-\gamma
    =
    \left(
    \frac{\Gamma_T\sqrt{c_\mu}}{k_\mu dT}
    \right)^{2/3}.$
    This implies that
    \[
    \frac{k_\mu}{\sqrt{c_\mu}}\,dT\sqrt{1-\gamma}
    =
    \BigOTilde\!\left(
    \frac{k_\mu^{2/3}}{c_\mu^{1/3}}\,
    d^{2/3}\Gamma_T^{1/3}T^{2/3}
    \right),
    \]
    and
    \[
    \frac{\Gamma_T}{1-\gamma}
    =
    \BigOTilde\!\left(
    \frac{k_\mu^{2/3}}{c_\mu^{1/3}}\,
    d^{2/3}\Gamma_T^{1/3}T^{2/3}
    \right).
    \]
    Moreover, under
    $\Gamma_T\ge \frac{k_\mu d}{\sqrt{c_\mu T}},$
    the first term
    $\frac{k_\mu}{\sqrt{c_\mu}}\,d\sqrt T$
    is dominated by the same quantity.
    Therefore, we get
    \[
    \Regret
    =
    \BigOTilde\!\left(
    \frac{k_\mu^{2/3}}{c_\mu^{1/3}}\,
    d^{2/3}\Gamma_T^{1/3}T^{2/3}
    \right).
    \]

    \textbf{Case 2. }\, If
    $0\le \Gamma_T<\frac{k_\mu d}{\sqrt{c_\mu T}},$
    then
    $\left(
    \frac{\Gamma_T\sqrt{c_\mu}}{k_\mu dT}
    \right)^{2/3}
    < \frac1T,$
    and hence
    $1-\gamma=\frac1T.$
    In this case, we have
    \[
    \frac{k_\mu}{\sqrt{c_\mu}}\,dT\sqrt{1-\gamma}
    =
    \BigOTilde\!\left(
    \frac{k_\mu}{\sqrt{c_\mu}}\,d\sqrt T
    \right),
    \]
    and
    \[
    \frac{\Gamma_T}{1-\gamma}
    =
    \Gamma_T T
    <
    \frac{k_\mu}{\sqrt{c_\mu}}\,d\sqrt T.
    \]
    Hence, we obtain
    \[
    \Regret
    \le
    \BigOTilde\!\left(
    \frac{k_\mu}{\sqrt{c_\mu}}\,d\sqrt T
    \right).
    \]

    \textbf{Case 3. }\, If
    $\Gamma_T>
    \frac{k_\mu dT}{\sqrt{c_\mu}}
    \left(1-\frac1T\right)^{3/2},$
    then
    $\left(
    \frac{\Gamma_T\sqrt{c_\mu}}{k_\mu dT}
    \right)^{2/3}
    >
    1-\frac1T,$
    and thus
    $1-\gamma=1-\frac1T,$
    that is
    $\gamma=\frac1T.$
    In this case, we have
    \[
    \frac{k_\mu}{\sqrt{c_\mu}}\,dT\sqrt{1-\gamma}
    =
    \BigOTilde\!\left(
    \frac{k_\mu}{\sqrt{c_\mu}}\,dT
    \right),
    \]
    and
    \[
    \frac{\Gamma_T}{1-\gamma}
    =
    \BigOTilde\!\left(
    \Gamma_T
    \right).
    \]
    Moreover, the first term
    $\frac{k_\mu}{\sqrt{c_\mu}}\,d\sqrt T$
    is dominated by
    $\frac{k_\mu}{\sqrt{c_\mu}}\,dT.$
    Since \(T\ge 2\), we have
    $\left(1-\frac1T\right)^{3/2}\ge \frac{1}{2\sqrt2}.$
    Therefore,
    \[
    \Gamma_T>
    \frac{k_\mu dT}{\sqrt{c_\mu}}
    \left(1-\frac1T\right)^{3/2}
    \ge
    \frac{1}{2\sqrt2}\frac{k_\mu}{\sqrt{c_\mu}}\,dT,
    \]
    which implies
    \[
    \frac{k_\mu}{\sqrt{c_\mu}}\,dT
    \le
    2\sqrt2\,\Gamma_T.
    \]
    Hence, all three terms are dominated by
    \[
    \BigOTilde\!\left(
    \Gamma_T
    \right),
    \]
    and we obtain
    \[
    \Regret
    \le
    \BigOTilde\!\left(
    \Gamma_T
    \right).
    \]
    This completes the proof of Theorem~\ref{thm:regret_piecewise}.
\end{proof}

\subsection{Proofs of Lemmas for Theorem~\ref{thm:regret_piecewise}}
\label{app_subsec:proof_lemmas_thm:regret_piecewise}
\subsubsection{Proof of Lemma~\ref{lemma:pw_bad_rounds}}
\label{app_subsubsec:proof_lemma:pw_bad_rounds}
\begin{proof} [Proof of Lemma~\ref{lemma:pw_bad_rounds}]
    Let \(\nu_1,\ldots,\nu_{\Gamma_T}\) be the change points, that is, the rounds at which the parameter value changes.
    Fix any round \(t\in[T]\setminus \mathcal G_D\).
    By the definition of \(\mathcal G_D\), there exists
    $s\in \big\{\!\max\{t-D, 1\},\ldots,t-1 \big\}$
    such that
    \[
    \theta_s^\star \neq \theta_t^\star.
    \]
    Since the sequence \(\{\theta_u^\star\}_{u=1}^T\) is piecewise constant, the fact that
    \(\theta_s^\star \neq \theta_t^\star\) implies that the parameter must change at least once between rounds \(s\) and \(t\).
    Hence, there exists a change point \(\nu_j\), for some \(j\in[\Gamma_T]\), such that
    $\nu_j \in \{s+1,\ldots,t\}.$
    Because \(s \ge (t-D)\vee 1 = \max\{t-D,1\}\), we have
    $s+1 \ge \max\{t-D,1\}+1.$
    Therefore,
    $\nu_j \in \{s+1,\ldots,t\}
    \subseteq
    \{\max\{t-D,1\}+1,\ldots,t\}.$
    That is,
    \[
    \max\{t-D,1\}+1 \le \nu_j \le t.
    \]
    From
    $\nu_j \ge \max\{t-D,1\}+1,$
    we  obtain
    $\nu_j \ge t-D+1,$
    because \(\max\{t-D,1\} \ge t-D\).
    Rearranging this inequality gives
    \[
    t \le \nu_j + D - 1.
    \]
    Combining this with \(\nu_j \le t\), we obtain
    \[
    \nu_j \le t \le \nu_j + D - 1.
    \]
    Since also \(t\le T\), it follows that
    \[
    t \in \{\nu_j,\nu_j+1,\ldots,\min\{\nu_j+D-1,T\}\}.
    \]
    
    Because the choice of \(t \in [T]\setminus \mathcal G_D\) was arbitrary, we conclude that every bad round belongs to at least one such interval. 
    Therefore, we get
    \[
    [T]\setminus \mathcal G_D
    \subseteq
    \bigcup_{j=1}^{\Gamma_T}
    \{\nu_j,\nu_j+1,\ldots,\min\{\nu_j+D-1,T\}\}.
    \]
    
    For each \(j\in[\Gamma_T]\), the set
    $\{\nu_j,\nu_j+1,\ldots,\min\{\nu_j+D-1,T\}\}$
    contains at most \(D\) integers.
    Indeed, if \(\nu_j + D - 1 \le T\), then its cardinality is exactly
    $(\nu_j+D-1)-\nu_j+1 = D$,
    whereas if \(\nu_j + D - 1 > T\), then truncation at \(T\) only makes the set smaller.
    Hence,
    \[
    \Big|
    \{\nu_j,\nu_j+1,\ldots,\min\{\nu_j+D-1,T\}\}
    \Big|
    \le D
    \qquad
    \text{for all } j\in[\Gamma_T].
    \]
    
    Taking cardinalities and using the union bound for finite sets, we obtain
    \[
    |[T]\setminus \mathcal G_D|
    \le
    \sum_{j=1}^{\Gamma_T}
    \Big|
    \{\nu_j,\nu_j+1,\ldots,\min\{\nu_j+D-1,T\}\}
    \Big|
    \le
    \sum_{j=1}^{\Gamma_T} D
    =
    \Gamma_T D.
    \]
    This completes the proof.
\end{proof}
\subsubsection{Proof of Lemma~\ref{lemma:pw_kernel_sum}}
\label{app_subsubsec:proof_lemma:pw_kernel_sum}
\begin{proof} [Proof of Lemma~\ref{lemma:pw_kernel_sum}]
    Fix any \(t\in\mathcal G_D\cap\{2,\dots,T\}\). By definition,
    \begin{align*}
        \mathcal V_t
        &=
        \sum_{p=1}^{t-1}
        \gamma^{(t-1-p)/2}\sqrt{\frac{1-\gamma^p}{1-\gamma}}\,
        \|\theta_{p+1}^\star-\theta_p^\star\|_2
        \\
        &\le  \sum_{p=1}^{t-1}
        \frac{\gamma^{(t-1-p)/2}}{\sqrt{1-\gamma}}\,
        \|\theta_{p+1}^\star-\theta_p^\star\|_2
        \tag{\(1-\gamma^p\le 1\)}
        \\
        &\le
        \sum_{p=1}^{t-D-1}
        \frac{\gamma^{(t-1-p)/2}}{\sqrt{1-\gamma}}\,
        \|\theta_{p+1}^\star-\theta_p^\star\|_2,
    \end{align*}
    where the last inequality holds because, for \(t\in\mathcal G_D\),
    the parameter does not change over the most recent \(D\) rounds before \(t\).
    Here, the sum is interpreted as \(0\) if \(t-D-1<1\).
    
    Summing over \(t\in\mathcal G_D\cap\{2,\dots,T\}\) and then enlarging the index set to all \(t=2,\dots,T\), we get
    \begin{align*}
        \sum_{t\in\mathcal G_D\cap\{2,\dots,T\}}\mathcal V_t
        &\le
        \sum_{t=2}^{T}\sum_{p=1}^{t-D-1}
        \frac{\gamma^{(t-1-p)/2}}{\sqrt{1-\gamma}}\,
        \|\theta_{p+1}^\star-\theta_p^\star\|_2 \\
        &=
        \frac{1}{\sqrt{1-\gamma}}
        \sum_{p=1}^{T-1}
        \|\theta_{p+1}^\star-\theta_p^\star\|_2
        \sum_{t=p+D+1}^{T}\gamma^{(t-1-p)/2}
        \\
        &\le
        \frac{1}{\sqrt{1-\gamma}}
        \sum_{p=1}^{T-1}
        \|\theta_{p+1}^\star-\theta_p^\star\|_2
        \sum_{k=D}^{\infty}\gamma^{k/2}
        \\
        &=
        \frac{\gamma^{D/2}}{\sqrt{1-\gamma}(1-\sqrt\gamma)}
        \sum_{p=1}^{T-1}
        \|\theta_{p+1}^\star-\theta_p^\star\|_2
        =
        \frac{\gamma^{D/2}}{\sqrt{1-\gamma}(1-\sqrt\gamma)}\,P_T,
    \end{align*}
    For each fixed \(p\),
    where
    $P_T:=\sum_{p=1}^{T-1}\|\theta_{p+1}^\star-\theta_p^\star\|_2.$
    Using
    \[
    1-\sqrt\gamma=\frac{1-\gamma}{1+\sqrt\gamma}
    \qquad\text{and}\qquad
    1+\sqrt\gamma\le 2,
    \]
    we obtain
    \begin{equation}
        \sum_{t\in\mathcal G_D\cap\{2,\dots,T\}}\mathcal V_t
        \le
        \frac{2\gamma^{D/2}}{(1-\gamma)^{3/2}}\,P_T.
        \label{eq:lemma:pw_kernel_sum_intermid}
    \end{equation}
    It remains to bound \(P_T\) under piecewise stationarity.
    Let \(\nu_1,\dots,\nu_{\Gamma_T}\) be the change points. If
    \(p\notin\{\nu_1-1,\dots,\nu_{\Gamma_T}-1\}\), then
    \(\theta_{p+1}^\star=\theta_p^\star\), so
    \[
    \|\theta_{p+1}^\star-\theta_p^\star\|_2=0.
    \]
    On the other hand, if \(p=\nu_j-1\) for some \(j\in[\Gamma_T]\), then by the triangle inequality,
    \[
    \|\theta_{p+1}^\star-\theta_p^\star\|_2
    \le
    \|\theta_{p+1}^\star\|_2+\|\theta_p^\star\|_2
    \le 2S,
    \]
    since \(\theta_p^\star,\theta_{p+1}^\star\in\Theta\subseteq\{\theta:\|\theta\|_2\le S\}\).
    Because there are exactly \(\Gamma_T\) such indices,
    \[
    P_T
    =
    \sum_{p=1}^{T-1}\|\theta_{p+1}^\star-\theta_p^\star\|_2
    \le
    2S\Gamma_T.
    \]
    Substituting this into~\eqref{eq:lemma:pw_kernel_sum_intermid} yields
    \[
    \sum_{t\in\mathcal G_D\cap\{2,\dots,T\}}\mathcal V_t
    \le
    \frac{4S\Gamma_T\,\gamma^{D/2}}{(1-\gamma)^{3/2}}.
    \]
    This completes the proof.
\end{proof}

\end{document}